\definecolor{dkblue}{rgb}{0,0.08,0.45}
\theoremstyle{plain}
\newtheorem{theorem}{Theorem}[section]
\newtheorem{lemma}[theorem]{Lemma}
\theoremstyle{definition}
\newtheorem{assumption}[theorem]{Condition}
\newtheorem{example}[theorem]{Example}
\theoremstyle{remark}
\DeclareMathOperator*{\argmin}{arg\,min}
\DeclareMathOperator*{\argmax}{arg\,max}
\DeclareMathOperator*{\bias}{\mathrm{Bias}}
\DeclareMathOperator*{\var}{\mathrm{Var}}
\DeclareMathOperator*{\mse}{\text{MSE}}
\DeclareMathOperator*{\pair}{\text{pair}}
\newcommand{\indep}{\perp \!\!\! \perp}
\newcommand{\meanN}{\frac{1}{n} \sum_{i=1}^n}
\newcommand{\sumA}{\sum_{a \in \mathcal{A}}}
\newcommand{\sumM}{\sum_{m \in \mathcal{M}}}
\newcommand{\sumMprime}{\sum_{m' \in \mathcal{M}}}
\newcommand{\sumS}{\sum_{s \in \mathcal{S}}}
\newcommand{\sumL}{\sum_{l = 1}^L}
\newcommand{\sumMBarKth}{\sum_{\phi(\Bar{m}) \in 2^{\calA}}}
\newcommand{\mE}{\mathbb{E}}
\newcommand{\mV}{\mathbb{V}}
\newcommand{\mR}{\mathbb{R}}
\newcommand{\ind}{\mathbb{I}}
\newcommand{\calD}{\mathcal{D}}
\newcommand{\calX}{\mathcal{X}}
\newcommand{\calA}{\mathcal{A}}
\newcommand{\calE}{\mathcal{E}}
\newcommand{\calS}{\mathcal{S}}
\newcommand{\calM}{\mathcal{M}}
\newcommand{\calZ}{\mathcal{Z}}
\newcommand{\calC}{\mathcal{C}}
\newcommand{\fKth}{\hat{f}}
\newcommand{\DeltafKth}{\Delta_{\hat{f}}(x, m, m')}
\newcommand{\DeltaqfKth}{\Delta_{q, \hat{f}}(x, m)}
\newcommand{\DeltaqfprimeKth}{\Delta_{q, \hat{f}}(x, m')}
\newcommand{\piKth}{\pi(\phi(m)|x)}
\newcommand{\pizeroKth}{\pi_0(\phi(m)|x)}
\newcommand{\piBarKth}{\pi(\phi(\Bar{m})|x)}
\newcommand{\piBarzeroKth}{\pi_0(\phi(\Bar{m})\,|\,x)}
\newcommand{\piprimeKth}{\pi(\phi(m')\,|\,x)}
\newcommand{\piprimezeroKth}{\pi_0(\phi(m')|x)}
\newcommand{\piiKth}{\pi(\phi(m_i)|x_i)}
\newcommand{\piizeroKth}{\pi_0(\phi(m_i)|x_i)}
\newcommand{\piizetaphi}{\pi_\zeta(\phi(m_i)|x_i)}
\newcommand{\pizetaphi}{\pi_\zeta(\phi(m)|x)}
\newcommand{\pibarzetaphi}{\pi_\zeta(\phi(\Bar{m})|x)}
\newcommand{\piprimezetaphi}{\pi_\zeta(\phi(m')|x)}
\newcommand{\ips}{\hat{V}_{\mathrm{IPS}} (\pi; \calD)}
\newcommand{\dr}{\hat{V}_{\mathrm{DR}} (\pi; \calD, \hat{q})}
\newcommand{\dm}{\hat{V}_{\mathrm{DM}} (\pi; \calD, \hat{q})}
\newcommand{\PI}{\hat{V}_{\mathrm{PI}} (\pi; \calD)}
\newcommand{\lips}{\hat{V}_{\mathrm{LIPS}} (\pi; \calD)}
\newcommand{\mips}{\hat{V}_{\mathrm{MIPS}} (\pi; \calD)}
\newcommand{\offcem}{\hat{V}_{\mathrm{OffCEM}} (\pi; \calD)}
\newcommand{\lopca}{\hat{V}_{\mathrm{OPCB}}^l (\pi; \calD)}
\newcommand{\Kopca}{\hat{V}_{\mathrm{OPCB}} (\pi; \calD,\phi)}
\newcommand{\Kopcazeta}{\hat{V}_{\mathrm{OPCB}} (\pi_{\zeta}; \calD, \phi)}
\newcommand{\bx}{\boldsymbol{x}}
\newcommand{\bs}{\boldsymbol{s}}
\begin{document}

\title{Effective Off-Policy Evaluation and Learning\\in Contextual Combinatorial Bandits}
\renewcommand{\shorttitle}{Effective Off-Policy Evaluation and Learning in Contextual Combinatorial Bandits}


\author{Tatsuhiro Shimizu} 
\authornote{Both authors contributed equally to the paper.}
\affiliation{\institution{Independent Researcher}
\city{Tokyo}
\country{Japan}}
\email{t.shimizu432@akane.waseda.jp}

\author{Koichi Tanaka} 
\authornotemark[1]
\affiliation{\institution{Keio University}
\city{Tokyo}
\country{Japan}}
\email{kouichi_1207@keio.jp}

\author{Ren Kishimoto}
\affiliation{\institution{Tokyo Institute of Technology}
\city{Tokyo}
\country{Japan}}
\email{kishimoto.r.ab@m.titech.ac.jp}

\author{Haruka Kiyohara}
\affiliation{\institution{Cornell University}
\state{NY}
\country{USA}}
\email{hk844@cornell.edu}

\author{Masahiro Nomura}
\affiliation{\institution{CyberAgent, Inc.}
\city{Tokyo}
\country{Japan}}
\email{nomura_masahiro@cyberagent.co.jp}

\author{Yuta Saito}
\affiliation{\institution{Cornell University}
\state{NY}
\country{USA}}
\email{ys552@cornell.edu}

\renewcommand{\shortauthors}{Shimizu et al.}

\begin{abstract}
    We explore off-policy evaluation and learning (OPE/L) in contextual combinatorial bandits (CCB), where a policy selects a subset in the action space. For example, it might choose a set of furniture pieces (a bed and a drawer) from available items (bed, drawer, chair, etc.) for interior design sales. This setting is widespread in fields such as recommender systems and healthcare, yet OPE/L of CCB remains unexplored in the relevant literature. Typical OPE/L methods such as regression and importance sampling can be applied to the CCB problem, however, they face significant challenges due to high bias or variance, exacerbated by the exponential growth in the number of available subsets. To address these challenges, we introduce a concept of factored action space, which allows us to decompose each subset into binary indicators. This formulation allows us to distinguish between the ``main effect'' derived from the main actions, and the ``residual effect'', originating from the supplemental actions, facilitating more effective OPE. Specifically, our estimator, called OPCB, leverages an importance sampling-based approach to unbiasedly estimate the main effect, while employing regression-based approach to deal with the residual effect with low variance. OPCB achieves substantial variance reduction compared to conventional importance sampling methods and bias reduction relative to regression methods under certain conditions, as illustrated in our theoretical analysis. Experiments demonstrate OPCB's superior performance over typical methods in both OPE and OPL.
\end{abstract}

\begin{CCSXML}
<ccs2012>
   <concept>
       <concept_id>10010147.10010257.10010282.10010292</concept_id>
       <concept_desc>Computing methodologies~Learning from implicit feedback</concept_desc>
       <concept_significance>500</concept_significance>
       </concept>
   <concept>
       <concept_id>10010147.10010257.10010282.10010283</concept_id>
       <concept_desc>Computing methodologies~Batch learning</concept_desc>
       <concept_significance>500</concept_significance>
       </concept>
   <concept>
       <concept_id>10010147.10010257.10010258.10010259.10003343</concept_id>
       <concept_desc>Computing methodologies~Learning to rank</concept_desc>
       <concept_significance>300</concept_significance>
       </concept>
   <concept>
       <concept_id>10010147.10010257.10010258.10010259.10003268</concept_id>
       <concept_desc>Computing methodologies~Ranking</concept_desc>
       <concept_significance>300</concept_significance>
       </concept>
 </ccs2012>
\end{CCSXML}
\ccsdesc[500]{Computing methodologies~Batch learning}
\ccsdesc[300]{Computing methodologies~Ranking}

\keywords{Off-Policy Evaluation and Learning, Combinatorial Bandits.}

\maketitle

\begin{table*}
  \caption{Comparing CCB with ranking and slate regarding their action spaces and reward observations.}
  \vspace{-3mm}
  \centering
  \scalebox{0.95}{
  \begin{tabular}{c|cccc}
    \toprule
     setting & action space & \# of actions & candidate actions & reward \\
    \midrule
    \textbf{CCB (our focus)}   
    & $\left\{ \{ \phi \}, \{ a \}, \{ b \}, \{ c \}, \{ a, b \}, \{ a, c \}, \{ b, c \}, \{ a, b, c \}  \right\}$ 
    & $2^{|\calA|}$
    & $\calA = \{ a, b, c \}$
    & $r$ \\
    Ranking 
    & $\left\{ (a, b, c), (a, c, b), (b, a, c), (b, c, a), (c, a, b), (c, b, a) \right\}$ 
    & $|\calA|!$ 
    & $\calA = \{ a, b, c \}$
    & $(r_1, \cdots, r_L)$ \\
    Slate  
    & $\begin{Bmatrix}
    (a_1, a_2, a_3), (a_1, a_2, b_3), (a_1, b_2, a_3), (a_1, b_2, b_3), \\
    (b_1, a_2, a_3), (b_1, a_2, b_3), (b_1, b_2, a_3), (b_1, b_2, b_3)
    \end{Bmatrix}$ 
    & $\prod_{l \in [L]} |\calA_l|$ 
    & $\calA_l = \{ a_l, b_l \}$
    & $r$ \\
    \bottomrule
  \end{tabular}
  }\label{tab:combinatorial-ope-difference-from-ranking-slate}
\end{table*}

\section{Introduction}

Personalizing decision-making using past interaction logs is a primary interest in many intelligent systems. In particular, off-policy evaluation and learning (OPE/L), which aim to evaluate or learn a new decision-making policy based only on logged data with no exploration, is considered crucial for mitigating potential risks and ethical concerns encountered in online learning and A/B testing. However, while many methods have been extensively explored in the contextual bandit setting with a single action~\cite{dudik2014doubly, swaminathan2015counterfactual, wang2017optimal, farajtabar2018more, su2019cab, su2020doubly, su2020adaptive, metelli2021subgaussian, saito2022off, saito2023off, saito2021counterfactual, kiyohara2024towards, saito2024long}, more complicated settings involving multiple actions remain sparse in the literature~\citep{swaminathan2017off, mcinerney2020counterfactual, kiyohara2022doubly, kiyohara2023off, kiyohara2024slate}. In particular, one of the most underexplored involves OPE/L for contextual combinatorial bandits (CCB) where a policy chooses a subset in the action space to maximize the reward~\cite{qin2014contextual,chen2013combinatorial}. CCB encompasses the following practical examples.

\begin{example}[Total Outfit Coordination]
    \label{ex-personalized-recommendation}
    When the candidate items are earrings, a pendant, bracelet A, and bracelet B, a policy can recommend, e.g., (1) only earrings, (2) earrings and pendant, (3) both bracelets A and B, or (4) all items, depending on the user's profile to maximize the revenue or profit as the reward.
\end{example}

\begin{example}[Precision Medicine]
    \label{ex-precision-medicine}
    Given medical data for each patient, medical institutions prescribe a combination of medications, e.g., (1) a fever reducer and cough suppressant, (2) cough suppressant, sinus medicine, and flu medicine, or (3) no medicine, to facilitate fast recovery from the disease.
\end{example}

OPE/L using logged data in these CCB problems presents significant challenges due to the complexities of managing the action subset space, which is likely to be vast. Two typical OPE approaches, the Direct Method (DM)~\citep{beygelzimer2009offset} and Inverse Propensity Scoring (IPS)~\citep{horvitz1952generalization}, aim to estimate the policy value by imputing counterfactual rewards via regression or applying importance sampling in the action subset space, respectively. However, DM often encounters challenges with inaccurate regression due to model misspecification and the sparsity of the rewards. In contrast, IPS faces a critical variance issue, with importance weights growing exponentially as the number of candidate subsets increases. Doubly Robust (DR)~\citep{dudik2011doubly} represents another approach, utilizing the reward regressor as a control variate to mitigate the variance issue inherent in IPS. Despite this, DR can still exhibit high variance similar to IPS due to its importance weighting term, and thus all conventional baselines struggle with the large action space either by their large bias or variance~\citep{saito2022off, saito2023off, saito2024hyperparameter, saito2024potec}, which is typical in our CCB setup.\\

\textbf{Contributions.} \quad To address the aforementioned challenges in handling the action subset space, we first formulate the problem of CCB via factorizing a subset of actions into the product of binary indicators that indicate if each action is included in the subset. This formulation enables us to focus on some of the influential actions referred to as the \textit{main actions}, such as cold medicine and fever reducer for flu patients, among other supplemental medicines. We then introduce the concept of decomposing the expected reward of a subset of actions into the \textit{main} and \textit{residual} effects, which are the outcomes of the main actions and that of supplemental actions. Leveraging this reward function decomposition, we develop a novel estimator for OPE of CCB called \textit{\textbf{O}ff-\textbf{P}olicy estimator for \textbf{C}ombinatorial \textbf{B}andits (\textbf{OPCB})}, which applies importance sampling only to the main actions to reduce the exponential variance of IPS, while dealing with the residual effect via regression to reduce the bias. OPCB ensures unbiasedness under a condition called \textit{conditional pairwise correctness}, which requires that the regression model accurately estimate the relative reward difference of the subset of actions when the main actions are identical. In addition to the development of OPCB, considering a realistic situation where there is no prior knowledge about the main actions, we discuss a data-driven procedure to identify an appropriate set of main actions to minimize the mean squared error (MSE) of OPCB. Finally, empirical results on synthetic and real-world data reflecting the CCB problem demonstrate that our method evaluates a new CCB policy more precisely and learns a new CCB policy more efficiently than existing methods for a range of experiment configurations.\\

\textbf{Related work.}
Combinatorial bandits to choose a subset in the action space to maximize the reward has been studied much in the online learning literature~\cite{chen2013combinatorial,qin2014contextual}, but there is no existing work discussing OPE/L in this setup thus far.
Ranking OPE~\citep{li2018offline, mcinerney2020counterfactual, kiyohara2022doubly, kiyohara2023off} and slate OPE~\citep{swaminathan2015batch, vlassis2021control, kiyohara2024slate} study similar types of OPE/L problems. However, as shown in Table~\ref{tab:combinatorial-ope-difference-from-ranking-slate}, the problem of CCB differs from both regarding the form of the action space.
First, the ranking setup considers evaluating a ranking policy that aims to optimize the ranking of a given action space $\calA$, and in the logged data, the rewards corresponding to each position in the ranking are observable, which is unavailable in CCB. Thus, estimators developed for the ranking setup~\citep{li2018offline, mcinerney2020counterfactual, kiyohara2022doubly, kiyohara2023off} cannot be applied to solve OPE/L in CCB.
Second, the slate setup considers choosing $L$ different actions, each from the corresponding action space $\calA_l$, while in our setting of CCB, there is only one action space $\calA$. We discuss an interesting connection between our formulation and that of slate OPE in detail in the appendix. 

We also differentiate our contributions from those of \citet{saito2023off}, which develops the OffCEM estimator to deal with OPE in large action spaces in a single action setup. Even though our main idea of reward function decomposition is inspired by OffCEM, its application to the CCB setup is non-trivial and cannot be done without our formulation with action space factorization. Moreover, we discuss how to identify an appropriate reward decomposition in a data-driven fashion and perform associated empirical evaluations, which are missing in \citet{saito2023off}. We also propose an extension of our estimator to an OPL method while \citet{saito2023off} focused solely on the OPE problem. Therefore, our work is the first to formulate the OPE problem in the CCB setup and offer multiple unique contributions in both methodological and empirical perspectives.

\section{Problem Formulation}
This section formulates the problem of OPE in the contextual combinatorial bandits (CCB).
Let $x \in \calX \subseteq \mathbb{R}^{d_x}$ be a context vector such as user demographics, sampled from an unknown distribution $p(x)$. We use $\calA$ to denote some given set of actions, and $s \in \calS$ to denote a subset in $\calA$. For example, when $\calA = \{ a_1, a_2 \}$, the corresponding subset space is $\calS = \{ \{ \emptyset \}, \{ a_1 \}, \{ a_2 \}, \{ a_1, a_2 \}  \}$. Given a context $x$, the decision-maker, such as a recommendation interface, chooses a subset in the action space, $s$, following a stochastic policy $\pi$, where $\pi(s \,|\, x)$ is the probability of choosing a subset $s$ in $\calA$ given context $x$. Let $r \in [r_{\text{min}}, r_{\text{max}}]$ be a reward, drawn from an unknown conditional distribution $p(r\,|\,x,s)$. We typically measure the effectiveness of policy $\pi$ by the following \textit{policy value}:
\begin{align*}
    V(\pi) \coloneq \mE_{p(x) \pi(s|x) p(r|x, s)}\left[ r \right] = \mE_{p(x) \pi(s|x)}\left[ q(x, s) \right], 
\end{align*}
where $q(x, s) := \mE \left[ r \,|\, x, s \right]$ is the expected reward given $x$ and $s$.

The goal of OPE/L is to accurately estimate $V(\pi)$ of target policy $\pi$ or to maximize $V(\pi)$ using only offline logged data. The logged data $\calD = \{(x_i, s_i, r_i)\}_{i = 1}^n \sim \prod_{i=1}^n p(x_i) \pi_0(s_i|x_i) p(r_i|x_i, s_i)$ contains $n$ i.i.d. observations collected under a \textit{logging policy} $\pi_0$, which is often different from $\pi$. 
Particularly in OPE, we aim at developing an estimator $\hat{V}(\pi; \calD)$ that minimizes the MSE as an accuracy measure:
\begin{align*}
    \mse \left( \hat{V}(\pi; \calD) \right) 
    &:= \mE_{\calD} \left[ \left( \hat{V}(\pi; \calD) - V(\pi) \right)^2 \right] \\
    &= \bias \left[ \hat{V}(\pi; \calD) \right]^2 + \var \left[ \hat{V}(\pi; \calD) \right], 
\end{align*}
where $\mE_{\calD}[\cdot]$ takes the expectation over the distribution of $\calD$. 
The bias and variance terms are defined as follows.
\begin{align*}
    \bias \left[ \hat{V}(\pi; \calD) \right] &:= \mE_{\calD} \left[ \hat{V}(\pi; \calD) \right] - V(\pi) \\
    \var \left[ \hat{V}(\pi; \calD) \right] &:= \mE_{\calD} \left[ \left( \hat{V}(\pi; \calD) - \mE_{\calD} \left[ \hat{V}(\pi; \calD) \right] \right)^2 \right].
\end{align*}
In the following, we first focus on developing a novel estimator to solve the OPE problem in CCB and later show that the proposed estimator can readily be extended to solve the OPL problem.

\subsection{Applications of Typical Ideas}
\label{sec:baselines}

Although there is no existing literature that formally studies OPE for CCB, we discuss how to apply the conventional approaches, DM~\cite{beygelzimer2009offset}, IPS~\cite{horvitz1952generalization}, and DR~\cite{dudik2014doubly} to this setup, and their limitations.

DM employs a regression model $\hat{q}(x, s) (\approx q(x, s))$ to estimate the policy value by imputing the missing rewards as follows:
\begin{align*}
    \dm := \meanN \mE_{\pi(s|x_i)} [\hat{q}(x_i, s)] 
    = \meanN \sumS \pi(s|x_i) \hat{q}(x_i, s) .
\end{align*}
where $\hat{q}$ is optimized to minimize the estimation error against the rewards $r_i$ observed in the logged data.
DM has low variance compared to other typical approaches, and is accurate when $\hat{q}$ is accurate for all possible $(x, s) \in \calX \times \calS$. However, DM incurs considerable bias because it becomes extremely difficult to accurately estimate the expected reward of subset $s'$ that is not observed in the logged data, i.e., $s' \neq s_i$. This issue becomes particularly problematic as the number of subsets grows exponentially~\citep{strehl2010learning,saito2021evaluating}.

In contrast, IPS re-weighs the rewards observed in the logged data, $r_i$, using the ratio of the probabilities of observing subset $s_i$ under the logging policy $\pi_0$ and target policy $\pi$:
\begin{align*}
    \ips := \meanN \frac{\pi(s_i\,|\,x_i)}{\pi_0(s_i\,|\,x_i)} r_i = \meanN w(x_i, s_i) r_i, 
\end{align*}
where $w(x, s) = \pi(s\,|\,x) / \pi_0(s\,|\,x)$ is called the vanilla importance weight. IPS is unbiased under the common support condition: $\pi(s\,|\,x) > 0 \implies \pi_0(s\,|\,x) > 0,\; \forall (x, s) \in \mathcal{X} \times \mathcal{S}$. However, in order to satisfy this condition in the entire subset space $\calS$, $\pi_0$ has to allocate extremely small values to some of the subsets, resulting in substantial weight values and variance issue~\citep{saito2022off, saito2023off}.

DR is the third baseline that is considered better than DM and IPS, and is defined as follows.
\begin{align*}
    \dr := \meanN \left\{ w(x_i, s_i) (r_i - \hat{q}(x_i, s_i)) + \mE_{\pi(s|x_i)}[\hat{q}(x_i, s)] \right\}.
\end{align*}
DR is unbiased under the same support condition as IPS. Moreover, by employing the reward model $\hat{q}$, DR often reduces variance compared to IPS under a mild condition regarding the regression accuracy. However, the variance of DR remains extremely high when the action space is large~\citep{saito2022off, saito2023off} as in our problem of CCB.

\section{Our Approach}
In the previous section, we have seen the critical challenges of OPE in the problem of CCB. To tackle them, we first introduce a formulation of CCB based on the factored action space and binary indicators that indicate if each action in $\calA$ is included in the selected subset $s$. 
Specifically, we factorize the action subset space $\calS$ as

\begin{align*}
    \mathcal{S} = \underbrace{\{ \emptyset, a_1 \}}_{\mathcal{M}_1} \times \underbrace{\{ \emptyset, a_2 \}}_{\mathcal{M}_2} \times \cdots \times \underbrace{\{ \emptyset, a_L \}}_{\mathcal{M}_L}
    = \prod_{l=1}^L \mathcal{M}_l := \calM
\end{align*}
where $m_l \in \calM_l = \{ \emptyset, a_l \} $ is a binary indicator; $m_l = \emptyset$ means that the action $a_l$ is not chosen in the subset ($a_l \notin s$), while $m_l = a_l$ means the opposite ($a_l \in s$). The advantage of this factorized formulation is that it becomes possible to focus more on the indicators of specific key actions and to be able to treat them differently from other supplemental actions when constructing an estimator.

\begin{figure}[t]
    \includegraphics[scale=0.1]{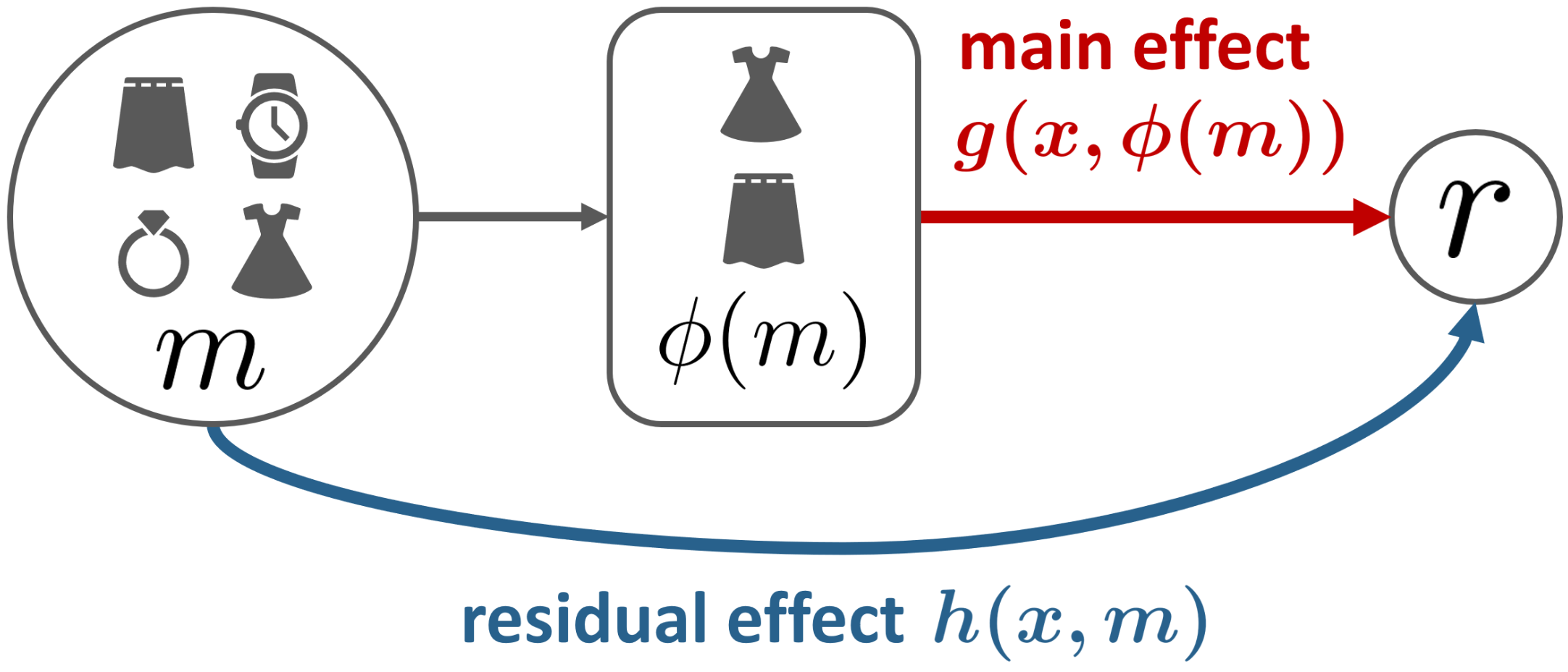}
    \centering
    \Description{}
    \caption{The graphical model of reward function decomposition into the main and residual effects, where $\phi(m)$ is the set of main actions. In this example, we consider a total outfit coordination where we recommend a combination of the fashion items, and we regard the tops and bottoms as the main actions, whereas the residual effect captures the effect of the accessories on the reward.}
    \centering
    \label{fig:decomposition}
\end{figure}

\subsection{The OPCB Estimator}

The primary idea of our OPCB estimator is to extract the main elements from the action set and distinguish them from other supplemental elements. For instance, a patient's recovery from a disease can mostly depend on some important medicines, but is also affected by some adjunctive medicines. To distinguish these effects, we consider the following decomposition of the expected reward.
\begin{align}
    q(x, m) = \underbrace{g(x, \phi(m))}_{\text{main effect}} \;+\; \underbrace{h(x, m)}_{\text{residual effect}} \label{eq:decomposition}, 
\end{align}
where $g(x, \phi(m))$ is the main effect accounted by the set of main actions $\phi(m)$, and $h(x, m)$ is the residual effect that is not captured only by the main actions, as illustrated in Figure \ref{fig:decomposition}. Note that we do not impose any restrictions of the functional form of $g$ and $h$; Eq.~\eqref{eq:decomposition} is not an assumption.
Note also that the function to extract the main actions, $\phi(m)$, is currently assumed somewhat given, but we will discuss how to optimize it from the logged data after analyzing the proposed estimator.

Based on this reward function decomposition, OPCB estimates the main and residual effects via importance sampling and regression, respectively, as follows.
\begin{align}
    \Kopca := \meanN \Bigg\{ w(x_i,\phi(m_i)) \left( r_i - \hat{f}(x_i, m_i) \right) \notag\\
    \hspace{4.5cm} + \mE_{\pi(m|x_i)} \left[ \hat{f}(x_i,m) \right] \Bigg\}, \label{eq:K-th-order-OPCB}
\end{align}
where $\hat{f}(x, m)$ is some regression model whose optimization is discussed based on our theoretical analysis given in the following section. $w(x, \phi(m)) := \pi(\phi(m)\,|\,x) / \pi_0(\phi(m)\,|\,x)$ is the marginalized importance weight regarding the main actions $\phi(m)$, where $\pi(\phi(m)\,|\,x) := \sum_{m' \in \calM} \pi(m'\,|\,x) \cdot \ind \{ \phi(m)  =  \phi(m')  \}$. 
The rationale behind using two different approaches, including importance sampling and regression, to estimate the main and residual effects is to (1) unbiasedly estimate the main effect while achieving substantial variance reduction from IPS and DR, and (2) to deal with the residual effect via regression without introducing much variance. The following section formally discusses the theoretical properties of OPCB and advantages of reward decomposition in CCB.

\subsection{Theoretical Analysis}\label{sec:theoretical-analysis}

We first show OPCB's unbiasedness under the following conditions.

\begin{assumption}[Common Support w.r.t the Main Actions]
    \label{ass.common_support_Kth}
    The logging policy $\pi_0$ is said to satisfy common support w.r.t the main actions for policy $\pi$ if $\pi(\phi(m)\,|\,x) > 0 \implies \pi_0(\phi(m)\,|\,x)> 0$ for all $m \in \calM$ and $x \in \calX$. 
\end{assumption}

\begin{assumption}[Conditional Pairwise Correctness]
    \label{ass.local_correctness_Kth}
    A regression model $\hat{f}: \calX \times \calM \to \mR$ is said to have conditional pairwise correctness if the following holds for all $x \in \calX$ and $m, m' \in \calM$:
    \begin{align}
        \phi(m) = \phi(m') \quad \Rightarrow \quad \Delta_q(x, m, m') = \DeltafKth, 
    \end{align}
    where $\Delta_q(x, m, m') := q(x, m) - q(x, m')$ is the relative difference of the expected rewards between $m$ and $m'$, and $\DeltafKth := \fKth(x, m) - \fKth(x, m')$ is its estimate.
\end{assumption}

\begin{theorem}[Unbiasedness of the OPCB estimator]
    \label{thm.unbiasedness_OPCB_Kth}
    Under Conditions \ref{ass.common_support_Kth} and \ref{ass.local_correctness_Kth}, the OPCB estimator ensures unbiasedness. i.e., $\mE_{\calD}[\Kopca] = V(\pi)$. See the appendix for the proof.
\end{theorem}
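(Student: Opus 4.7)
The plan is a direct calculation of $\mE_\calD[\Kopca]$. By linearity and i.i.d.\ sampling the expectation reduces to that of a single summand under $p(x)\pi_0(m|x)p(r|x,m)$; applying the tower rule on $r$ with $\mE[r\mid x,m]=q(x,m)$ gives
\begin{equation*}
\mE_{p(x)\pi_0(m|x)}\!\bigl[w(x,\phi(m))(q(x,m)-\hat{f}(x,m))\bigr] + \mE_{p(x)\pi(m|x)}[\hat{f}(x,m)].
\end{equation*}
Since $V(\pi)=\mE_{p(x)\pi(m|x)}[q(x,m)]$, it then suffices to establish, for each fixed $x$, the marginalization identity
\begin{equation*}
\mE_{\pi_0(m|x)}\!\bigl[w(x,\phi(m))(q(x,m)-\hat{f}(x,m))\bigr] = \mE_{\pi(m|x)}[q(x,m)-\hat{f}(x,m)],
\end{equation*}
because adding $\mE_{p(x)\pi(m|x)}[\hat{f}(x,m)]$ to both sides reconstructs $V(\pi)$.

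The decisive step is invoking Condition~\ref{ass.local_correctness_Kth}: pairwise correctness forces $q(x,m)-\hat{f}(x,m)=q(x,m')-\hat{f}(x,m')$ whenever $\phi(m)=\phi(m')$, so $q(x,m)-\hat{f}(x,m)$ depends on $m$ only through $\phi(m)$; denote this common value by $\eta(x,\phi(m))$. Grouping $m\in\calM$ by $\phi$-equivalence classes then yields
\begin{align*}
\sum_{m\in\calM}\pi_0(m|x)\frac{\pi(\phi(m)|x)}{\pi_0(\phi(m)|x)}\eta(x,\phi(m))
&= \sum_{\phi}\frac{\pi(\phi|x)}{\pi_0(\phi|x)}\,\eta(x,\phi)\!\!\sum_{m:\phi(m)=\phi}\!\!\pi_0(m|x)\\
&= \sum_{\phi}\pi(\phi|x)\,\eta(x,\phi)\\
&= \mE_{\pi(m|x)}[\eta(x,\phi(m))]\\
&= \mE_{\pi(m|x)}[q(x,m)-\hat{f}(x,m)],
\end{align*}
where the second line uses the definition of the marginal $\pi_0(\phi|x)=\sum_{m:\phi(m)=\phi}\pi_0(m|x)$, and Condition~\ref{ass.common_support_Kth} ensures the ratio $\pi(\phi|x)/\pi_0(\phi|x)$ is well-defined wherever $\pi(\phi|x)>0$.

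The hard part, and indeed the entire content of the theorem, is the reduction above: the marginalized weight $w(x,\phi(m))$ is \emph{not} in general the correct ratio to reweight an arbitrary function of $m$ from $\pi_0$ to $\pi$; it only works on functions that live at the coarser granularity of $\phi(m)$. Pairwise correctness is precisely the condition that lets the control variate $\hat{f}$ absorb all within-class variation of $q$, so the residual $q-\hat{f}$ admits the coarser representation $\eta(x,\phi(m))$ and the weight can unbiasedly reweight it. The rest is the tower rule and a standard marginalization computation.
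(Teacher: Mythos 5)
Your proof is correct and follows essentially the same route as the paper's: reduce to a single-sample expectation via i.i.d.\ and the tower rule, observe that conditional pairwise correctness makes the residual $q(x,m)-\hat{f}(x,m)$ a function of $x$ and $\phi(m)$ alone (your $\eta(x,\phi(m))$ is the paper's $\Bar{\Delta}(x,\phi(m))$), and then marginalize over $\phi$-equivalence classes so the weight $\pi(\phi|x)/\pi_0(\phi|x)$ cancels against $\sum_{m:\phi(m)=\phi}\pi_0(m|x)$. The only cosmetic difference is that the paper carries the $\mE_{\pi(m'|x)}[\hat{f}(x,m')]$ term through every line rather than isolating the marginalization identity as a standalone claim.
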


The common support w.r.t the main actions (Condition~\ref{ass.common_support_Kth}) ensures unbiased estimation of the main effect $g(x,\phi(m))$. Since this requires common support regarding only the main actions, this is milder than the common support condition of IPS. Conditional pairwise correctness (Condition~\ref{ass.local_correctness_Kth}) ensures unbiased estimation of the residual effect $h(x,m)$. This condition is milder than the condition for DM to become unbiased because DM requires $\hat{q}(x,s)=q(x,s)$ for all $(x, s)$. Moreover, even when conditional pairwise correctness is not satisfied, we can characterize the bias of OPCB as follows.

\begin{theorem}[Bias of OPCB]    
    \label{thm.bias_OPCB_Kth}
    Under Condition \ref{ass.common_support_Kth}, the OPCB estimator has the following bias.
    \begin{align}
        &\bias\left( \Kopca \right) = \mE_{p(x) \pi(\phi(\Bar{m})|x)} \Bigg[ \sum_{\substack{m < m':\\ \phi(m) = \phi(m') = \phi(\Bar{m})}} \notag  \\
        &\underbrace{\pi_0(m|x, \phi(\Bar{m})) \pi_0(m'|x, \phi(\Bar{m}))}_{(i) \text{ stochasticity of the logging policy given $\phi(\Bar{m})$}} \notag \\
        \times & \underbrace{\Bigg( \frac{\pi(m'|x, \phi(\Bar{m}))}{\pi_0(m'|x, \phi(\Bar{m}))} - \frac{\pi(m|x, \phi(\Bar{m}))}{\pi_0(m|x, \phi(\Bar{m}))} \Bigg)}_{(ii) \text{ difference in the distribution shift between $\pi$ and $\pi_0$ given $\phi(\Bar{m})$}} \notag\\
        \times & \underbrace{\left( \Delta_q(x, m, m') - \DeltafKth \right)}_{(iii) \text{  estimation error of $\hat{f}(x,m)$ w.r.t. relative reward difference}}  \Bigg] , \label{eq:bias-opca}
    \end{align}
    where $\pi(m\,|\,x, \phi(\Bar{m})) = \pi(m\,|\,x) \ind \{\phi(m) = \phi(\Bar{m})\} / \pi(\phi(\Bar{m})\,|\,x)$ and $m, m' \in \calM$. See the appendix for the proof.
\end{theorem}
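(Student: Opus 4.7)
The plan is to compute $\mE_{\calD}[\Kopca]$ in closed form and subtract $V(\pi)$, then algebraically rearrange the resulting class-wise sum into the pairwise form claimed in \eqref{eq:bias-opca}. Since $\calD$ consists of $n$ i.i.d.\ samples from $p(x)\pi_0(m|x)p(r|x,m)$, it suffices to analyze a single summand. The imputation term $\mE_{\pi(m|x_i)}[\hat{f}(x_i,m)]$ integrates trivially against $p(x)$, while in the correction term $w(x_i,\phi(m_i))(r_i - \hat{f}(x_i,m_i))$ I would first replace $r_i$ by $q(x_i,m_i)$ via the tower property.

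Next, I would factorize $\pi_0(m|x) = \pi_0(\phi(m)|x)\,\pi_0(m|x,\phi(m))$ so that the $\pi_0(\phi(m)|x)$ in the denominator of $w(x,\phi(m))$ cancels. Similarly writing $\pi(m|x) = \pi(\phi(m)|x)\,\pi(m|x,\phi(m))$ in the imputation and truth terms and grouping the outer sums by the value of $\phi$ yields
\[
\bias = \mE_{p(x)}\sum_{\phi(\Bar m)}\pi(\phi(\Bar m)|x)\!\!\sum_{m:\phi(m)=\phi(\Bar m)}\!\!\bigl[\pi_0(m|x,\phi(\Bar m))-\pi(m|x,\phi(\Bar m))\bigr]\bigl(q(x,m)-\hat{f}(x,m)\bigr).
\]
Condition~\ref{ass.common_support_Kth} is what guarantees every importance weight above is well-defined.

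The only nonmechanical step is turning this ``one-sided'' difference over each equivalence class into a symmetric pairwise sum. Within each class $\{m:\phi(m)=\phi(\Bar m)\}$, both $\pi_0(\cdot|x,\phi(\Bar m))$ and $\pi(\cdot|x,\phi(\Bar m))$ are probability measures that total $1$. Abbreviating $\delta_m := q(x,m)-\hat{f}(x,m)$, I would multiply the sum by the opposing measure's total mass and relabel indices to obtain the identity
\[
\sum_m \bigl(\pi_0(m)-\pi(m)\bigr)\delta_m \;=\; \sum_{m<m'}\bigl(\pi_0(m)\pi(m')-\pi(m)\pi_0(m')\bigr)(\delta_m-\delta_{m'}),
\]
valid inside each class. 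Factoring $\pi_0(m)\pi_0(m')$ out of the bracket reproduces the ratio difference in term $(ii)$, while $\delta_m-\delta_{m'} = \Delta_q(x,m,m') - \Delta_{\hat{f}}(x,m,m')$ matches term $(iii)$, and combining the outer $\pi(\phi(\Bar m)|x)$ with $\mE_{p(x)}$ gives $\mE_{p(x)\pi(\phi(\Bar m)|x)}$, matching the claim. I expect this symmetrization---verified by expanding the double sum $\sum_{m,m'}(\alpha_m\beta_{m'}-\beta_m\alpha_{m'})\delta_m$ and splitting it into $m<m'$ and $m>m'$ pieces---to be the only delicate piece of bookkeeping in the argument.
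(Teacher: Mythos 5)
Your proposal is correct: the intermediate identity
\begin{align*}
\bias\left( \Kopca \right) = \mE_{p(x)}\Bigg[\sum_{\phi(\Bar{m})}\pi(\phi(\Bar{m})\,|\,x)\sum_{m:\phi(m)=\phi(\Bar{m})}\big(\pi_0(m\,|\,x,\phi(\Bar{m}))-\pi(m\,|\,x,\phi(\Bar{m}))\big)\,\DeltaqfKth\Bigg]
\end{align*}
follows exactly as you say from $\pi_0(m|x)\,w(x,\phi(m))=\pi(\phi(m)|x)\,\pi_0(m|x,\phi(m))$, and your symmetrization identity for two probability vectors on each equivalence class checks out (the diagonal terms vanish and the $m>m'$ half relabels to give the $\delta_m-\delta_{m'}$ factor). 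The overall skeleton is the same as the paper's---compute the expectation, reduce to a class-wise sum, and convert it to a pairwise form via the same covariance-type identity (the paper invokes Lemma B.1 of Saito et al.\ 2022 for this step, which is precisely your identity). Where you diverge is in the bookkeeping: the paper first expresses both terms of the bias as expectations under the \emph{logging} marginal $\pi_0(\phi(\Bar{m})|x)$ (which costs it three auxiliary lemmas, A.1--A.3), applies the pairwise identity with the unnormalized weights $w(x,m)$, and only at the end pulls out a factor $w(x,\phi(\Bar{m}))$ to switch the outer measure to $\pi(\phi(\Bar{m})|x)$. By cancelling $\pi_0(\phi(m)|x)$ against the denominator of the marginalized weight at the outset, you land on the target outer measure immediately and apply the identity directly to the conditional distributions $\pi_0(\cdot|x,\phi(\Bar{m}))$ and $\pi(\cdot|x,\phi(\Bar{m}))$, which makes the argument shorter and fully self-contained. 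Condition~\ref{ass.common_support_Kth} is indeed used only to justify that the cancellation and the ratios in term $(ii)$ are well defined on the support of $\pi$.
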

Theorem \ref{thm.bias_OPCB_Kth} implies that three factors contribute to the bias of OPCB. The first factor $(i)$ is the stochasticity of the logging policy conditional on the main actions ($\phi(\Bar{m})$), which implies that the bias of OPCB becomes minimal when the conditional policy, $\pi_0(m\,|\,x, \phi(\Bar{m}))$, is near deterministic. The second factor $(ii)$ is the difference in the marginal importance weights of two action subsets that share the same main actions. This indicates that the bias becomes small when the distribution shift between policies regarding the supplemental actions is small. The final factor $(iii)$ is estimation accuracy of the regression model $\hat{f}(x, m)$ against the relative difference in expected rewards $\Delta_q(x, m, m')$ only for $\phi(m) = \phi(m')$, suggesting that a more accurate estimation of the residual effects by $\hat{f}(x, m)$ results in a smaller bias of OPCB.

In contrast, the variance of OPCB is characterized by the scale of the marginalized importance weight and another aspect of $\hat{f}(x, m)$.

\begin{theorem}[Variance of OPCB]
    \label{thm.variance_OPCB_Kth}
    Under Conditions \ref{ass.common_support_Kth} and \ref{ass.local_correctness_Kth}, the variance of the OPCB estimator is as follows.
    \begin{align}
        n \mV_{\calD} \left[ \Kopca \right] &= \mE_{p(x) \pi_0(m|x)} [w(x, \phi(m))^2 \sigma^2(x, m)] \notag\\
        +& \mE_{p(x)} \left[ \mV_{\pi_0(m|x)} \left[w(x, \phi(m)) \DeltaqfKth \right] \right] \notag\\
        +& \mV_{p(x)} \left[ \mE_{\pi(m|x)} \left[q(x, m)\right] \right] \label{eq:variance-OPCB-Kth}, 
    \end{align}
    where $\DeltaqfKth := q(x, m) - \fKth(x, m)$ is the estimation error of $\fKth(x, m)$ against the expected reward function $q(x, m)$ and $\sigma^2(x, m) := \mV_{p(r|x, a)}[r]$. See the appendix for the proof.
\end{theorem}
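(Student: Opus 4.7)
My plan is to exploit the i.i.d.\ structure of $\calD$ and apply the law of total variance twice, peeling off one source of randomness at a time: first $r\mid x,m$, then $m\mid x$, and finally $x$. Since the $n$ summands in $\Kopca$ are i.i.d., $n\mV_\calD[\Kopca] = \mV[Z]$, where $Z := w(x,\phi(m))(r - \fKth(x,m)) + \mE_{\pi(m'|x)}[\fKth(x,m')]$ and the outer variance is taken over $p(x)\pi_0(m|x)p(r|x,m)$. Each of the three resulting terms in \eqref{eq:variance-OPCB-Kth} should correspond to one of the three stages of conditioning.

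For the first stage, conditioning on $(x,m)$, only $r$ is random in $Z$ and enters linearly through $w(x,\phi(m))\cdot r$, so $\mV_{p(r|x,m)}[Z\mid x,m] = w(x,\phi(m))^2 \sigma^2(x,m)$; taking the expectation over $p(x)\pi_0(m|x)$ produces the first term of \eqref{eq:variance-OPCB-Kth}. The surviving piece is $\mE_{p(r|x,m)}[Z\mid x,m] = w(x,\phi(m))\DeltaqfKth + \mE_{\pi(m'|x)}[\fKth(x,m')]$, whose variance over $p(x)\pi_0(m|x)$ must contribute the remaining two terms.

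I would then apply the law of total variance a second time, conditioning on $x$. Because $\mE_{\pi(m'|x)}[\fKth(x,m')]$ depends only on $x$, it is constant under $\pi_0(m|x)$ and drops out of the inner variance, leaving $\mV_{\pi_0(m|x)}[w(x,\phi(m))\DeltaqfKth]$; taking $\mE_{p(x)}[\cdot]$ yields the second term. For the third term I would invoke (the pointwise-in-$x$ version implicit in the proof of) Theorem~\ref{thm.unbiasedness_OPCB_Kth}: under Conditions~\ref{ass.common_support_Kth} and \ref{ass.local_correctness_Kth}, one has $\mE_{\pi_0(m|x)}[w(x,\phi(m))\DeltaqfKth] + \mE_{\pi(m'|x)}[\fKth(x,m')] = \mE_{\pi(m|x)}[q(x,m)]$ for $p(x)$-almost every $x$, so applying $\mV_{p(x)}[\cdot]$ produces exactly the third term.

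The main obstacle will be this final identification: the nominal statement of Theorem~\ref{thm.unbiasedness_OPCB_Kth} is about $\mE_\calD[\cdot]$, whereas I need the conditional-on-$x$ version to replace the outer expectand with $\mE_{\pi(m|x)}[q(x,m)]$ before pushing $\mV_{p(x)}$ through. This should follow from the internal structure of the unbiasedness proof, which decomposes $q = g + h$ via \eqref{eq:decomposition}, uses common support w.r.t.\ the main actions to unbias $g(x,\phi(m))$ under $\pi_0(\phi(m)|x)$, and uses conditional pairwise correctness to pin down the contribution of $h(x,m)$---both steps already hold pointwise in $x$. Everything beyond that is routine conditional-variance bookkeeping.
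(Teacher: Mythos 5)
Your proposal is correct and follows essentially the same route as the paper's proof: reduce to a single-sample variance via i.i.d.-ness, apply the law of total variance twice (first over $r\mid x,m$, then over $m\mid x$), note that $\mE_{\pi(m'|x)}[\fKth(x,m')]$ is constant given $x$, and identify the outer conditional mean with $\mE_{\pi(m|x)}[q(x,m)]$ via the pointwise-in-$x$ content of the unbiasedness argument (which the paper likewise invokes as ``Proof of Theorem~\ref{thm.unbiasedness_OPCB_Kth}'', using conditional pairwise correctness to write $\DeltaqfKth = \Bar{\Delta}(x,\phi(m))$). The obstacle you flag is real but resolves exactly as you anticipate.
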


Theorem \ref{thm.variance_OPCB_Kth} demonstrates that the variance of OPCB depends only on the importance weights of the main actions, i.e., $w(x, \phi(m))$. Since OPCB does not consider the importance weight regarding supplemental actions, we can expect a significant variance reduction compared to IPS and DR, which rely on the importance weight regarding the entire action subsets. Moreover, Eq. \eqref{eq:variance-OPCB-Kth} shows that the variance of OPCB depends on the accuracy of the regression model $\hat{f}(x, m)$ against the expected reward $q(x,m)$, not against the relative reward difference as in the bias expression. This implies that we can minimize the second term of the variance by minimizing $\DeltaqfKth$ when deriving the regression model $\hat{f}(x, m)$.

\subsection{Optimizing Regression and Decomposition}
Based on the theoretical analysis, this section discusses how to optimize a regression model $\hat{f}(x, m)$ to minimize the bias and variance of OPCB. We then discuss how to identify the appropriate main actions $\phi(m)$ to decompose the expected reward based only on the logged data to minimize the MSE of OPCB.

\subsubsection{Optimizing the Regression Model} \label{subsec:two-step regression}
The theoretical analysis suggests that optimizing the accuracy of the regression model $\hat{f}(x, m)$ against the relative reward difference $\Delta_q(x, m, m')$ and expected reward $q(x,m)$ results in the reduction of the bias and variance of OPCB, respectively. We first deal with the bias of OPCB and minimize it via performing the following pairwise regression problem towards minimizing the estimation error against $\Delta_q(x, m, m')$:
\begin{align*}
    \psi^* = \argmin_{\psi \in \Psi} \sum_{(x, m, m', r_m, r_{m'}) \in \calD_{\pair}} \ell_{h} \Big( r_m - r_{m'}, \hat{h}_{\psi}(x, m) - \hat{h}_{\psi}(x, m') \Big), 
\end{align*}
where $\hat{h}_{\psi}: \calX \times \calM \to \mR$ is parametrized by $\psi$ and $\ell_{h}$ is some appropriate loss function such as the L2 loss. The pairwise dataset $\calD_{\text{pair}}$ can be obtained from the original logged data $\calD$ as follows.
\begin{align*}
    \calD_{\pair} := \Bigg\{ (x, m, m', r_m, r_{m'}) \Bigg| \begin{array}{l} 
      (x_m, m, r_m), (x_{m'}, m', r_{m'}) \in \calD \\
    x = x_m = x_{m'}, \phi(m) = \phi(m')
    \end{array} 
    \Bigg\}.
\end{align*}

Once we have done the pairwise regression procedure to minimize the bias of OPCB, we go on to deal with its variance minimization via minimizing the error of the regression model $\hat{f}(x, m)$ against the absolute expected reward $q(x,m)$ reflecting our variance analysis. Specifically, this can be done via the following ordinary regression against the rewards observed in the logged data:
\begin{align*}
    \theta^{\ast} = \argmin_{\theta \in \Theta} \sum_{(x, m, r) \in \calD} \ell_{g} \left( r - \hat{h}_{\psi^{\ast}}(x, m), \hat{g}_{\theta}(x, \phi(m)) \right).
\end{align*}
After performing these regression procedures, we can define the regression model as $\hat{f}_{\theta^{\ast}, \psi^{\ast}}(x,m) = \hat{g}_{\theta^{\ast}}(x,\phi(m)) + \hat{h}_{\psi^{\ast}}(x,m)$, which minimizes the error against the relative reward difference and absolute expected reward to optimize both the bias and variance of OPCB leveraging its theoretical results.

\subsubsection{Optimizing the Set of Main Actions}
\label{subsec:hyperparameter-tuning}

The previous sections defined OPCB assuming that the set of main actions $\phi(m)$ is appropriately given. However, in practical applications, identifying the appropriate main actions to perform OPCB is non-trivial, even though this identification is crucial as we will see in the empirical section. 
When performing such an optimization, we should ideally optimize $\phi$ to minimize the MSE of OPCB, recalling that our technical goal is to build an estimator that minimizes the MSE:
\begin{align}
    \!\!\phi^* = \argmin_{\phi \in \Phi} \bias \left( \Kopca \right)^2 + \var \left( \Kopca \right). \label{eq:ideal-tuning}
\end{align}
However, the true MSE of any estimator is unknown as it depends on $V(\pi)$, so we propose to optimize the set of main actions ($\phi$) via the following empirical formula.
\begin{align}
    \! \hat{\phi} = \argmin_{\phi \in \Phi} \widehat{\bias} \left( \Kopca \right)^2 + \widehat{\var} \left( \Kopca \right), \label{eq:tuning}
\end{align}
where $\Phi$ is a (finite) set of candidate functions $\phi$ and $\widehat{\var}(\cdot)$ is the sample variance of OPCB, which is defined as~\citep{wang2017optimal}:
\begin{align*}
    \widehat{\var} \left( \Kopca \right) := \frac{1}{n^2} \sum_{i=1}^n \left(Y_{i} - \Kopca \right)^{2}.
\end{align*}
where $Y_{i}=w(x_i,\phi(m_i)) ( r_i - \hat{f}(x_i, m_i) ) + \mE_{\pi(m|x_i)} [\hat{f}(x_i, m)]$. In contrast, $\widehat{\bias}(\cdot)$ is an estimate of the bias of OPCB. There exist several methods to estimate the bias of an estimator using only the logged data~\citep{udagawa2023policy,su2020adaptive,cief2024cross,felicioni2024autoope}. In the experiments, we empirically demonstrate that OPCB substantially outperforms the existing methods in a variety of environments even with a noisy estimate of the bias when performing Eq.~\eqref{eq:tuning}.

\subsection{Extension to Off-Policy Learning in CCB}\label{sec:opl}
We have thus far focused on evaluating the policy value of a new CCB policy using only the logged data by OPCB. In this section, we extend it to solve the OPL problem to learn a new CCB policy, aimed at optimizing the policy value. In particular, we use a policy gradient approach to learn the parameter $\zeta$ of a parametrized CCB policy $\pi_{\zeta}(m\,|\,x)$ to maximize the policy value, i.e., $$\zeta^* = \argmax_{\zeta}\, V (\pi_{\zeta}).$$ The policy gradient approach updates the policy parameter via iterative gradient ascent as $\zeta_{t + 1} \gets \zeta_{t} + \eta \nabla_{\zeta} V(\pi_{\zeta_{t}})$, where $\eta$ is the learning rate. Since we have no access to the true policy gradient $\nabla_{\zeta} V(\pi_{\zeta}) = \mE_{p(x)\pi_{\zeta}(m|x)} \left[ q(x,m) \nabla_{\zeta} \log \pi_{\zeta}(m\,|\,x) \right]$, we need to estimate it using $\calD$. We achieve this by extending OPCB as
\begin{align}
    &\nabla_\zeta \hat{V}_{\mathrm{OPCB}} (\pi_\zeta;\calD,\phi) \label{eq:opcb-pg} \\
    &:= \meanN \bigg\{ \frac{\pi_{\zeta}(\phi(m_i)\,|\,x_i)}{\pi_0(\phi(m_i)\,|\,x_i)} \left( r_i - \hat{f}(x_i,m_i) \right) \log \pi_{\zeta}(\phi(m_i)|x_i) \notag \\
    & \hspace{3cm} + \mE_{\pi_\zeta(m|x_i)} \left[ \hat{f}(x_i,m) \nabla_{\zeta} \log \pi_{\zeta}(m\,|\,x_i)  \right] \bigg\}\notag
\end{align}
 We can derive the bias and variance of this policy-gradient estimator similarly to the analysis of OPCB, which can be found in the appendix. Note that we call our policy gradient (PG) method to solve OPL based on Eq.~\eqref{eq:opcb-pg} as OPCB-PG.

\section{Synthetic Experiments}

This section evaluates OPCB using synthetic data and identifies the situations where OPCB becomes particularly more effective in OPE/L compared to the baseline methods. Note that in our experiments, we focused on settings with a relatively small number of ``unique'' actions ($|\calA| \le 10$). This might seem small compared to real-world applications, but existing methods cannot even handle this due to their variance. We believe OPE/L for CCB with large unique action spaces, potentially leveraging structure in $\calA$ as studied by~\citep{saito2022off,saito2023off,saito2024potec,sachdeva2024off,kiyohara2024slate}, would be an interesting future topic.

\subsection{Synthetic Data Generation \label{subsec:synthetic ope}}
To generate synthetic data, we first define 200 synthetic users characterized by 5-dimensional context vectors ($x$) sampled from the standard normal distribution. Then, for each action subset $m = (m_1, \cdots, m_L) \in \calM$ (or equivalently $s \in \calS$), we simulate its expected reward via synthesizing the main and residual effects:
\begin{align}
    \label{expected reward function}
    q(x, m) = \lambda \cdot g(x, \phi_{true}(m)) + (1 - \lambda) \cdot h(x, m), 
\end{align}
where $g(x,\phi_{true}(m))$ and $h(x, m)$ in Eq.~\eqref{expected reward function} are defined rigorously in the appendix. $\lambda$ is an experiment parameter to control the contribution of the main effect, where the number of main actions in the true reward function is $|\phi_{true}(m)|=3$. 
Based on the above synthetic reward function, we sample reward $r$ from a normal distribution whose mean is $q(x,m)$ and standard deviation $\sigma$ is 3.0. \\
The logging and target policies are defined respectively as follows.
\begin{align}
    \pi_0(m\,|\,x) 
    &= \frac{\exp(\beta \cdot q(x,m))}{\sum_{m' \in \calM} \exp(\beta \cdot q(x,m'))}, \label{eq:logging} \\
    \pi(m\,|\,x) 
    &= (1-\epsilon) \cdot \ind \{m = \underset{m' \in \calM} {\operatorname{argmax}} \ q(x,m')\} + \frac{\epsilon}{|\calM|} \label{eq:evaluation}.
\end{align}
$\beta$ and $\epsilon$ are the experiment parameters to control the stochasticity and optimality of these policies. We use $\beta = -0.5$ and $\epsilon = 0.2$, which mean that the logging policy performs worse than the uniform random policy, while the target policy is close to optimal.

\begin{figure*}[t]
    \includegraphics[scale=0.25]{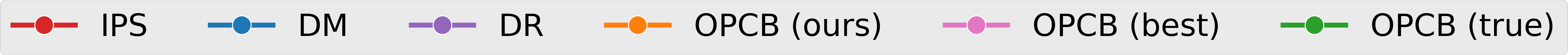} 
    \Description{}
    \vspace{3mm}
    \includegraphics[width=0.85\linewidth]{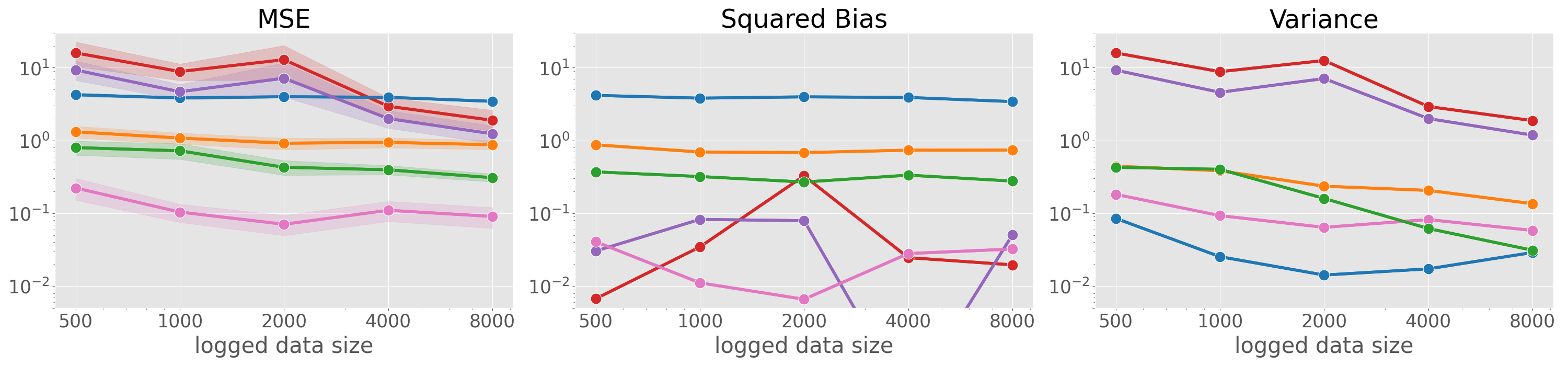}
    \centering \vspace{-3mm}
    \caption{Comparison of the estimators' MSE, Squared Bias, and Variance with varying logged data sizes ($n$).} \vspace{3mm}
    \centering
    \label{fig:n_samples}
    \includegraphics[width=0.85\linewidth]{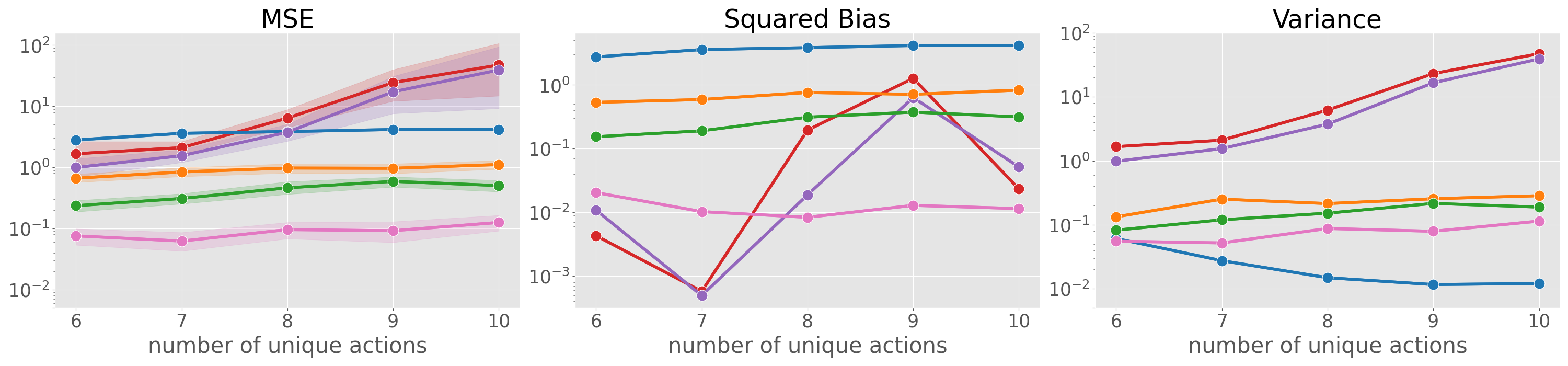}
    \centering \vspace{-3mm}
    \caption{Comparison of the estimators' MSE, Squared Bias, and Variance with varying numbers of actions ($|\calA|$).\vspace{3mm}} 
    \centering
    \label{fig:n_actions}
    \includegraphics[width=0.85\linewidth]{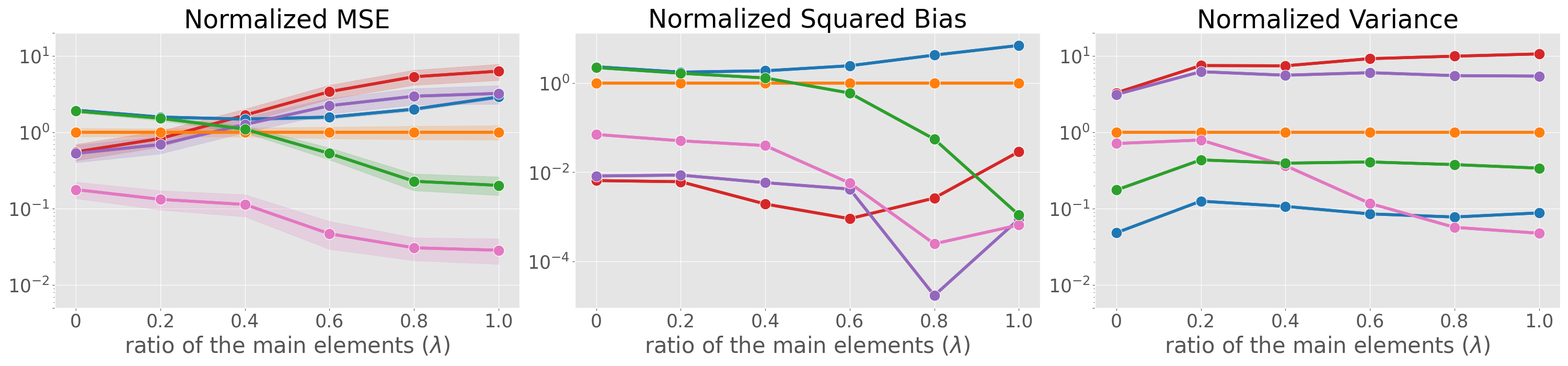}
    \centering \vspace{-3mm}
    \caption{Comparison of the estimators' MSE, Squared Bias, and Variance (normalized by those of OPCB (ours)) with varying values of $\lambda$ in Eq. \eqref{expected reward function}.}
    \centering
    \label{fig:lambda}
\end{figure*}

\paragraph{Compared methods.}
We compare OPCB with DM, IPS, and DR where all of these baseline methods are defined rigorously in Section~\ref{sec:baselines}. The regression models used in DM, DR, and OPCB are parameterized by a 3-layer neural network. Note that we optimize the regression model for OPCB via the two-stage procedure described in Section~\ref{subsec:two-step regression}. Regarding OPCB, we report the results of its three variants, "OPCB (true)", "OPCB (best)", and "OPCB (ours)", each based on different methods to define the main actions $\phi(m)$. 
Specifically, OPCB (true) uses $\phi_{true}$, which defines the true synthetic reward function as in Eq.(\ref{expected reward function}), meaning that it leverages the accurate knowledge about the reward function, which is usually unavailable. OPCB (best) derives the main actions using the true MSE of OPCB, i.e., $\phi^*$ as in Eq.~\eqref{eq:ideal-tuning}. This method is also infeasible in practice because we do not have access to the true MSE. Therefore, OPCB (best) provides the best achievable accuracy based on our framework as a reference. Finally, OPCB (ours) optimizes the main actions based on an estimated MSE to investigate the effectiveness of OPCB with a data-driven method as described in Section~\ref{subsec:hyperparameter-tuning}, which is feasible in practice. Specifically, OPCB (ours) simulates the bias estimation when performing Eq.~\eqref{eq:tuning} as follows.
\begin{align}
    \widehat{\bias} \left(\Kopca\right)^2 = \bias \left(\Kopca\right)^2 + \delta_{\phi} \label{eq:bias_error}
\end{align}
where $\delta_{\phi}$ is a noise sampled from a normal distribution with mean $0$ and standard deviation $\sigma$. We use random search and try 30 candidate functions $\phi$ when performing OPCB (ours).

\begin{figure*}[t]
    \includegraphics[scale=0.23]{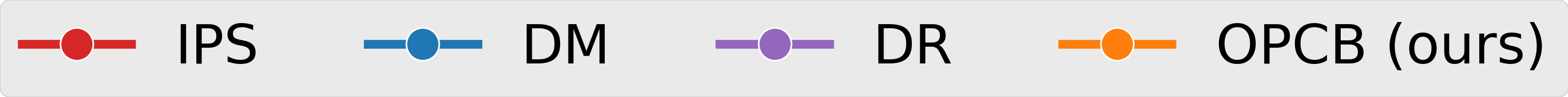} \vspace{3mm}
    \includegraphics[width=0.85\linewidth]{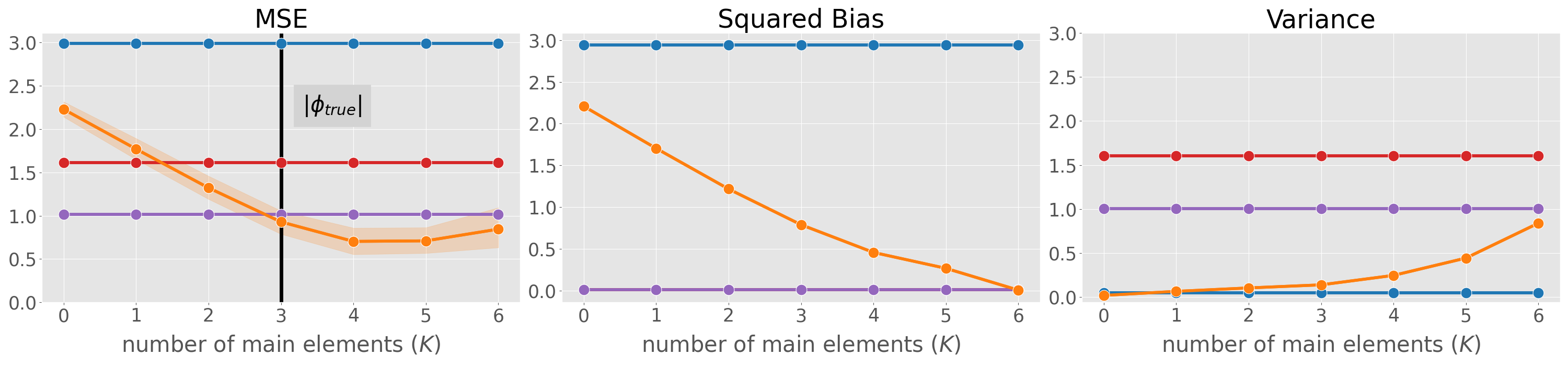}
    \centering \vspace{-3mm}
    \Description{}
    \caption{Comparison of the estimators' MSE, Squared Bias, and Variance with varying numbers of main actions in OPCB ($\hat{\phi}$).}
    \centering
    \label{fig:kth_order}
\end{figure*}

\subsection{Results}
Figures \ref{fig:n_samples} to \ref{fig:kth_order} show the MSE, Squared Bias, and Variance of the estimators computed over 100 simulations with different random seeds to produce synthetic data instances. Unless otherwise specified, the logged data size is set to $n=2000$, the number of actions is $|\calA|=8$, and the ratio of the main effects is $\lambda=0.8$.

\paragraph{\textbf{How does OPCB perform with varying logged data sizes?}} 
Figure \ref{fig:n_samples} varies logged data sizes ($n$) from 500 to 8000. The results demonstrate that OPCB (ours) outperforms the baseline estimators across various logged data sizes by effectively reducing both bias and variance. Specifically, compared to IPS and DR, we observe a significant variance reduction of OPCB (ours), resulting from the use of importance weighting regarding only the main actions $\hat{\phi}(m)$ compared to importance weighting in $\calS$ for IPS and DR. 
The variance reduction is particularly large when the logged data size is small, leading to the reduction of MSE by 92\% compared to IPS and by 86\% compared to DR when $n = 500$, which is quite appealing.
We also observe that OPCB (ours) consistently produces much smaller bias compared to DM. This reduction in bias is attributed to the use of importance weighting to unbiasedly estimate the main effect and two-stage regression in optimizing the regression model of OPCB (ours), which directly deals with the estimation error against the relative reward difference, leveraging the bias analysis.
It would also be intriguing to see that the MSE of OPCB (ours) is somewhat close to OPCB (true) for a range of logged data sizes, suggesting that OPCB (ours) performs reasonably well even without the knowledge about the true reward function, $\phi_{true}$, by leveraging the data-driven selection of the main actions.
Moreover, we can see from the figure that OPCB (best) performs even better than OPCB (ours) and OPCB (true). 
This is an interesting observation because it suggests that the MSE of OPCB is minimized when we do \textbf{NOT} use the true main actions that define the true reward function $\phi_{true}$.
The bias and variance of these estimators observed in Figure \ref{fig:n_samples} tell why this can happen.
Specifically, OPCB (best) produces much smaller bias compared to OPCB (true), by intentionally applying importance weighting to more main actions. 
This would make the residual effect smaller, leading to smaller bias and potentially smaller MSE as long as the variance is well-controlled.

\paragraph{\textbf{How does OPCB perform with varying numbers of unique actions?}}
Next, Figure \ref{fig:n_actions} varies the number of unique actions ($|\calA|$) from 6 to 10, which changes the number of subsets ($|\calS|$) from 64 to 1024. The results show that OPCB (ours) enables more accurate estimation with large subset spaces, while IPS and DR fail in such situations due to their growing variance. This observation suggests that OPCB (ours) becomes even more effective for practical and challenging situations with larger numbers of unique actions. Moreover, the comparison among different versions of OPCB suggests that OPCB (ours) performs similarly to OPCB (true), which uses $\phi_{ture}$, for various numbers of unique actions. It is also true that OPCB (best) performs even better than OPCB (ours) and OPCB (true), an observation consistent with Figure \ref{fig:n_samples}.

\begin{figure*}[t]
    \includegraphics[scale=0.23]{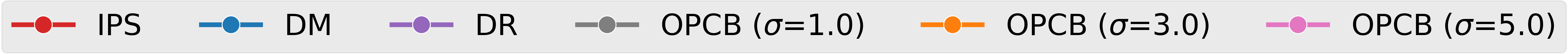} \vspace{3mm}
    \includegraphics[width=0.85\linewidth]{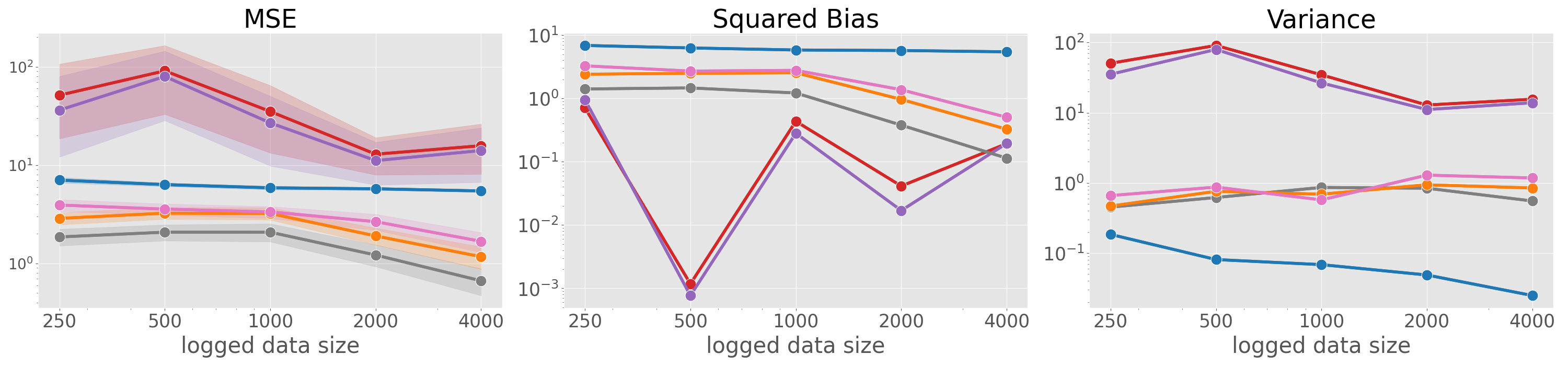}
    \centering \vspace{-3mm}
    \Description{}
    \caption{Comparison of the estimators' MSE, Squared Bias, and Variance with varying logged data sizes ($n$) on KuaiRec.}
    \centering
    \label{fig:kuairec}
\end{figure*}

\paragraph{\textbf{How does OPCB perform with varying ratios of the main effects?}}
Figure \ref{fig:lambda} reports the estimators' performance normalized by that of OPCB (ours) with varying ratios ($\lambda$) of the main effect within the reward function in Eq.~\eqref{expected reward function}. We observe that OPCB (ours) becomes increasingly superior to the baseline estimators as $\lambda$ increases and the main effect becomes a more dominant factor in the reward function. This is because OPCB can unbiasedly estimate a larger part of the reward function via importance weighting, as the main effect becomes more dominant with larger $\lambda$. It is also often true that the pairwise regression in the optimization of the regression model $\hat{f}(x, m)$ is likely to be more accurate when the main effect becomes large because it means that the residual effect becomes relatively small. These results imply that OPCB performs particularly well when there is a small number of key actions that account for the most part of the reward function.

\begin{figure}
  \centering
  \includegraphics[scale=0.22]{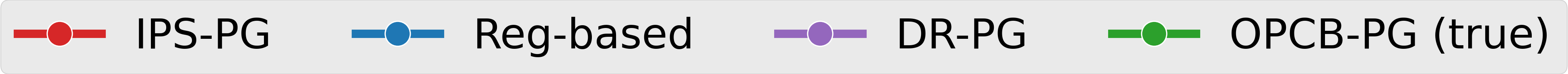} \vspace{3mm}
  \includegraphics[width=\linewidth]{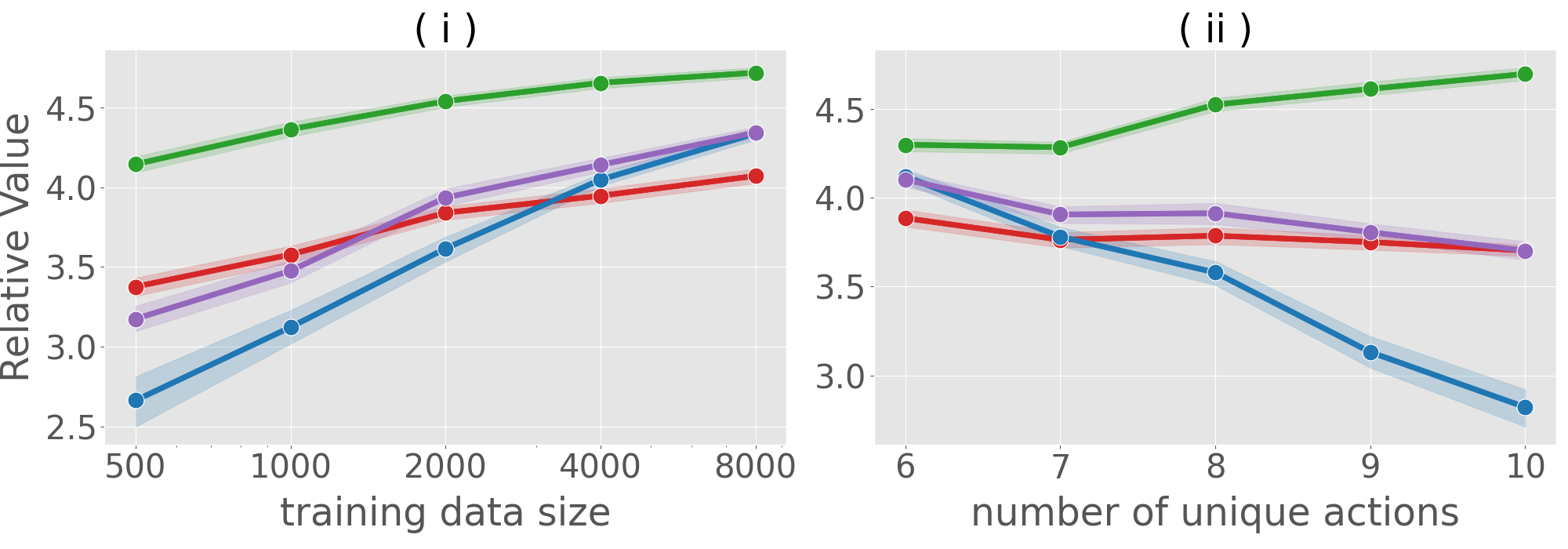} \vspace{-10mm}
  \Description{}
  \caption{Comparison of the policy value (normalized by $V(\pi_0)$ ) of the OPL methods, with varying (i) training data sizes and (ii) the number of unique actions.} \vspace{-2mm}
  \label{fig:opl}
\end{figure}

\paragraph{\textbf{How does OPCB perform with varying numbers of main actions ($|\hat{\phi}(m)|$) used?}}
In Figure~\ref{fig:kth_order}, we investigate the landscape of the bias-variance tradeoff with varying numbers of the main actions used when performing OPCB (ours).
Note that we set $|\phi_{true}(m)|=3$ and $|\calA|=6$, but the number of the main actions used in OPCB (ours) can be different from that of the true reward function in Eq.~\eqref{expected reward function}.
Specifically, in this experiment, we define the set of candidate functions as $\Phi_K := \{\phi \mid |\phi(m)| = K\}$ where $K$ is the number of main actions. We then change the value of $K$ to investigate its effect on the bias and variance of OPCB (ours).
The results demonstrate that the bias of OPCB (ours) decreases and its variance increases as $K$ becomes large, which is consistent with our theoretical analysis. Moreover, the results interestingly suggest that the MSE of OPCB may not necessarily be minimized when $K = |\phi_{true}(m)|$. 
This is because by intentionally using $K$ larger than $|\phi_{true}(m)|$, we can further reduce the bias at the cost of some increased variance, potentially resulting in a smaller MSE. 
This is consistent not only with our bias and variance analysis, but also with the results observed in Figures~\ref{fig:n_samples},~\ref{fig:n_actions}, and~\ref{fig:lambda} where OPCB (best) is more accurate than OPCB (true) by achieving smaller bias.

\paragraph{\textbf{How does OPCB-PG perform compared to the baseline OPL methods?}}
To conduct a synthetic experiment regarding OPL of a CCB policy, we use a neural network with 3 hidden layers to parameterize the policy $\pi_\zeta$. We compare our proposed OPL method, OPCB-PG, against IPS-PG and DR-PG, the policy gradient methods based on the IPS and DR estimators. We also compare the regression-based (Reg-based) baseline, which first estimates the expected reward function as done in DM, and applies the softmax function to the estimated reward function as in Eq.~\eqref{eq:logging} with $\beta = 10$.

Figure~\ref{fig:opl} compares the policy learning effectiveness by the resulting policy values of the OPL methods in the test set (higher the better) with (i) varying training data sizes and (ii) numbers of unique actions. In the left plot, we see that every OPL method performs more similarly with increased training data sizes as expected, but particularly when the logged data size is not large (i.e., $n = 500, 1000$), OPCB-PG outperforms the baselines with a substantial margin. This superior performance of OPCB-PG in the small sample regime can be attributed to the fact that it reduces variance in policy-gradient estimation, which leads to a more sample-efficient OPL in the CCB setup. In the right plot, we can see that OPCB-PG is particularly more effective when the number of unique actions is larger. In particular, when $|\calA| = 10$, OPCB-PG outperforms DR-PG, the second best method, by about 25\%. This observation is akin to what we observed regarding the OPE accuracy, and empirically demonstrates that the lower-variance policy-gradient estimation achieved by OPCB-PG results in a better CCB policy value via OPL.

\section{Real-World Experiment}

This section conducts an OPE experiment on the real-world recommendation dataset called KuaiRec~\citep{gao2022kuairec}, which consists of recommendation logs of the video-sharing app, Kuaishou. Each record in the dataset includes a user ID, recommended video ID as action $a$, and the watch ratio of the recommended video as reward $r \in [0, \infty)$, which represents the play duration divided by the video duration. Each user and video is associated with user and video features, which we consider as context $x$ and features of action $a$, respectively. The key property of KuaiRec is that the user-item interactions are almost fully observed with nearly 100\% density for the subset of its users and items (including 1,411 users and 3,327 items), meaning that the reward function is fully accessible. By leveraging this unique property, we can perform an OPE experiment on this dataset with minimal synthetic component~\citep{gao2022kuairec}.

To perform an OPE experiment on this dataset, we define the expected reward for each action subset as follows.
\begin{align*}
    q(x, m) = \left( \prod_{l=1}^L \tilde{q}(x, m_l) \right)^{(\sum_{l=1}^L \mathbb{I} \{ m_l = a_l \})^{-1}}
\end{align*}
where $\tilde{q}(x, m_l) = \tilde{q}(x, a_l)$ when $m_l = a_l$ and $\tilde{q}(x, m_l) = 1$ when $m_l = \emptyset$. We then sample the reward $r$ from a normal distribution whose mean is $q(x,m)$ and standard deviation $\sigma$ is 3.0.

We define the logging and target policies following Eq.~\eqref{eq:logging} and Eq.~\eqref{eq:evaluation} in the synthetic experiments, but we replace the true expected reward $q(x,m)$ with that estimated by ridge regression and we use $\beta=-0.3$ and $\epsilon = 0.1$ in the real-world experiment. Finally, to simulate a realistic situation where the true MSE is inaccessible regarding the OPCB's data-driven optimization procedure in Eq.~\eqref{eq:tuning}, we vary the estimation errors of the bias term ($\sigma=1.0, 3.0, 5.0$) to sample the noise $\delta_{\phi}$ in Eq.~\eqref{eq:bias_error}. A larger value of $\sigma$ indicates a lower accuracy in estimating the MSE of OPCB to perform Eq.~\eqref{eq:tuning}.

\paragraph{\textbf{Results}}
Figure~\ref{fig:kuairec} compares the MSE, Squared Bias, and Variance of the estimators with varying logged data sizes on KuaiRec. The results demonstrate a similar trend as observed in the synthetic experiment -- OPCB performs consistently better than the baseline methods across various logged data sizes by effectively reducing the bias and variance. It would also be remarkable to see that OPCB brings in a significant reduction in MSE even with the largest noise on its bias estimation when performing the data-driven optimization procedure in Eq.~\eqref{eq:tuning}. These results support the advantage of OPCB in the real CCB problems even with a noisy bias estimation. 

\section{Conclusion}
This paper studied OPE/L for CCB for the first time. We start by identifying the bias and variance issues of the standard approaches, which arise due to the exponential growth of the action subset space. To tackle these issues, we proposed the OPCB estimator based on the formulation of CCB via the factored action space and the corresponding reward function decomposition. Our theoretical analysis highlights the conditions under which OPCB's bias and variance become particularly small. We also discussed a data-driven procedure to optimize the main elements in the action space to identify the appropriate reward decomposition to perform OPCB and extension of OPCB to an OPL method. Experiments on synthetic and real-world datasets demonstrated that OPCB enables far more accurate estimation and policy learning than the baseline methods in a variety of CCB problems.


\bibliographystyle{ACM-Reference-Format}
\bibliography{ref}

\appendix
\newpage
\appendix
\onecolumn

\begin{figure}[t]
    \includegraphics[scale=0.37]{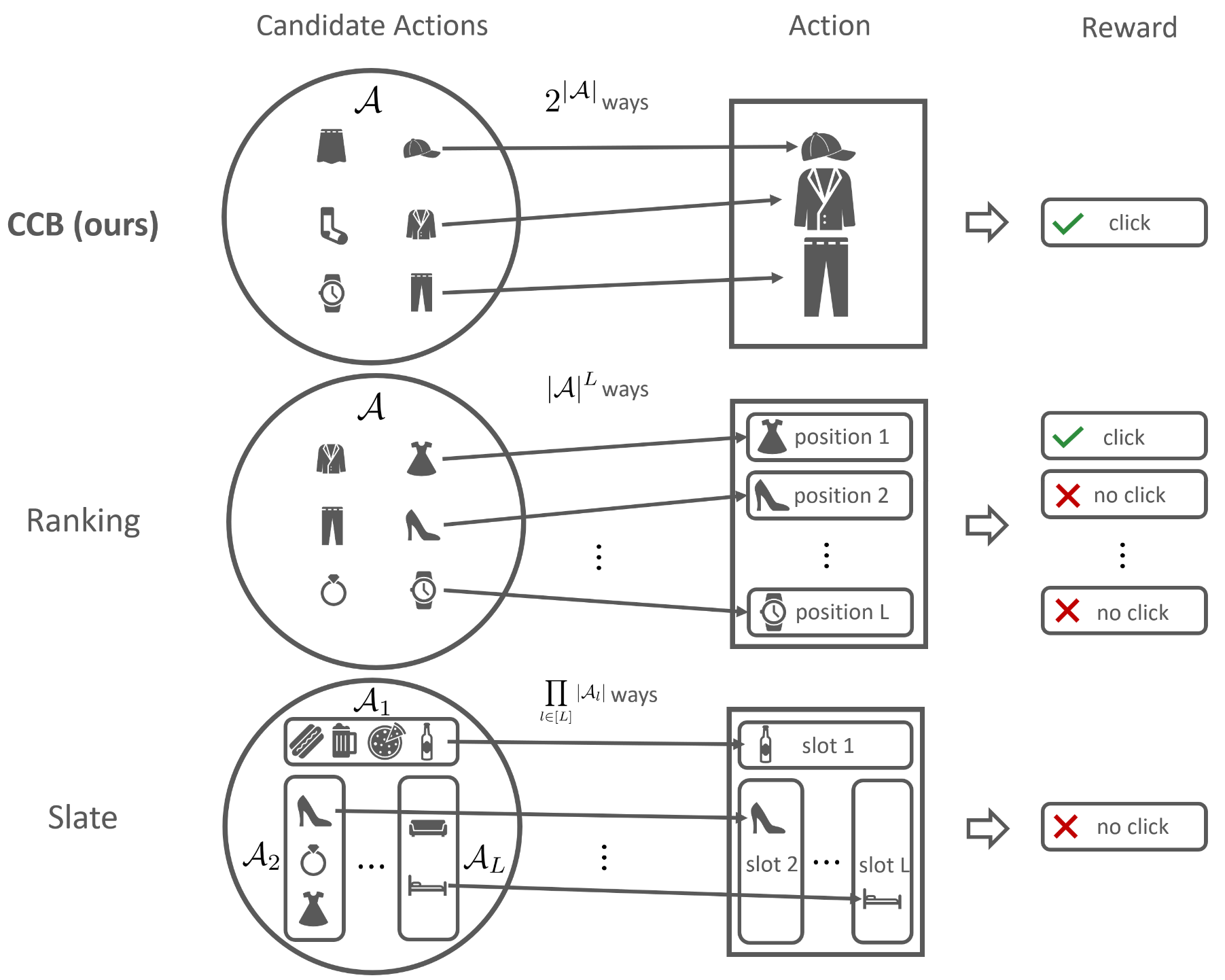}
    \centering
    \Description{}
    \caption{The primary concept of CCB compared to the ranking and slate contextual bandit problems. In terms of the action space, one considers the subset of the candidate actions $\calA$ in CCB, whereas one chooses only one action at each position from the candidate actions $\calA$ in the ranking setting, and one chooses only one action for each slot from each slot-dependent candidate actions $\calA_l$ in the slate setting. Thus, there are $2^{|\calA|}$ action subsets in CCB, which grows exponentially with the number of the candidate actions $|\calA|$, whereas the ranking and slate settings consider $|\calA|^L$ and $\prod_{l \in [L]} |\calA_l|$ actions, respectively, when the length of ranking/slate is $L$. Regarding the observation of rewards, one observes a single reward in CCB and the slate setting, while one observes each reward for each position in the ranking setting.}
    \centering
    \label{fig:concept-comparison-CCB-ranking-slate}
\end{figure}

\section{Extended Related Work}
This section summarizes important related work in detail.

\label{app:related_work}
\subsection{Combinatorial Bandits}
The Combinatorial Multi-Armed Bandits (CMAB) problem, first introduced by \cite{gai2012combinatorial}, is the generalized version of the Multi-Armed Bandits (MAB) where an agent can choose a subset of actions from the action space in each round. In MAB, one aims to find a policy that minimizes regret, a gap between the accumulated expected reward achieved by the optimal policy and the given policy of interest. There are mainly two types of CMAB based on the difference in the reward feedback. The first one is based on semi-bandit feedback~\cite{gai2012combinatorial, chen2013combinatorial, kveton2014matroid, kveton2015tight, combes2015combinatorial, wen2015efficient, wang2018thompson, merlis2019batch, liu2022batch, perrault2020statistical}, in which an agent observes sub-rewards corresponding to each action in the chosen subset of the actions. The second one is full-bandit feedback~\citep{dani2008stochastic, agarwal2021stochastic, rejwan2020top, kuroki2020polynomial}, in which an agent cannot observe the sub-rewards for each action within an action subset but observes a single reward to each action subset. Our work considers the latter full-bandit feedback setting and, in particular, Contextual Combinatorial Bandits (CCB)~\citep{qin2014contextual}, where an agent may choose different action subsets depending on the context.
While CCB has been studied extensively~\citep{qin2014contextual, li2016contextual, chen2018contextual, takemura2021near, zierahn2023nonstochastic}, 
our paper is distinct from the existing literature in that we study Off-Policy Evaluation and Learning (OPE/L) for the first time in the CCB literature. 
While conventional online learning algorithms require extensive exploration particularly when the number of candidate actions is large, our offline approach enables more safe and costless policy evaluation and learning by leveraging logged data.

\subsection{Off-Policy Evaluation and Learning}
Off-Policy Evaluation and Learning (OPE/L) \cite{dudik2014doubly, wang2017optimal, liu2018breaking, farajtabar2018more, su2019cab, su2020doubly, kallus2020optimal, metelli2021subgaussian, saito2022off, saito2023off} 
is known as a safe and ethical alternative to online A/B tests and online learning, as OPE/L aims to evaluate or learn a new policy solely from offline logged data.

In OPE, there are mainly three approaches: model-based, model-free, and hybrid. A typical model-based approach, referred to as Direct Method (DM) \cite{beygelzimer2009offset}, estimates the policy value using the estimated expected reward given context and action ($\hat{q}(x,a)$), learned by an off-the-shell supervised machine learning method. DM has low variance but may incur considerable bias when the estimated reward is inaccurate, which is often the case under partial reward, and covariate shift arises from logged data. 
In contrast, the model-free approach called Inverse Propensity Scoring (IPS) \cite{horvitz1952generalization} re-weighs the observed reward by the density ratio of actions between the logging and target policies. IPS achieves unbiasedness and consistency under common support and unconfoundedness. However, IPS is susceptible to a significant variance caused by a large action space. Although there are some techniques to reduce the scale of importance weight, such as clipping \cite{swaminathan2015counterfactual, su2019cab,su2020doubly} and normalizing \cite{swaminathan2015self}, these transformation incurs nonnegligible bias instead. Doubly Robust (DR) \cite{dudik2011doubly} is a hybrid approach that uses the estimated reward as a control variable and applies importance sampling only on the residual. DR is unbiased either when IPS or DM is unbiased, and also reduces the variance of IPS due to the use of a control variate. However, the variance reduction is limited since DR still uses the large importance weight. Thus, DR suffers from significant variance when encountering large importance weights due to the exponential number of subsets to be considered in the CCB setting.

To deal with large action spaces in the \textit{single} action ($a \in \calA$) setting, \cite{saito2022off, saito2023off} uses auxiliary information, such as action embeddings, in OPE. In particular, Marginalized Inverse Propensity Scoring (MIPS) \cite{saito2022off} applies importance sampling on the action embedding space ($\mathcal{E}$):
\begin{align*}
    \mips := \meanN \frac{p(e_i|x_i, \pi)}{p(e_i|x_i, \pi_0)} r_i = \meanN w(x_i, e_i) r_i, 
\end{align*}
where $e \in \calE \subseteq \mR^{d_e}$ is an action embedding, sampled from a conditional distribution $p(e|x, a)$. $p(e|x, \pi) := \sumA \pi(a|x) p(e|x, a)$ is the probability of choosing action embedding $e$ given context $x$ induced by policy $\pi$, and $w(x, e) = p(e|x, \pi) / p(e|x, \pi_0)$ is the marginalized importance weight. 
By using the marginalized importance weight, we can expect a significant variance reduction as $|\mathcal{E}|$ becomes small compared to the original action space of $|\mathcal{A}|$.
However, the unbiasedness of MIPS depends on the satisfaction of the \textit{no direct effect} assumption, which requires that action embedding $e$ mediates all the effect of action $a$ on reward $r$ (i.e., $a \indep r |x, e$). Thus, when we naively apply MIPS to the CCB settings, the action subset's embedding can be the concatenating of action embeddings, which can be high-dimensional, leading to a high variance issue as the original IPS has. 
To tackle the issue associated with high-dimensional embeddings, Off-policy evaluation estimator based on the Conjunct Effect Model (OffCEM) \cite{saito2023off} defines the estimator leveraging the clustered action space:
\begin{align*}
    \offcem := \meanN \left\{ w(x_i, \phi(x_i, a_i)) (r_i - \hat{f}(x_i, a_i)) + \mE_{\pi(a|x_i)} [\hat{f}(x_i, a)] \right\}, 
\end{align*}
where $\phi: \calX \times \calA \to \calC$ is a clustering function, $w(x, c) := \pi(c|x)/\pi_0(c|x)$ is the importance weight in the cluster space $\calC$, $\pi(c|x) := \sumA \pi(a|x) \ind \{ \phi(x, a) = c \}$, and $\hat{f}: \calX \times \calA \to \mR$ is the estimator of the expected reward function. 
By applying cluster-wise importance weight, OffCEM reduces the variance of naive IPS. Moreover, OffCEM mitigates the bias of MIPS caused by the no-direct assumption, by introducing the regression estimation about the residual effect. In particular, OffCEM is unbiased under the local correctness ($\phi(x, a) = \phi(x, b) \implies \Delta_q(x, a, b) = \Delta_{\hat{f}}(x, a, b) \quad \forall x \in \calX, a, b \in \calA$) holds. 
In this paper, we consider a way to apply OffCEM in the CCB setting, where clustering of action subsets is often challenging. Specifically, we cluster action subsets by identifying important (main) actions, leveraging the newly introduced factored action space in CCB. Introducing the main actions in action subsets enables us to interpret the clusters of action subsets readily. Moreover, we provided a detailed data-driven procedure to obtain the main actions in a given action subset so that we can minimize the MSE of the proposed estimator in Section~\ref{subsec:hyperparameter-tuning} with empirical guarantees on the performance of the optimization procedure in Section~\ref{subsec:synthetic ope}. Furthermore, we extended the OPCB estimator to the OPL in CCB to learn a better policy where \cite{saito2023off} only considered OPE in a single action space.

In OPL, there are two main approaches: regression-based approaches and policy gradient approaches. Regression-based approaches estimate the expected reward and then use it to define a new policy, such as the epsilon-greedy or softmax policies. Like DM in OPE, regression-based approaches often suffer from bias when the regression is inaccurate. In contrast, policy gradient approaches estimate the gradient of the parameterized policy value and use it to update the parameter of the policy via the iterative gradient ascent. IPS is the typical choice for the estimator of the policy gradient but it suffers from high variance. Our approach effectively reduces the variance of the policy gradient estimation by leveraging the OPCB estimator in CCB.

\subsection{Off-Policy Evaluation for Ranking and Slate Policies}
Ranking and slate settings are similar settings to CCB. In OPE for ranking policies, we consider the ordered list of the candidate items as the action, where the number of elements in the ordered list is predefined, and we can observe the position-wise rewards corresponding to each sub-action. Most of the estimators for ranking OPE ~\citep{li2018offline, mcinerney2020counterfactual, kiyohara2022doubly, kiyohara2023off} are based on assumptions about how the users behave towards the ranked items, such as independently or from top to bottom. However, when we consider a combination of the candidate actions, the order of the chosen actions in an action subset does not matter, and we cannot observe the corresponding sub-reward to each unique action in an action subset. Thus, the existing estimators in ranking OPE are not applicable to evaluating action subsets.

In OPE for slate bandit policies, we aim to evaluate the slate policies where the slate consists of $L$ slot actions, whose action set $\mathcal{A}_l$ may differ across different slots (e.g., in online ads, $\mathcal{A}_1$ can be the candidate set of slogans, $\mathcal{A}_2$ can be that of key visuals). Similar to the CCB settings, we cannot observe the slot-level rewards. To deal with the lack of observations of sub-rewards, PseudoInverse (PI) estimators~\citep{swaminathan2017off, vlassis2021control} assume that the total reward is linearly decomposable to sub-rewards corresponding to each slot (called \textit{linearity} assumption). This reduces the importance weight from the product to the sum of slot-wise importance weight. However, PI incurs considerable bias when the linearity assumption often does not hold, which is often the case in practice due to the interaction effects among the elements in a slate. Another estimator called Latent IPS (LIPS)~\citep{kiyohara2024slate}, which employs importance weight defined in a latent slate abstract space to mitigate the estimator's variance. However, it completely disregards the effect of the slate that the latent abstraction cannot capture, while our estimator is able to take the residual effect into account via regression.

\section{Connection and Comparison to Slate OPE Estimators} \label{app:slate_ope} 
In this section, we further explore the connection and comparison of CCB settings to slate contextual bandit including formal definitions of the estimators in slate contextual bandits settings.
Recall that, in CCB settings, we aim to evaluate a target policy $\pi: \calX \to \Delta(\calS)$ that selects a subset $s \in \calS = 2^{\calA}$ of the action space $\calA$. This problem can be reformulated by introducing a binary indicator, $m_l \in \calM_l = \{ \emptyset, a_l \}$, that represents if each candidate action $a_l \in \calA$ (e.g., bed, drawer, mirror) is in the selected subset $s$ (i.e., $a_l \in s$ or $a_l \notin s$). This reformulation allows us to consider OPE with factored action space $\calM := \prod_{l \in [L]} \calM_l = \prod_{l \in [L]} \{ \emptyset, a_l \}$, which is instrumental in focusing on the specific primary items in the chosen set $s$, instead of an action subset space $\calS$. In contrast, in slate OPE, we aim to evaluate a target policy $\pi: \calX \to \Delta(\prod_{l \in [L]}A_l)$ that chooses a list of actions $a = (a_1, \cdots, a_L) \in \prod_{l \in [L]} \calA_l$ where we always choose one action $a_l \in \calA_l$ for each slot $l$ from the corresponding action set $\calA_l := \{ a_{l,1}, a_{l,2}, \cdots, a_{l,|\calA_l|} \}$. 

To bridge the gap between these two distinct setups, we can indeed come up with a more general formulation that includes them as special cases by introducing a more general action space $\tilde{\calM} := \prod_{l \in [L]} \tilde{\calM}_l$ where, in each action space $\tilde{\calM}_l$, the agent can choose $\tilde{m}_l \in \tilde{\calM}_l = \{ \emptyset \} \cup \calA_l$, which is either an empty set $\emptyset$ or the element $a_l \in \calA_l$. This allows us to consider the two-step decision process to decide what to include in a recommendation, e.g.,  in online ads, we first decide (1) which component (e.g., title, discount rate, slogan, etc.) to include in the ads, and then decide (2) which title to use (e.g., title 1,  title 2, etc.) if the title is included in the combination. Under this general formulation of CCB and slate contextual bandits, the OPCB estimator in Eq.~\eqref{eq:K-th-order-OPCB} can readily be applied to this generalized setting. The following paragraphs cover the comparison and connection of OPCB and two primary estimators for OPE in slate contextual bandits under the newly introduced general formulation of CCB and slate settings.

\paragraph{PseudoInverse (PI) estimator} 
The PI estimator~\citep{swaminathan2017off, vlassis2021control} considers applying the element-wise importance weight, i.e., $w(x, \tilde{m}_l) = \pi(\tilde{m}_l|x) / \pi_0(\tilde{m}_l|x)$ to estimate the policy value as follows.
\begin{align*}
    \PI := \meanN \left( \sumL w(x_i, \tilde{m}_{i, l}) - L + 1 \right) r_i, 
\end{align*}
PI reduces the variance of IPS, as the importance weight is reduced from $\left( \prod_{l=1}^L w(x, \tilde{m}_l) \right)$ to $\left( \sum_{l=1}^L w(x, \tilde{m}_l) \right)$. PI is also unbiased when the expected reward is linearly decomposable as $q(x, \tilde{m}) = \sum_{l=1}^L \phi_l(x, \tilde{m}_l)$, where $\{ \phi_l \}_{l=1}^L$ is some (intrinsic) element-wise reward function. However, this \textit{linearity} assumption is often unrealistic in the combinatorial-slate setting because the reward always increases if the chosen elements increase. For example, in interior design, having too much furniture in a bedroom can negatively affect the reward. However, this linearity assumption implies that using all the elements is always the best. The proposed OPCB, in contrast, avoids such an unrealistic linearity assumption by taking the interactions among candidate actions into account.

To further show an interesting connection between OPCB and PI, let us introduce the following 1-main-element OPCB estimator:
\begin{align*}
    \lopca := \meanN \Bigg\{ w(x_i, \tilde{m}_{i, l}) \left( r_i - \hat{f}_l(x_i, \tilde{m}_i) \right) + \mE_{\pi(\tilde{m}|x_i)} \left[ \hat{f}_l(x_i, \tilde{m}) \right] \Bigg\},
\end{align*}
where the \textit{main effect} $g(x, \phi(\tilde{m})) = g(x, \tilde{m}_l)$ corresponds to the element-wise expected reward of candidate action $\tilde{m}_l$ and $\hat{f}_l(\cdot, \cdot)$ considers the residual effect $h(x, \tilde{m})$, including the interaction with other actions. Therefore, by taking the average of $\hat{V}^l_{\text{OPCB}}$ over $l \in [L]$, we can reproduce the linearity structure of PI while also taking the interaction as $q(x, \tilde{m}) = \sum_{l=1}^L \phi(x, \tilde{m}_l) + H(x, \tilde{m})$, where $H(x, \tilde{m})$ is the interaction effect. This estimator is strictly more general than PI and overcomes the limitation of introducing linearity assumption.
We call this estimator $\hat{V}^{\text{OPCB-PI}}(\pi; \calD) := \sum_{l=1}^L \hat{V}^l_{\text{OPCB}}(\pi; \calD) / L$ as the OPCB-PI estimator.

\paragraph{Latent IPS (LIPS) estimator}
LIPS~\citep{kiyohara2024slate} represents another estimator for slate OPE, utilizing the importance weight in a latent slate abstract space to alleviate the high variance.
\begin{align*}
    \lips := \meanN w \left( x_i, \phi(\tilde{m}_i) \right)  r_i, 
\end{align*}
where $\phi: \tilde{\calM} \to \calZ$ maps elements from the general action space $\tilde{\calM}$ to the latent space $\calZ$ and $ w(x, \phi(\tilde{m})) := \pi(\phi(\tilde{m})|x) / \pi_0(\phi(\tilde{m})|x)$ is the latent importance weight. Although the low variance of LIPS is favorable, it is not always guaranteed that we can obtain a good abstraction of a slate. Moreover, interpreting latent space poses a challenge, particularly in practical applications. In contrast, the OPCB estimator constantly achieves low variance and provides a clear interpretation of evaluation due to the decomposition of the expected reward. Furthermore, the OPCB estimator captures the residual effect $h(x, \tilde{m})$ that LIPS ignored, further reducing bias. Hence, OPCB overcomes the limitations of not only conventional OPE estimators but also those specific to slate OPE.

\section{Theoretical Analysis of the Policy Gradient Estimated by the OPCB Estimator}
\label{app-theoretical-analysis-OPCB-PG}
In this section, we provide a detailed theoretical analysis of the gradient of the OPCB estimator used for OPL in CCB proposed in Section \ref{sec:opl}. Overall, the bias and variance of the gradient of the OPCB estimator are similar to those of the OPCB estimator for OPE. Before providing the theoretical properties of the gradient of the OPCB estimator, we introduce an additional condition required for the unbiasedness of the estimator.
\begin{assumption}[Full Support w.r.t the Main Actions]
    \label{ass.full-support-main-actions}
    The logging policy $\pi_0$ is said to satisfy full support w.r.t the set of the main actions $\phi(m)$ if $\pi_0(\phi(m)\,|\,x)> 0$ for all $m \in \calM$ and $x \in \calX$. 
\end{assumption}
Then under the newly introduced condition, we provide the unbiasedness of the gradient of the OPCB estimator in the following theorem.
\begin{theorem}[Unbiasedness of the gradient of the OPCB estimator]
    \label{thm-unbiasedness-OPCB-PG}
    Under Conditions \ref{ass.full-support-main-actions} and \ref{ass.local_correctness_Kth}, the gradient of the OPCB estimator is unbiased. i.e., $\mE_{\calD}[\nabla_{\zeta} \Kopcazeta] = \nabla_{\zeta} V(\pi_{\zeta})$. See Appendix \ref{proof-unbiasedness-OPCB-PG} for the proof.
\end{theorem}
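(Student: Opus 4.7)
The plan is to mirror the proof of Theorem~\ref{thm.unbiasedness_OPCB_Kth} (unbiasedness of OPCB for OPE), but applied to the policy-gradient object rather than the value. First, I would invoke Condition~\ref{ass.local_correctness_Kth} to establish the decomposition $q(x,m) = g(x,\phi(m)) + \hat{f}(x,m)$ for some function $g$ that depends on $m$ only through $\phi(m)$. This follows because conditional pairwise correctness says that whenever $\phi(m) = \phi(m')$, the quantity $q(x,m) - \hat{f}(x,m) = q(x,m') - \hat{f}(x,m')$, so $q(x,m)-\hat{f}(x,m)$ is constant on each level set of $\phi$ and may be named $g(x,\phi(m))$.

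Next, I would substitute this decomposition into the true policy gradient $\nabla_{\zeta} V(\pi_{\zeta}) = \mE_{p(x)\pi_{\zeta}(m|x)}[q(x,m)\nabla_{\zeta}\log\pi_{\zeta}(m|x)]$, splitting it into a main-effect term involving $g(x,\phi(m))$ and a residual term involving $\hat{f}(x,m)$. The residual term immediately matches the second summand of Eq.~\eqref{eq:opcb-pg} after taking $\mE_{\calD}$, so nothing more needs to be done there. For the main-effect term, I would use the fact that $g(x,\phi(m))$ depends on $m$ only through $\phi(m)$ to collapse the expectation over $\pi_{\zeta}(m|x)$ into one over the marginal $\pi_{\zeta}(\phi(m)|x)$, using the identity $\sum_{m:\phi(m)=\bar\phi}\pi_{\zeta}(m|x)\nabla_{\zeta}\log\pi_{\zeta}(m|x) = \nabla_{\zeta}\pi_{\zeta}(\bar\phi|x) = \pi_{\zeta}(\bar\phi|x)\nabla_{\zeta}\log\pi_{\zeta}(\bar\phi|x)$, which turns the main-effect term into $\mE_{p(x)\pi_{\zeta}(\phi(m)|x)}[g(x,\phi(m))\nabla_{\zeta}\log\pi_{\zeta}(\phi(m)|x)]$.

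Then I would apply the importance-weight change of measure from $\pi_{\zeta}$ to $\pi_0$ on the main-action marginal, which is well-defined by Condition~\ref{ass.full-support-main-actions} (full support w.r.t. the main actions), to rewrite the main-effect term as an expectation under $p(x)\pi_0(m|x)$ with the weight $\pi_{\zeta}(\phi(m)|x)/\pi_0(\phi(m)|x)$. Finally, I would replace $g(x,\phi(m))$ by $r - \hat{f}(x,m)$ under the outer expectation using the tower property $\mE_{p(r|x,m)}[r - \hat{f}(x,m)] = q(x,m) - \hat{f}(x,m) = g(x,\phi(m))$, producing exactly the first summand of Eq.~\eqref{eq:opcb-pg} in expectation. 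Combining the two terms and averaging over the i.i.d.\ sample $\calD$ yields $\mE_{\calD}[\nabla_{\zeta}\Kopcazeta] = \nabla_{\zeta} V(\pi_{\zeta})$.

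The main obstacle is the step that collapses an $m$-level score function into a $\phi(m)$-level score function: one has to be careful that $\nabla_{\zeta}\log\pi_{\zeta}(m|x)$ and $g(x,\phi(m))$ interact so that the sum over $m$ within each fibre of $\phi$ turns into a single $\nabla_{\zeta}\log\pi_{\zeta}(\phi(m)|x)$ term. This is the only place where Condition~\ref{ass.local_correctness_Kth} enters nontrivially (by letting $g$ depend on $m$ only through $\phi(m)$), and it is also where Condition~\ref{ass.full-support-main-actions} must be used, since the subsequent importance-ratio change of measure on the marginal $\pi_0(\phi(m)|x)$ requires strictly positive support there; the analogous common-support-only condition used for Theorem~\ref{thm.unbiasedness_OPCB_Kth} is insufficient because $\nabla_{\zeta}\log\pi_{\zeta}(\phi(m)|x)$ need not vanish where $\pi_{\zeta}(\phi(m)|x)=0$.
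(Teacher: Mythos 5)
Your proposal is correct and follows essentially the same route as the paper's proof: your function $g(x,\phi(m)) := q(x,m)-\hat{f}(x,m)$, well defined on fibres of $\phi$ by Condition~\ref{ass.local_correctness_Kth}, is exactly the paper's $\Bar{\Delta}(x,\phi(m))$, and your fibre-collapse identity $\sum_{m:\phi(m)=\bar{\phi}}\pi_{\zeta}(m|x)\nabla_{\zeta}\log\pi_{\zeta}(m|x)=\nabla_{\zeta}\pi_{\zeta}(\bar{\phi}|x)$ together with the change of measure and cancellation of $\pi_0(\phi(m)|x)$ are the same manipulations the paper performs, merely run from $\nabla_{\zeta}V(\pi_{\zeta})$ toward $\mE_{\calD}[\nabla_{\zeta}\Kopcazeta]$ instead of the reverse. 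Your closing remark on why full support (Condition~\ref{ass.full-support-main-actions}) rather than mere common support is needed for the gradient is a correct and worthwhile observation that the paper does not spell out.
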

Theorem \ref{thm-unbiasedness-OPCB-PG} suggests that the gradient of the OPCB estimator is unbiased if the OPCB estimator for OPE is unbiased and full support for the logging policy is satisfied. Beyond the unbiasedness, when Condition \ref{ass.local_correctness_Kth} is violated, the gradient of the OPCB estimator incurs the following bias.
\begin{theorem}[Bias of the gradient of the OPCB estimator]    
    \label{thm-bias-OPCB-PG}
    Under Condition \ref{ass.full-support-main-actions}, the gradient of the OPCB estimator has the following bias.
    \begin{align}
        \bias\left( \nabla_{\zeta} \Kopcazeta \right) 
        &= \mE_{p(x) \pi_{\zeta}(\phi(\Bar{m})|x)} \Bigg[ \sum_{\substack{m < m':\\ \phi(m) = \phi(m') = \phi(\Bar{m})}} \underbrace{\pi_0(m|x, \phi(\Bar{m})) \pi_0(m'|x, \phi(\Bar{m}))}_{\text{(i)}} \notag \\
        \times & \underbrace{\Bigg( \frac{\pi_{\zeta}(m'|x, \phi(\Bar{m})) }{\pi_0(m'|x, \phi(\Bar{m}))} \nabla_{\zeta} \log \pi_{\zeta}(m', \phi(\Bar{m})|x)  - \frac{\pi_{\zeta}(m|x, \phi(\Bar{m}))}{\pi_0(m|x, \phi(\Bar{m}))} \nabla_{\zeta} \log \pi_{\zeta}(m, \phi(\Bar{m})|x)  \Bigg)}_{\text{(ii)}}
        \underbrace{\left( \Delta_q(x, m, m') - \DeltafKth \right)}_{\text{(iii)}}  \Bigg] \notag , 
    \end{align}
    where $\pi_{\zeta}(m\,|\,x, \phi(\Bar{m})) = \pi_{\zeta}(m\,|\,x) \ind \{\phi(m) = \phi(\Bar{m})\} / \pi_{\zeta}(\phi(\Bar{m})\,|\,x)$, $\pi_{\zeta}(m, \phi(\Bar{m})|x) = \pi_{\zeta}(\phi(\Bar{m})|x) \pi_{\zeta}(m|x, \phi(\Bar{m}))$ and $m, m' \in \calM$. See Appendix \ref{proof-bias-OPCB-PG} for the proof.
\end{theorem}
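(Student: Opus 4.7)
The approach is to mirror the proof of Theorem~\ref{thm.bias_OPCB_Kth}, with the gradient factor handled by a standard log-derivative identity. First I would compute $\mE_\calD[\nabla_\zeta \Kopcazeta]$: since the data are i.i.d.\ and $\mE[r\,|\,x,m]=q(x,m)$, this equals
\begin{align*}
\mE_{p(x)\pi_0(m|x)}\!\left[w(x,\phi(m))\DeltaqfKth\,\nabla_\zeta\log\pi_\zeta(\phi(m)|x)\right]+\mE_{p(x)\pi_\zeta(m|x)}\!\left[\hat f(x,m)\nabla_\zeta\log\pi_\zeta(m|x)\right].
\end{align*}
Writing $\nabla_\zeta V(\pi_\zeta)=\mE_{p(x)\pi_\zeta(m|x)}[q(x,m)\nabla_\zeta\log\pi_\zeta(m|x)]$ and splitting $q=\hat f+(q-\hat f)$, the $\hat f$-regression contributions cancel, so the bias reduces to
\begin{align*}
\mE_{p(x)\pi_0(m|x)}\!\left[w(x,\phi(m))\DeltaqfKth\nabla_\zeta\log\pi_\zeta(\phi(m)|x)\right]-\mE_{p(x)\pi_\zeta(m|x)}\!\left[\DeltaqfKth\nabla_\zeta\log\pi_\zeta(m|x)\right].
\end{align*}

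Second, conditioning on $x$ and partitioning $m$ by $c:=\phi(m)$, I would factor $\pi_0(m|x)=\pi_0(c|x)\pi_0(m|x,c)$ and $\pi_\zeta(m|x)=\pi_\zeta(c|x)\pi_\zeta(m|x,c)$ to rewrite the bias as $\mE_{p(x)}\sum_c\pi_\zeta(c|x)\cdot R(x,c)$ where
\begin{align*}
R(x,c) := \nabla_\zeta\log\pi_\zeta(c|x)\!\!\sum_{m:\phi(m)=c}\!\!\pi_0(m|x,c)\DeltaqfKth \;-\; \!\!\sum_{m:\phi(m)=c}\!\!\pi_\zeta(m|x,c)\nabla_\zeta\log\pi_\zeta(m|x)\DeltaqfKth .
\end{align*}
The only PG-specific ingredient beyond Theorem~\ref{thm.bias_OPCB_Kth} enters here, as the log-derivative collapse $\sum_{m:\phi(m)=c}\pi_\zeta(m|x,c)\nabla_\zeta\log\pi_\zeta(m|x)=\nabla_\zeta\log\pi_\zeta(c|x)$, which holds because for $\phi(m)=c$ one has $\pi_\zeta(m|x,c)\nabla_\zeta\log\pi_\zeta(m|x)=\nabla_\zeta\pi_\zeta(m|x)/\pi_\zeta(c|x)$, and the sum telescopes. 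Condition~\ref{ass.full-support-main-actions} ensures $w(x,\phi(m))$ and the ratios $\pi_\zeta(m|x,c)/\pi_0(m|x,c)$ are well-defined wherever $\pi_\zeta$ has support.

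Third, I would apply the antisymmetrization trick exactly as in Theorem~\ref{thm.bias_OPCB_Kth}. For $m,m'$ with $\phi(m)=\phi(m')=c$, define
\begin{align*}
A(m,m')&:=\pi_0(m|x,c)\pi_0(m'|x,c)\!\left[\tfrac{\pi_\zeta(m'|x,c)}{\pi_0(m'|x,c)}\nabla_\zeta\log\pi_\zeta(m'|x)-\tfrac{\pi_\zeta(m|x,c)}{\pi_0(m|x,c)}\nabla_\zeta\log\pi_\zeta(m|x)\right],\\
B(m,m')&:=\DeltaqfKth-\DeltaqfprimeKth.
\end{align*}
Both are antisymmetric and vanish on the diagonal, so $\sum_{m<m'}A B=\tfrac12\sum_{m,m'}A B$. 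Expanding the product into four sums $T_1,T_2,T_3,T_4$ and using $\sum_m\pi_0(m|x,c)=1$ together with the log-derivative collapse, one checks by a dummy-index swap that $T_1=T_4$ and $T_2=T_3$, so $\sum_{m,m'}AB=2(T_1-T_2)$ and the restricted sum matches $R(x,c)$ exactly. Finally, identifying $c=\phi(\bar m)$ with $\bar m$ marginalized under $\pi_\zeta(\phi(\bar m)|x)$, using $\nabla_\zeta\log\pi_\zeta(m,\phi(\bar m)|x)=\nabla_\zeta\log\pi_\zeta(m|x)$ when $\phi(m)=\phi(\bar m)$, and $\DeltaqfKth-\DeltaqfprimeKth=\Delta_q(x,m,m')-\DeltafKth$ yields the claimed form. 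The main obstacle is the bookkeeping in this last step: correctly matching the four-term expansion to $R(x,c)$ hinges on the coincidences $T_1=T_4$ and $T_2=T_3$, which in turn depend on the log-derivative collapse of Step~2, so this is where care is needed to ensure the PG-specific gradient factor enters the final pairwise expression in precisely the form stated.
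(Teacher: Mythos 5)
Your proof is correct, but it takes a genuinely different route from the paper's. The paper's proof is a two-line reduction: since the data distribution does not depend on $\zeta$, it interchanges $\nabla_{\zeta}$ with $\mE_{\calD}$ to obtain $\bias\bigl(\nabla_{\zeta}\Kopcazeta\bigr)=\nabla_{\zeta}\,\bias\bigl(\Kopcazeta\bigr)$, substitutes the already-proved pairwise formula of Theorem~\ref{thm.bias_OPCB_Kth} (with $\pi=\pi_{\zeta}$), and pushes the gradient through the product $\pi_{\zeta}(\phi(\Bar{m})|x)\,\pi_{\zeta}(m|x,\phi(\Bar{m}))/\pi_0(m|x,\phi(\Bar{m}))=\pi_{\zeta}(m,\phi(\Bar{m})|x)/\pi_0(m|x,\phi(\Bar{m}))$ via the log-derivative identity, which is exactly how the joint score $\nabla_{\zeta}\log\pi_{\zeta}(m,\phi(\Bar{m})|x)$ appears in term (ii). You instead rebuild the whole argument from first principles: direct computation of $\mE_{\calD}[\nabla_{\zeta}\Kopcazeta]$, cancellation of the $\hat f$-terms, clustering by $c=\phi(m)$, the collapse $\sum_{m:\phi(m)=c}\pi_{\zeta}(m|x,c)\nabla_{\zeta}\log\pi_{\zeta}(m|x)=\nabla_{\zeta}\log\pi_{\zeta}(c|x)$, and an explicit four-term verification of the antisymmetrization identity (which the paper delegates to Lemma B.1 of Saito et al.\ in the proof of Theorem~\ref{thm.bias_OPCB_Kth}). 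Your identities all check out: $T_1=T_4$ and $T_2=T_3$ follow from dummy-index swaps and $\sum_m\pi_0(m|x,c)=1$, $T_1-T_2$ reproduces $R(x,c)$ via the collapse, and $\DeltaqfKth-\DeltaqfprimeKth=\Delta_q(x,m,m')-\DeltafKth$ together with $\nabla_{\zeta}\log\pi_{\zeta}(m,\phi(\Bar{m})|x)=\nabla_{\zeta}\log\pi_{\zeta}(m|x)$ on $\{\phi(m)=\phi(\Bar{m})\}$ gives the stated form. The paper's route buys brevity and reuse of Theorem~\ref{thm.bias_OPCB_Kth}; yours buys a self-contained derivation that makes transparent exactly where the score functions enter, at the cost of redoing the pairwise bookkeeping. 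Had you noticed that the PG estimator is literally the $\zeta$-gradient of the OPE estimator evaluated at $\pi_{\zeta}$, you could have shortened the argument considerably.
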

The first and third terms of the bias of the gradient of the OPCB estimator are identical to the ones of the bias of the OPCB estimator in Section \ref{sec:theoretical-analysis}. The only difference between the bias terms of the gradient of the OPCB estimator and the OPCB estimator lies in the second term (ii), where the second term of the bias of the gradient of the OPCB estimator has additional term $\nabla_{\zeta} \log \pi_{\zeta}(m, \phi(\Bar{m})|x)$. Lastly, in contrast to the bias analysis, we provide the variance of the gradient of the OPCB estimator as follows.
\begin{theorem}[Variance of the gradient of the OPCB estimator]
    \label{thm-variance-OPCB-PG}
    Under Conditions \ref{ass.full-support-main-actions} and \ref{ass.local_correctness_Kth}, the variance of the OPCB estimator is as follows.
    \begin{align}
        n \mV_{\calD} \left[ \nabla_{\zeta} \Kopcazeta \right] &= \mE_{p(x) \pi_0(m|x)} \left[\left(\frac{\pi_{\zeta}(\phi(m)|x)}{\pi_0(\phi(m)|x)}\right)^2 \chi_{\zeta}(x, \phi(m)) \sigma^2(x, m)\right] \notag \\
        +& \mE_{p(x)} \left[ \mV_{\pi_0(m|x)} \left[\frac{\pi_{\zeta}(\phi(m)|x)}{\pi_0(\phi(m)|x)} \DeltaqfKth s_{\zeta}(x, \phi(m)) \right] \right] 
        + \mV_{p(x)} \left[ \mE_{\pi_{\zeta}(m|x)} \left[q(x, m) s_{\zeta}(x, \phi(m)) \right] \right] \label{eq:variance-OPCB-PG}, 
    \end{align}
    where $\DeltaqfKth := q(x, m) - \fKth(x, m)$ is the estimation error of $\fKth(x, m)$ against the expected reward function $q(x, m)$, $s_{\zeta}(x, \phi(m)) := \nabla_{\zeta} \log \pi_{\zeta}(\phi(m)|x)$, and $\chi_{\zeta}^{(j)}(x, \phi(m)) := s_{\zeta}^{(j)}(x, \phi(m))^2$ where $\chi_{\zeta}^{(j)}(x, \phi(m))$ and $s_{\zeta}^{(j)}(x, \phi(m))$ are the $j$-th elements of the vector $\chi_{\zeta}(x, \phi(m))$ and $s_{\zeta}(x, \phi(m))$, respectively. See Appendix \ref{proof-variance-OPCB-PG} for the proof.
\end{theorem}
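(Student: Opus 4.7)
The plan is to proceed by a two-step application of the law of total variance, exactly as in the proof of Theorem \ref{thm.variance_OPCB_Kth}, adapted to accommodate the score-function factors that appear in the policy-gradient form. Since $\calD$ consists of $n$ i.i.d.\ tuples, it suffices to analyze the variance of the per-sample summand
\begin{align*}
Z := \frac{\pi_{\zeta}(\phi(m)\,|\,x)}{\pi_0(\phi(m)\,|\,x)} \bigl(r - \hat{f}(x,m)\bigr)\, s_{\zeta}(x,\phi(m)) \; + \; B(x),
\end{align*}
where $B(x) := \mE_{\pi_{\zeta}(m|x)}[\hat{f}(x,m)\, \nabla_{\zeta} \log \pi_{\zeta}(m\,|\,x)]$ is the regression-based control variate, which depends on $x$ only; the full variance then satisfies $n\,\mV_{\calD}[\nabla_{\zeta}\Kopcazeta] = \mV[Z]$.

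First I would condition on $(x,m)$. Because $B(x)$ is a function of $x$ alone and the remaining summand is linear in $r$, $\mV[Z\,|\,x,m]$ isolates the intrinsic reward noise and yields $w(x,\phi(m))^2\, \chi_{\zeta}(x,\phi(m))\, \sigma^2(x,m)$; integrating against $p(x)\pi_0(m|x)$ produces the first term of Eq.~\eqref{eq:variance-OPCB-PG}. Next I would apply the law of total variance conditional on $x$: since $B(x)$ is constant in $m$, it vanishes under $\mV_{\pi_0(m|x)}[\cdot]$, leaving exactly $\mV_{\pi_0(m|x)}[w(x,\phi(m))\, \DeltaqfKth\, s_{\zeta}(x,\phi(m))]$, the second term of Eq.~\eqref{eq:variance-OPCB-PG}. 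For the outer variance over $p(x)$, I would use the fact that, under Conditions \ref{ass.full-support-main-actions} and \ref{ass.local_correctness_Kth}, the conditional mean $\mE[Z\,|\,x]$ collapses to the true conditional policy gradient, which is exactly the identity proved in the course of establishing Theorem \ref{thm-unbiasedness-OPCB-PG}.

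The main obstacle is this last reduction. The importance-weighted residual $\mE_{\pi_0(m|x)}[w(x,\phi(m))\, \DeltaqfKth\, s_{\zeta}(x,\phi(m))]$ and the control variate $B(x)$ must be combined and rewritten in a form involving only $\pi_{\zeta}$. Invoking Condition \ref{ass.local_correctness_Kth} shows that within each main-action class $\phi(m)=c$, the residual $q(x,m)-\hat{f}(x,m)$ is constant in $m$, so the $\pi_0$-weighted average over supplemental actions can be absorbed into an expectation under $\pi_{\zeta}(m|x,c)$. Combining this with the score decomposition $\nabla_{\zeta}\log \pi_{\zeta}(m|x) = s_{\zeta}(x,\phi(m)) + \nabla_{\zeta}\log \pi_{\zeta}(m|x,\phi(m))$ makes the intra-cluster contributions telescope, leaving the stated form $\mE_{\pi_{\zeta}(m|x)}[q(x,m)\, s_{\zeta}(x,\phi(m))]$. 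Carefully tracking these cancellations per coordinate of the vector-valued score is the most delicate step and closely parallels the analogous manipulation in the proof of Theorem \ref{thm.variance_OPCB_Kth}.
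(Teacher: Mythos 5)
Your proposal is correct and follows essentially the same route as the paper's proof: reduce to a single i.i.d.\ summand, apply the law of total variance first conditioning on $(x,m)$ to extract the reward-noise term, then conditioning on $x$ (where the control variate $B(x)$ drops out of the inner variance), and finally identify the conditional mean with the quantity computed in the unbiasedness proof of Theorem~\ref{thm-unbiasedness-OPCB-PG} via conditional pairwise correctness. The only cosmetic difference is that you phrase the last reduction through an explicit score decomposition $\nabla_{\zeta}\log\pi_{\zeta}(m|x)=s_{\zeta}(x,\phi(m))+\nabla_{\zeta}\log\pi_{\zeta}(m|x,\phi(m))$, whereas the paper works directly with the cluster-level function $\Bar{\Delta}(x,\phi(m))$; both yield the same cancellation.
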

Theorem \ref{thm-variance-OPCB-PG} demonstrates that the variance of the gradient of the OPCB estimator is dependent on the importance weight on the main elements $\pi_{\zeta}(\phi(m)|x) / \pi_0(\phi(m)|x)$ similar to the variance of the OPCB estimator in Section \ref{sec:theoretical-analysis}, which is instrumental in the variance reduction compared to to the gradient of the IPS and DR.

\section{Omitted Proofs}
In this section, we provide proof of the proposed estimator's unbiasedness, bias, and variance. Before starting the proof, we first introduce the different interpretations of Condition~\ref{ass.local_correctness_Kth}, which will be used in the proof. Recall that Conditional Pairwise Correctness (Condition~\ref{ass.local_correctness_Kth}) requires the following:
\begin{align}
    \phi(m) = \phi(m') \implies \Delta_{q}(x, m, m') = \Delta_{\hat{f}}(x, m, m') \quad \forall x \in \calX, m, m' \in \calM \label{eq:conditional-piecewise-correctness}, 
\end{align}
The condition \eqref{eq:conditional-piecewise-correctness} means that the regression model $\hat{f}$ should preserve the relative value difference of the expected rewards between $m$ and $m'$ if the set of the main actions $\phi(m)$ and $\phi(m')$ are identical. Then the condition \eqref{eq:conditional-piecewise-correctness} implies the following:
\begin{align}
    \phi(m) = \phi(m') \implies \Delta_{q, \hat{f}}(x, m) = \Delta_{q, \hat{f}}(x, m') \quad \forall x \in \calX, m, m' \in \calM \label{eq:conditional-piecewise-correctness-2}, 
\end{align}
The condition \eqref{eq:conditional-piecewise-correctness-2} means that if the set of the main action $\phi(m)$ and $\phi(m')$ are identical, then the difference of the estimation errors of the regression model $\hat{f}$ between $m$ and $m'$ should be the same. This condition implies the existence of some function $\Bar{\Delta}: \calX \times \calM \to \mR$ such that
\begin{align}
    \Delta_{q, \hat{f}}(x, m) = \Bar{\Delta}(x, \phi(m)) \quad \forall x \in \calX, m, \in \calM \label{eq:conditional-piecewise-correctness-3}. 
\end{align}
We will use the condition \eqref{eq:conditional-piecewise-correctness-3} when we assume conditional piecewise correctness to prove the unbiasedness and variance of the proposed estimator.

\subsection{Proof of Theorem \ref{thm.unbiasedness_OPCB_Kth}}
\label{proof_unbiasedness_OPCB_Kth}
\begin{proof}
    We can show the unbiasedness of the OPCB estimator under Conditions \ref{ass.common_support_Kth} and \ref{ass.local_correctness_Kth} below.
    \begin{align*}
        & \mE_{\calD} \left[ \Kopca \right] \\
        ={}& \mE_{p(x_i) \pi_0(m_i|x_i) p(r_i|x_i, m_i) \forall i \in [n]} \left[ \meanN \left\{ \frac{\piiKth}{\piizeroKth} \left( r_i - \fKth(x_i, m_i) \right) + \mE_{\pi(m|x_i)}[\fKth(x_i, m)] \right\} \right] \\
        ={}& \meanN \mE_{p(x_i) \pi_0(m_i|x_i) p(r_i|x_i, m_i)} \left[  \frac{\piiKth}{\piizeroKth} \left( r_i - \fKth(x_i, m_i) \right) + \mE_{\pi(m|x_i)}[\fKth(x_i, m)]  \right] \quad \because \text{linearity of $\mE$}  \\
        ={}&  \mE_{p(x) \pi_0(m|x) p(r|x, m)} \left[ \frac{\piKth}{\pizeroKth} \left( r - \fKth(x, m) \right) + \mE_{\pi(m'|x)}[\fKth(x, m')] \right] \quad \because \text{i.i.d. assumption}  \\
        ={}&  \mE_{p(x) \pi_0(m|x)} \left[ \frac{\piKth}{\pizeroKth} \left( q(x, m) - \fKth(x, m) \right) + \mE_{\pi(m'|x)}[\fKth(x, m')]\right] \quad \because \text{definition of $q(x, m)$} \\
        ={}&  \mE_{p(x)} \left[ \sumM \pi_0(m|x) \frac{\piKth}{\pizeroKth} \left( q(x, m) - \fKth(x, m) \right) + \mE_{\pi(m'|x)}[\fKth(x, m')] \right] \\
        ={}&  \mE_{p(x)} \left[ \sumM \pi_0(m|x) \frac{\piKth)}{\pizeroKth} \Bar{\Delta}(x, \phi(m)) + \mE_{\pi(m'|x)}[\fKth(x, m')] \right] \quad \because \text{conditional piecewise correctness}  \\
        ={}&  \mE_{p(x)} \left[ \sumM \pi_0(m|x) \sum_{\phi(m') \in 2^{\calA}} \ind \{ \phi(m') = \phi(m) \} \frac{\piprimeKth}{\piprimezeroKth} \Bar{\Delta}(x, \phi(m')) + \mE_{\pi(m'|x)}[\fKth(x, m')] \right] \\
        ={}&  \mE_{p(x)} \left[ \sum_{\phi(m') \in 2^{\calA}} \frac{\piprimeKth}{\piprimezeroKth} \Bar{\Delta}(x, \phi(m')) \sumM \pi_0(m|x) \ind \{ \phi(m') = \phi(m) \} + \mE_{\pi(m'|x)}[\fKth(x, m')] \right] \\
        ={}&  \mE_{p(x)} \left[ \sum_{\phi(m') \in 2^{\calA}} \frac{\piprimeKth}{\piprimezeroKth} \Bar{\Delta}(x, \phi(m')) \piprimezeroKth + \mE_{\pi(m'|x)}[\fKth(x, m')] \right] \quad \because \text{definition of  $\piprimezeroKth$} \\
        ={}&  \mE_{p(x)} \left[ \sum_{\phi(m') \in 2^{\calA}} \piprimeKth \Bar{\Delta}(x, \phi(m')) + \mE_{\pi(m'|x)}[\fKth(x, m')] \right] \quad \because \text{cancel out $\piprimezeroKth)$} \\
        ={}&  \mE_{p(x)} \left[ \sum_{\phi(m') \in 2^{\calA}} \Bar{\Delta}(x, \phi(m')) \sumM \pi(m|x) \ind \{ \phi(m) = \phi(m') \} + \mE_{\pi(m'|x)}[\fKth(x, m')] \right] \\
        ={}& \mE_{p(x)} \left[ \sumM \pi(m|x)  \sum_{\phi(m') \in 2^{\calA}} \Bar{\Delta}(x, \phi(m')) \ind \{ \phi(m) = \phi(m') \} + \mE_{\pi(m'|x)}[\fKth(x, m')] \right] \\
        ={}& \mE_{p(x)} \left[ \sumM \pi(m|x)   \Bar{\Delta}(x, \phi(m))  + \mE_{\pi(m'|x)}[\fKth(x, m')] \right] \\
        ={}& \mE_{p(x) \pi(m|x)} \left[ q(x, m) \right] \quad \because \text{conditional piecewise correctness} \\
        ={}& V(\pi).
    \end{align*}
\end{proof}

\subsection{Proof of Theorem \ref{thm.bias_OPCB_Kth}}
\label{proof_bias_OPCB_Kth}
\begin{proof}
    We derive the bias of the OPCB estimator under Condition \ref{ass.common_support_Kth} below by using Lemma \ref{lemma.a.1_Kth} and \ref{lemma.a.2_Kth}.
    \begin{align}
        & \bias \left( \Kopca \right) \notag \\
        ={}& \mE_{p(x_i) \pi_0(m_i|x_i) p(r_i|x_i, m_i) \forall i \in [n]} \left[ \meanN \left\{ \frac{\piiKth}{\piizeroKth} \left( r_i - \fKth(x_i, m_i) \right) + \mE_{\pi(m|x_i)}[\fKth(x_i, m)] \right\} \right] - V(\pi) \notag \\
        ={}& \mE_{p(x) \pi_0(m|x) p(r|x, m)} \left[ \frac{\piKth}{\pizeroKth} \left( r - \fKth(x, m) \right) + \mE_{\pi(m'|x)}[\fKth(x, m')] \right] - V(\pi) \notag \quad \because \text{i.i.d. assumption} \\
        ={}& \mE_{p(x) \pi_0(m|x)} \left[ w(x, \phi(m)) \DeltaqfKth + \mE_{\pi(m'|x)}[\fKth(x, m')] \right]  - \mE_{p(x) \pi(m|x)} \left[ q(x, m) \right] \notag \\
        ={}& \mE_{p(x)} \left[ \sumM \pi_0(m|x) w(x, \phi(m)) \DeltaqfKth \right] \notag + \mE_{p(x)}\left[ \sumM \pi(m|x) \fKth(x, m)\right]  - \mE_{p(x)} \left[ \sumM \pi(m|x) q(x, m) \right] \notag \\
        ={}& \mE_{p(x)} \left[ \sumM \pi_0(m|x) \DeltaqfKth w(x, \phi(m)) \right] - \mE_{p(x)}\left[ \sumM \pi(m|x) \DeltaqfKth \right] \notag \\
        ={}& \mE_{p(x) \piBarzeroKth} \Bigg[ \sumM w(x, m) \pi_0(m|x, \phi(\Bar{m})) \notag \\
        & \times \left( \left( \sumMprime \pi_0(m'|x, \phi(\Bar{m})) \DeltaqfprimeKth \right) -\DeltaqfKth \right) \Bigg] \notag \quad \because \text{Lemmas \ref{lemma.a.1_Kth} and \ref{lemma.a.2_Kth}} \\
        ={}& \mE_{p(x) \piBarzeroKth} \Bigg[ \sum_{m < m'} \pi_0(m|x, \phi(\Bar{m})) \pi_0(m'|x, \phi(\Bar{m})) \left( \Delta_q(x, m, m') - \Delta_{\fKth}(x, m, m') \right) \left( w(x, m') - w(x, m) \right) \Bigg] \label{eq:MIPS_technique} \\
        ={}& \mE_{p(x) \piBarzeroKth} \Bigg[ \sum_{m < m'} \frac{\pi_0(m|x) \ind\{ \phi(m) = \phi(\Bar{m}) \}}{\piBarzeroKth} \frac{\pi_0(m'|x) \ind\{ \phi(m') = \phi(\Bar{m}) \}}{\piBarzeroKth} \notag \\
        & \times \left( \Delta q(x, m, m') - \Delta_{\fKth}(x, m, m') \right) \left( w(x, m') - w(x, m) \right) \Bigg] \notag \\
        ={}& \mE_{p(x) \piBarzeroKth} \Bigg[ w(x, \phi(\Bar{m})) \sum_{\substack{m < m':\\ \phi(m) = \phi(m') = \phi(\Bar{m})}} \pi_0(m|x, \phi(\Bar{m})) \pi_0(m'|x, \phi(\Bar{m})) \notag \\
        & \times \left( \Delta_q(x, m, m') - \Delta_{\fKth}(x, m, m') \right) \left( \frac{\pi(m'|x, \phi(\Bar{m}))}{\pi_0(m'|x, \phi(\Bar{m}))} - \frac{\pi(m|x, \phi(\Bar{m}))}{\pi_0(m|x, \phi(\Bar{m}))} \right) \Bigg] \label{eq:weight_modification} \\
        ={}& \mE_{p(x) \piBarKth} \Bigg[ \sum_{\substack{m < m':\\ \phi(m) = \phi(m') = \phi(\Bar{m})}} \pi_0(m|x, \phi(\Bar{m})) \pi_0(m'|x, \phi(\Bar{m})) \notag \\
        & \times \left( \Delta_q(x, m, m') - \Delta_{\fKth}(x, m, m') \right) \left( \frac{\pi(m'|x, \phi(\Bar{m}))}{\pi_0(m'|x, \phi(\Bar{m}))} - \frac{\pi(m|x, \phi(\Bar{m}))}{\pi_0(m|x, \phi(\Bar{m}))} \right) \Bigg] , \notag 
    \end{align}
    where we used Lemma B.1 of \cite{saito2022off} at equation \eqref{eq:MIPS_technique} given an arbitrary relational operator between two different action subsets $m, m' \in \calM$ and we used 
    \begin{align*}
        w(x, m) = \frac{\pi(m|x)}{\pi_0(m|x)} = \frac{\pi(m|x, \phi(\Bar{m})) \piBarKth}{\pi_0(m|x, \phi(\Bar{m})) \piBarzeroKth} = \frac{\pi(m|x, \phi(\Bar{m}))}{\pi_0(m|x, \phi(\Bar{m}))} w(x, \phi(\Bar{m})), 
    \end{align*}
    for all $\phi(m) = \phi(\Bar{m})$ at equation \eqref{eq:weight_modification}.
\end{proof}

\begin{lemma}
    \label{lemma.a.1_Kth}
    The following equation holds, and we use this formula to prove the bias of the OPCB estimator.
    \begin{align*}
        \mE_{p(x)} \left[ \sumM \pi(m|x) \DeltaqfKth \right] = \mE_{p(x) \piBarzeroKth} \left[ \sumM w(x, m) \pi_0(m|x, \phi(\Bar{m})) \DeltaqfKth \right]
    \end{align*}
\end{lemma}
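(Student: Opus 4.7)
The plan is to prove the identity by starting from the right-hand side, expanding the definitions of the importance weight and the conditional logging policy, and then cancelling terms and reordering the sums until we recover the left-hand side. This is essentially an algebraic identity that relies on the fact that marginalising the ``auxiliary'' variable $\phi(\bar m)$ against its own marginal distribution acts as an identity.

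First, I would write the outer expectation explicitly as
\[
\mE_{p(x)\,\piBarzeroKth}[\,\cdot\,] = \mE_{p(x)}\!\left[\sum_{\phi(\bar m) \in 2^{\calA}} \piBarzeroKth \,(\cdot)\right],
\]
and also substitute $w(x,m) = \pi(m\,|\,x)/\pi_0(m\,|\,x)$ together with the conditional logging probability
\[
\pi_0(m\,|\,x,\phi(\bar m)) \;=\; \frac{\pi_0(m\,|\,x)\,\ind\{\phi(m)=\phi(\bar m)\}}{\piBarzeroKth}.
\]
After plugging these into the right-hand side, the factor $\pi_0(m\,|\,x)$ cancels inside $w(x,m)\pi_0(m\,|\,x,\phi(\bar m))$, and the factor $\piBarzeroKth$ coming from the outer expectation cancels the denominator of the conditional probability. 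What remains is
\[
\mE_{p(x)}\!\left[\sum_{\phi(\bar m)\in 2^{\calA}}\sum_{m\in\calM}\pi(m\,|\,x)\,\ind\{\phi(m)=\phi(\bar m)\}\,\DeltaqfKth\right].
\]

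Next, I would swap the order of summation so that the sum over $m$ is outermost, pulling $\pi(m\,|\,x)\DeltaqfKth$ outside the inner sum, which leaves
\[
\sum_{m\in\calM}\pi(m\,|\,x)\,\DeltaqfKth\sum_{\phi(\bar m)\in 2^{\calA}}\ind\{\phi(m)=\phi(\bar m)\}.
\]
The inner indicator sum over all possible values of $\phi(\bar m)\in 2^{\calA}$ equals $1$ for every fixed $m$, since $\phi(m)$ takes exactly one value in $2^{\calA}$. This collapses the right-hand side to $\mE_{p(x)}\!\left[\sum_{m}\pi(m\,|\,x)\,\DeltaqfKth\right]$, which is the left-hand side, completing the proof.

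I do not anticipate any real obstacle here: the identity is a routine bookkeeping argument, and no use of Condition~\ref{ass.common_support_Kth} or Condition~\ref{ass.local_correctness_Kth} is required since neither $\DeltaqfKth$ nor the factors introduced depend on those conditions. The only subtlety is being careful that the sum over $\phi(\bar m)$ is taken over $2^{\calA}$ (so that the indicator sum equals exactly $1$ for each $m$) rather than over $\calM$, which would double-count; handling this correctly is essentially the entire content of the argument.
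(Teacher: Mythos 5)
Your proposal is correct and is essentially the paper's own argument run in reverse: the paper starts from the left-hand side, writes $\pi(m\,|\,x) = w(x,m)\,\pi_0(m\,|\,x)$, decomposes $\pi_0(m\,|\,x) = \sum_{\phi(\bar m)\in 2^{\calA}}\pi_0(\phi(\bar m)\,|\,x)\,\pi_0(m\,|\,x,\phi(\bar m))$, and swaps sums, which relies on exactly the same cancellations and the same fact that $\sum_{\phi(\bar m)\in 2^{\calA}}\ind\{\phi(m)=\phi(\bar m)\}=1$ that you use. No substantive difference.
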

\begin{proof}
    \begin{align*}
         & \mE_{p(x)} \left[ \sumM \pi(m|x) \DeltaqfKth \right] \\
         ={}& \mE_{p(x)} \left[ \sumM w(x, m) \DeltaqfKth  \pi_0(m|x) \right] \quad \because \text{definition of $w(x, m)$} \\
         ={}& \mE_{p(x)} \left[ \sumM w(x, m) \DeltaqfKth  \sum_{\phi(\Bar{m}) \in 2^{\calA}} \piBarzeroKth \pi_0(m|x, \phi(\Bar{m})) \right] \quad \because \text{decomposition of $\pi_0(m|x)$} \\ 
         ={}& \mE_{p(x)} \left[ \sum_{\phi(\Bar{m}) \in 2^{\calA}} \piBarzeroKth \sumM w(x, m) \DeltaqfKth \pi_0(m|x, \phi(\Bar{m})) \right] \quad \because \text{change the order of $\sum$} \\ 
         ={}& \mE_{p(x) \piBarzeroKth} \left[ \sumM w(x, m) \pi_0(m|x, \phi(\Bar{m})) \DeltaqfKth \right] \\ 
    \end{align*}
\end{proof}

\begin{lemma}
    \label{lemma.a.2_Kth}
    The following equation holds, and we use this formula to prove the bias of the OPCB estimator.
    \begin{align*}
        & \mE_{p(x)} \left[ \sumM \pi_0(m|x) \DeltaqfKth w(x, \phi(m)) \right] \\
        ={}& \mE_{p(x) \piBarzeroKth} \left[ \sumMprime w(x, m') \pi_0(m'|x, \phi(\Bar{m})) \sumM \pi_0(m|x, \phi(\Bar{m})) \DeltaqfKth \right]
    \end{align*}
\end{lemma}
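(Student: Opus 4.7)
The plan is to transform the left-hand side into the right-hand side by first isolating the marginal logging probability inside $w(x, \phi(m))$ and then reintroducing it as an outer expectation over a dummy variable $\Bar{m}$. First I would factor the logging distribution as $\pi_0(m|x) = \pi_0(\phi(m)|x) \cdot \pi_0(m|x, \phi(m))$ and cancel $\pi_0(\phi(m)|x)$ against the denominator of the marginalized weight $w(x, \phi(m)) = \piKth / \pizeroKth$. This reduces the LHS to $\mE_{p(x)} \left[ \sumM \piKth \, \pi_0(m|x, \phi(m)) \, \DeltaqfKth \right]$.

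Next I would regroup the sum over $m$ according to the value of $\phi(m)$, introducing $\Bar{m}$ as a marker of the main-action equivalence class. Writing $\sumM = \sum_{\phi(\Bar{m}) \in 2^{\calA}} \sum_{m:\phi(m)=\phi(\Bar{m})}$ and pulling $\piBarKth$ out of the inner sum yields
\begin{align*}
\text{LHS} = \mE_{p(x)} \left[ \sum_{\phi(\Bar{m}) \in 2^{\calA}} \piBarKth \sum_{m:\phi(m)=\phi(\Bar{m})} \pi_0(m|x, \phi(\Bar{m})) \, \DeltaqfKth \right].
\end{align*}

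To match this with the RHS, I would simplify its inner sum over $m'$. Using $\pi_0(m'|x, \phi(\Bar{m})) = \pi_0(m'|x) \, \ind\{\phi(m')=\phi(\Bar{m})\} / \piBarzeroKth$, the vanilla importance weight $w(x, m') = \pi(m'|x)/\pi_0(m'|x)$ telescopes against $\pi_0(m'|x)$, so that $\sumMprime w(x, m') \pi_0(m'|x, \phi(\Bar{m})) = \piBarKth / \piBarzeroKth$. Multiplying by the outer $\piBarzeroKth$ coming from the expectation over $\phi(\Bar{m})$ then reconstitutes exactly the $\piBarKth$ factor obtained above, establishing the equality.

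The main obstacle I anticipate is not conceptual but notational: three action variables ($m$, $m'$, $\Bar{m}$) coexist with two different importance weights (vanilla $w(x, m)$ and marginalized $w(x, \phi(m))$), and one must carefully track which variable lives in which equivalence class when exchanging summation orders. Once the identification $\sumMprime \pi(m'|x) \, \ind\{\phi(m')=\phi(\Bar{m})\} = \piBarKth$ is recognized as the defining property of the marginalized policy, the remainder is bookkeeping analogous to the proof of Lemma \ref{lemma.a.1_Kth}.
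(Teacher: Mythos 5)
Your proposal is correct and follows essentially the same route as the paper: both arguments hinge on the identity $\sumMprime w(x, m') \pi_0(m'|x, \phi(\Bar{m})) = w(x, \phi(\Bar{m}))$ (the paper's Lemma \ref{lemma.a.3_Kth}) combined with regrouping the sum over $m$ by the equivalence classes of $\phi$. The only cosmetic difference is that you meet in the middle, reducing both sides to the common expression $\mE_{p(x)}\bigl[ \sum_{\phi(\Bar{m}) \in 2^{\calA}} \piBarKth \sumM \pi_0(m|x, \phi(\Bar{m})) \DeltaqfKth \bigr]$, whereas the paper transforms the left-hand side forward into the right-hand side.
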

\begin{proof}
    \begin{align}
         & \mE_{p(x)} \left[ \sumM \pi_0(m|x) \DeltaqfKth w(x, \phi(m)) \right] \notag \\
         ={}& \mE_{p(x)} \left[ \sumM \pi_0(m|x) \DeltaqfKth \sum_{\phi(\Bar{m}) \in 2^{\calA}} w(x, \phi(\Bar{m})) \ind \{ \phi(\Bar{m}) = \phi(m) \} \right] \notag \quad \because \text{def. of $w(x, \phi(m))$} \\
         ={}& \mE_{p(x)} \left[ \sum_{\phi(\Bar{m}) \in 2^{\calA}} w(x, \phi(\Bar{m})) \sumM \pi_0(m|x) \DeltaqfKth  \ind \{ \phi(\Bar{m}) = \phi(m) \} \right] \notag \quad \because \text{order of $\sum$} \\
         ={}& \mE_{p(x)} \left[ \sum_{\phi(\Bar{m}) \in 2^{\calA}} \piBarzeroKth w(x, \phi(\Bar{m})) \sumM \frac{\pi_0(m|x) \ind \{ \phi(\Bar{m}) = \phi(m) \}}{\piBarzeroKth} \DeltaqfKth \right] \notag  \\
         ={}& \mE_{p(x)} \Bigg[ \sum_{\phi(\Bar{m}) \in 2^{\calA}} \piBarzeroKth \sumMprime w(x, m') \pi_0(m'|x, \phi(\Bar{m})) \notag  \Bigg( \sumM \pi_0(m|x, \phi(\Bar{m})) \DeltaqfKth \Bigg) \Bigg] \notag \quad \because \text{Lemma \ref{lemma.a.3_Kth}}  \\
         ={}& \mE_{p(x) \piBarzeroKth} \left[ \sumMprime w(x, m') \pi_0(m'|x, \phi(\Bar{m})) \sumM \pi_0(m|x, \phi(\Bar{m})) \DeltaqfKth \right] \notag
    \end{align}
\end{proof}

\begin{lemma}
    \label{lemma.a.3_Kth}
    The following equation holds, and we use this formula to prove Lemma \ref{lemma.a.2_Kth}. Note that we we use the following notation $\pi_0(m, \phi(\Bar{m})|x) := \pi_0(m|x) \ind \{ \phi(m) = \phi(\Bar{m}) \}$.
    \begin{align*}
        w(x, \phi(\Bar{m})) = \sumM w(x, m) \pi_0(m|x, \phi(\Bar{m}))
    \end{align*}
\end{lemma}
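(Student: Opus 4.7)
The plan is a direct unfolding of definitions followed by a single cancellation and the identity that defines the marginal policy over the main actions. First I would substitute the definition of the conditional logging probability,
\begin{align*}
    \pi_0(m\,|\,x, \phi(\Bar{m})) = \frac{\pi_0(m\,|\,x)\,\ind\{\phi(m) = \phi(\Bar{m})\}}{\pi_0(\phi(\Bar{m})\,|\,x)},
\end{align*}
into the right-hand side, and expand $w(x,m) = \pi(m\,|\,x)/\pi_0(m\,|\,x)$.

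The $\pi_0(m\,|\,x)$ in the numerator of $\pi_0(m\,|\,x, \phi(\Bar{m}))$ then cancels the $\pi_0(m\,|\,x)$ in the denominator of $w(x,m)$, so the right-hand side becomes
\begin{align*}
    \frac{1}{\pi_0(\phi(\Bar{m})\,|\,x)} \sumM \pi(m\,|\,x)\,\ind\{\phi(m) = \phi(\Bar{m})\}.
\end{align*}
By the definition of the marginal target probability on the set of main actions, $\pi(\phi(\Bar{m})\,|\,x) = \sum_{m \in \calM} \pi(m\,|\,x)\, \ind\{\phi(m) = \phi(\Bar{m})\}$, the inner sum equals $\pi(\phi(\Bar{m})\,|\,x)$, so the expression reduces to $\pi(\phi(\Bar{m})\,|\,x)/\pi_0(\phi(\Bar{m})\,|\,x) = w(x, \phi(\Bar{m}))$, which is the desired equality.

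There is no real obstacle here: the lemma is essentially a bookkeeping identity stating that the marginal importance weight on main actions is the $\pi_0(\cdot\,|\,x, \phi(\Bar{m}))$-average of the full-subset importance weights among subsets sharing the same main actions. The only care required is to keep the indicator $\ind\{\phi(m) = \phi(\Bar{m})\}$ active throughout so that the identification $\sum_m \pi(m\,|\,x) \ind\{\phi(m) = \phi(\Bar{m})\} = \pi(\phi(\Bar{m})\,|\,x)$ can be applied at the end, and implicitly to assume $\pi_0(\phi(\Bar{m})\,|\,x) > 0$ whenever we take the corresponding expectation, which is already guaranteed by Condition~\ref{ass.common_support_Kth} in the contexts where this lemma is invoked.
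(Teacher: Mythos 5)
Your proposal is correct and is essentially the paper's own proof run in the reverse direction: the paper starts from $w(x,\phi(\Bar{m}))$, expands the marginal $\pi(\phi(\Bar{m})\,|\,x)$, and multiplies and divides by $\pi_0(m\,|\,x)$ to reach the right-hand side, while you start from the right-hand side and perform the same cancellation to arrive at the left; the algebraic content and the definitions invoked are identical. Your remark about needing $\pi_0(\phi(\Bar{m})\,|\,x)>0$ is a sensible (and correct) bit of added care that the paper leaves implicit.
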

\begin{proof}
    \begin{align*}
         w(x, \phi(\Bar{m}))  
         &= \frac{\piBarKth}{\piBarzeroKth} \quad \because \text{definition of $w(x, \phi(\Bar{m}))$} \\
         &= \frac{1}{\piBarzeroKth} \sumM \pi(m|x) \ind \{ \phi(m) = \phi(\Bar{m}) \} \quad \because \text{definition of $\piBarKth$} \\
         &= \frac{1}{\piBarzeroKth} \sumM \pi_0(m|x) w(x, m) \ind \{ \phi(m) = \phi(\Bar{m}) \} \quad \because \text{multiply and divide by $\pi_0(m|x)$} \\
         &= \sumM w(x, m) \frac{1}{\piBarzeroKth} \pi_0(m|x) \ind \{ \phi(m) = \phi(\Bar{m}) \} \\
         &= \sumM w(x, m) \frac{\pi_0(m, \phi(\Bar{m})|x)}{\piBarzeroKth} \quad \because \text{definition of $\pi_0(m, \phi(\Bar{m})|x)$} \\
         &= \sumM w(x, m) \pi_0(m|x, \phi(\Bar{m})) \quad \because \text{definition of $\pi_0(m|x, \phi(\Bar{m}))$} \\
    \end{align*}
\end{proof}

\subsection{Proof of Theorem \ref{thm.variance_OPCB_Kth}}
\label{proof_variance_OPCB_Kth}
\begin{proof}
    We prove the variance of the OPCB estimator under Conditions \ref{ass.common_support_Kth} and \ref{ass.local_correctness_Kth} below by using the law of total variance twice.
    \begin{align}
        & n \mV_{\calD} \left[ \Kopca \right] \notag \\
        ={}& n \mV_{p(x_i) \pi_0(m_i|x_i) p(r_i|x_i, m_i) \forall i \in [n]} \left[ \meanN \left\{ \frac{\piiKth}{\piizeroKth} \left( r_i - \fKth(x_i, m_i) \right) + \mE_{\pi(m|x_i)}[\fKth(x_i, m)] \right\} \right] \notag \\
        ={}&  n \frac{1}{n^2} \sum_{i = 1}^n \mV_{p(x_i) \pi_0(m_i|x_i) p(r_i|x_i, m_i) \forall i \in [n]} \left[  \frac{\piiKth}{\piizeroKth} \left( r_i - \fKth(x_i, m_i) \right) + \mE_{\pi(m|x_i)}[\fKth(x_i, m)]  \right] \notag \\
        ={}& \mV_{p(x) \pi_0(m|x) p(r|x, m)} \left[ w(x, \phi(m)) \left( r - \fKth(x, m) \right) + \mE_{\pi(m'|x)}[\fKth(x, m')] \right] \notag  \quad \because \text{i.i.d. assumption} \\
        ={}& \mE_{p(x) \pi_0(m|x)} \left[ \mV_{p(r|x, m)} \left[ w(x, \phi(m)) \left( r - \fKth(x, m) \right) + \mE_{\pi(m'|x)}[\fKth(x, m')] \right] \right] \notag \\
        & + \mV_{p(x) \pi_0(m|x)} \left[ \mE_{p(r|x, m)} \left[ w(x, \phi(m)) \left( r - \fKth(x, m) \right) + \mE_{\pi(m'|x)}[\fKth(x, m')] \right] \right] \notag  \quad \because \text{total variance} \\
        ={}& \mE_{p(x) \pi_0(m|x)} \left[ w(x, \phi(m))^2 \sigma^2(x, m) \right] \notag \\
        & + \mV_{p(x) \pi_0(m|x)} \left[  w(x, \phi(m)) \left( q(x, m) - \fKth(x, m) \right) + \mE_{\pi(m'|x)}[\fKth(x, m')] \right] \notag  \quad \because \text{definition of $\sigma^2(x, m)$} \\
        ={}& \mE_{p(x) \pi_0(m|x)} \left[ w(x, \phi(m))^2 \sigma^2(x, m) \right] \notag \\
        & + \mV_{p(x) \pi_0(m|x)} \left[  w(x, \phi(m)) \DeltaqfKth + \mE_{\pi(m'|x)}[\fKth(x, m')] \right] \notag \quad \because \text{definition of $\Delta_{q, \fKth(x, m)}$} \\
        ={}& \mE_{p(x) \pi_0(m|x)} \left[ w(x, \phi(m))^2 \sigma^2(x, m) \right] \notag \\
        & + \mE_{p(x)} \left[ \mV_{\pi_0(m|x)} \left[ w(x, \phi(m)) \DeltaqfKth + \mE_{\pi(m'|x)}[\fKth(x, m')] \right] \right] \notag \\
        & + \mV_{p(x)} \left[ \mE_{\pi_0(m|x)} \left[ w(x, \phi(m)) \DeltaqfKth + \mE_{\pi(m'|x)}[\fKth(x, m')] \right] \right] \notag  \quad \because \text{total variance formula} \\
        ={}& \mE_{p(x) \pi_0(m|x)} \left[ w(x, \phi(m))^2 \sigma^2(x, m) \right] \notag \\
        & + \mE_{p(x)} \left[ \mV_{\pi_0(m|x)} \left[ w(x, \phi(m)) \DeltaqfKth \right] \right] \notag \\
        & + \mV_{p(x)} \left[ \mE_{\pi_0(m|x)} \left[ w(x, \phi(m)) \Bar{\Delta}(x, \phi(m)) + \mE_{\pi(m'|x)}[\fKth(x, m')] \right] \right] \notag \quad \because \text{conditional piecewise correctness} \\
        ={}& \mE_{p(x) \pi_0(m|x)} \left[ w(x, \phi(m))^2 \sigma^2(x, m) \right] \notag \\
        & + \mE_{p(x)} \left[ \mV_{\pi_0(m|x)} \left[ w(x, \phi(m)) \DeltaqfKth \right] \right] \notag \\
        & + \mV_{p(x)} \left[ \mE_{\pi(m|x)} \left[ q(x, m) \right] \right] \notag \quad \because \text{Proof of Theorem \ref{thm.unbiasedness_OPCB_Kth}}
    \end{align}
\end{proof}

\subsection{Proof of Theorem \ref{thm-unbiasedness-OPCB-PG}}
\label{proof-unbiasedness-OPCB-PG}
\begin{proof}
    We can show the unbiasedness of the gradient of the OPCB estimator under Conditions \ref{ass.full-support-main-actions} and \ref{ass.local_correctness_Kth} below.
    \begin{align*}
        & \mE_{\calD} \left[ \nabla_{\zeta} \Kopcazeta \right] \\
        ={}& \mE_{p(x_i) \pi_0(m_i|x_i) p(r_i|x_i, m_i) \forall i \in [n]} \left[ \meanN \left\{ \frac{\piizetaphi}{\piizeroKth} \left( r_i - \fKth(x_i, m_i) \right) s_{\zeta}(x_i, \phi(m_i)) + \mE_{\pi_{\zeta}(m|x_i)}[\fKth(x_i, m) s_{\zeta}(x_i, m)] \right\} \right] \\
        ={}& \meanN \mE_{p(x_i) \pi_0(m_i|x_i) p(r_i|x_i, m_i)} \left[  \frac{\piizetaphi}{\piizeroKth} \left( r_i - \fKth(x_i, m_i) \right) s_{\zeta}(x_i, \phi(m_i)) + \mE_{\pi_{\zeta}(m|x_i)}[\fKth(x_i, m) s_{\zeta}(x_i, m)]  \right] \quad \because \text{linearity of $\mE$}  \\
        ={}&  \mE_{p(x) \pi_0(m|x) p(r|x, m)} \left[ \frac{\pizetaphi}{\pizeroKth} \left( r - \fKth(x, m) \right) s_{\zeta}(x, \phi(m)) + \mE_{\pi_{\zeta}(m'|x)}[\fKth(x, m') s_{\zeta}(x, m')] \right] \quad \because \text{i.i.d. assumption}  \\
        ={}&  \mE_{p(x) \pi_0(m|x)} \left[ \frac{\pizetaphi}{\pizeroKth} \left( q(x, m) - \fKth(x, m) \right) s_{\zeta}(x, \phi(m)) + \mE_{\pi_{\zeta}(m'|x)}[\fKth(x, m') s_{\zeta}(x, m')]\right] \quad \because \text{definition of $q(x, m)$} \\
        ={}&  \mE_{p(x)} \left[ \sumM \pi_0(m|x) \frac{\pizetaphi}{\pizeroKth} \left( q(x, m) - \fKth(x, m) \right) s_{\zeta}(x, \phi(m)) + \mE_{\pi_{\zeta}(m'|x)}[\fKth(x, m') s_{\zeta}(x, m')] \right] \\
        ={}&  \mE_{p(x)} \left[ \sumM \pi_0(m|x) \frac{\pizetaphi)}{\pizeroKth} \Bar{\Delta}(x, \phi(m)) s_{\zeta}(x, \phi(m)) + \mE_{\pi_{\zeta}(m'|x)}[\fKth(x, m') s_{\zeta}(x, m')] \right] \quad \because \text{conditional piecewise correctness}  \\
        ={}&  \mE_{p(x)} \left[ \sumM \pi_0(m|x) \sum_{\phi(m') \in 2^{\calA}} \ind \{ \phi(m') = \phi(m) \} \frac{\piprimezetaphi}{\piprimezeroKth} \Bar{\Delta}(x, \phi(m')) s_{\zeta}(x, \phi(m')) + \mE_{\pi_{\zeta}(m'|x)}[\fKth(x, m') s_{\zeta}(x, m')] \right] \\
        ={}&  \mE_{p(x)} \left[ \sum_{\phi(m') \in 2^{\calA}} \frac{\piprimezetaphi}{\piprimezeroKth} \Bar{\Delta}(x, \phi(m')) s_{\zeta}(x, \phi(m')) \sumM \pi_0(m|x) \ind \{ \phi(m') = \phi(m) \} + \mE_{\pi_{\zeta}(m'|x)}[\fKth(x, m') s_{\zeta}(x, m')] \right] \\
        ={}&  \mE_{p(x)} \left[ \sum_{\phi(m') \in 2^{\calA}} \frac{\piprimezetaphi}{\piprimezeroKth} \Bar{\Delta}(x, \phi(m')) s_{\zeta}(x, \phi(m')) \piprimezeroKth + \mE_{\pi_{\zeta}(m'|x)}[\fKth(x, m') s_{\zeta}(x, m')] \right] \quad \because \text{def. $\piprimezeroKth$} \\
        ={}&  \mE_{p(x)} \left[ \sum_{\phi(m') \in 2^{\calA}} \piprimezetaphi \Bar{\Delta}(x, \phi(m')) s_{\zeta}(x, \phi(m')) + \mE_{\pi_{\zeta}(m'|x)}[\fKth(x, m') s_{\zeta}(x, m')] \right] \quad \because \text{cancel out $\piprimezeroKth)$} \\
        ={}&  \mE_{p(x)} \left[ \sum_{\phi(m') \in 2^{\calA}}  \Bar{\Delta}(x, \phi(m')) \nabla_{\zeta} \pi_{\zeta}(\phi(m')|x) + \mE_{\pi_{\zeta}(m'|x)}[\fKth(x, m') s_{\zeta}(x, m')] \right]  \\
        ={}&  \mE_{p(x)} \left[ \sum_{\phi(m') \in 2^{\calA}}  \Bar{\Delta}(x, \phi(m')) \nabla_{\zeta} \sumM \pi_{\zeta}(m|x) \ind\{ \phi(m) = \phi(m') \} + \mE_{\pi_{\zeta}(m'|x)}[\fKth(x, m') s_{\zeta}(x, m')] \right]  \\
        ={}&  \mE_{p(x)} \left[ \sumM \nabla_{\zeta} \pi_{\zeta}(m|x) \sum_{\phi(m') \in 2^{\calA}}  \Bar{\Delta}(x, \phi(m'))  \ind\{ \phi(m) = \phi(m') \} + \mE_{\pi_{\zeta}(m'|x)}[\fKth(x, m') s_{\zeta}(x, m')] \right]  \\
        ={}&  \mE_{p(x)} \left[ \sumM \nabla_{\zeta} \pi_{\zeta}(m|x) \Delta_{q, \hat{f}}(x, m) + \mE_{\pi_{\zeta}(m'|x)}[\fKth(x, m') s_{\zeta}(x, m')] \right]  \quad \because \text{conditional piecewise correctness} \\
        ={}& \mE_{p(x)} \left[ \sumM \pi_{\zeta}(m|x) s_{\zeta}(x, m)  \Delta_{q, \hat{f}}(x, m)  + \mE_{\pi_{\zeta}(m'|x)}[\fKth(x, m') s_{\zeta}(x, m')] \right] \\
        ={}& \mE_{p(x) \pi_{\zeta}(m|x)} \left[ q(x, m) s_{\zeta}(x, m) \right] \\
        ={}& \nabla_{\zeta} V(\pi_{\zeta}).
    \end{align*}
\end{proof}

\subsection{Proof of Theorem \ref{thm-bias-OPCB-PG}}
\label{proof-bias-OPCB-PG}
\begin{proof}
    We derive the bias of the gradient of the OPCB estimator under Condition \ref{ass.full-support-main-actions}.
    \begin{align}
        & \bias \left( \nabla_{\zeta} \Kopcazeta \right) \notag \\
        ={}& \mE_{\calD} \left[ \nabla_{\zeta} \Kopcazeta \right] - \nabla_{\zeta} V(\pi_{\zeta}) \notag \\
        ={}& \nabla_{\zeta}  \mE_{\calD} \left[ \Kopcazeta \right] - \nabla_{\zeta} V(\pi_{\zeta}) \notag \\
        ={}& \nabla_{\zeta} \left( \mE_{\calD} \left[ \Kopcazeta \right] - V(\pi_{\zeta}) \right) \notag \\
        ={}& \nabla_{\zeta} \bias \left(  \Kopcazeta \right)\notag \\
        ={}& \nabla_{\zeta} \mE_{p(x) \pibarzetaphi} \Bigg[ \sum_{\substack{m < m':\\ \phi(m) = \phi(m') = \phi(\Bar{m})}} \pi_0(m|x, \phi(\Bar{m})) \pi_0(m'|x, \phi(\Bar{m})) \notag \\
        & \times \left( \Delta_q(x, m, m') - \Delta_{\fKth}(x, m, m') \right) \left( \frac{\pi_{\zeta}(m'|x, \phi(\Bar{m}))}{\pi_0(m'|x, \phi(\Bar{m}))} - \frac{\pi_{\zeta}(m|x, \phi(\Bar{m}))}{\pi_0(m|x, \phi(\Bar{m}))} \right) \Bigg] \quad \because \text{Theorem \ref{thm.bias_OPCB_Kth}} \notag \\
        ={}& \mE_{p(x)} \Bigg[ \nabla_{\zeta} \sumMBarKth \pibarzetaphi \sum_{\substack{m < m':\\ \phi(m) = \phi(m') = \phi(\Bar{m})}} \pi_0(m|x, \phi(\Bar{m})) \pi_0(m'|x, \phi(\Bar{m})) \notag \\
        & \times \left( \Delta_q(x, m, m') - \Delta_{\fKth}(x, m, m') \right) \left( \frac{\pi_{\zeta}(m'|x, \phi(\Bar{m}))}{\pi_0(m'|x, \phi(\Bar{m}))} - \frac{\pi_{\zeta}(m|x, \phi(\Bar{m}))}{\pi_0(m|x, \phi(\Bar{m}))} \right) \Bigg]  \notag \\
        ={}& \mE_{p(x)} \Bigg[ \sumMBarKth \sum_{\substack{m < m':\\ \phi(m) = \phi(m') = \phi(\Bar{m})}} \pi_0(m|x, \phi(\Bar{m})) \pi_0(m'|x, \phi(\Bar{m})) \notag \\
        & \times \left( \Delta_q(x, m, m') - \Delta_{\fKth}(x, m, m') \right) \nabla_{\zeta} \pibarzetaphi \left( \frac{\pi_{\zeta}(m'|x, \phi(\Bar{m}))}{\pi_0(m'|x, \phi(\Bar{m}))} - \frac{\pi_{\zeta}(m|x, \phi(\Bar{m}))}{\pi_0(m|x, \phi(\Bar{m}))} \right) \Bigg]  \notag \\
        ={}& \mE_{p(x)} \Bigg[ \sumMBarKth \sum_{\substack{m < m':\\ \phi(m) = \phi(m') = \phi(\Bar{m})}} \pi_0(m|x, \phi(\Bar{m})) \pi_0(m'|x, \phi(\Bar{m})) \notag \\
        & \times \left( \Delta_q(x, m, m') - \Delta_{\fKth}(x, m, m') \right)  \pibarzetaphi \left( \frac{\pi_{\zeta}(m'|x, \phi(\Bar{m}))}{\pi_0(m'|x, \phi(\Bar{m}))} \nabla_{\zeta} \log \pi_{\zeta}(m', \phi(\Bar{m})|x) - \frac{\pi_{\zeta}(m|x, \phi(\Bar{m}))}{\pi_0(m|x, \phi(\Bar{m}))} \nabla_{\zeta} \log \pi_{\zeta}(m, \phi(\Bar{m})|x) \right) \Bigg]  \notag \\
        ={}& \mE_{p(x) \pibarzetaphi} \Bigg[ \sum_{\substack{m < m':\\ \phi(m) = \phi(m') = \phi(\Bar{m})}} \pi_0(m|x, \phi(\Bar{m})) \pi_0(m'|x, \phi(\Bar{m})) \notag \\
        & \times \left( \Delta_q(x, m, m') - \Delta_{\fKth}(x, m, m') \right) \left( \frac{\pi_{\zeta}(m'|x, \phi(\Bar{m}))}{\pi_0(m'|x, \phi(\Bar{m}))} \nabla_{\zeta} \log \pi_{\zeta}(m', \phi(\Bar{m})|x) - \frac{\pi_{\zeta}(m|x, \phi(\Bar{m}))}{\pi_0(m|x, \phi(\Bar{m}))} \nabla_{\zeta} \log \pi_{\zeta}(m, \phi(\Bar{m})|x) \right) \Bigg]  \notag 
    \end{align}
\end{proof}

\subsection{Proof of Theorem \ref{thm-variance-OPCB-PG}}
\label{proof-variance-OPCB-PG}
\begin{proof}
    We prove the variance of the gradient of the OPCB estimator under Conditions \ref{ass.full-support-main-actions} and \ref{ass.local_correctness_Kth} below by using the law of total variance twice.
    \begin{align}
        & n \mV_{\calD} \left[ \nabla_{\zeta} \Kopcazeta \right] \notag \\
        ={}& n \mV_{p(x_i) \pi_0(m_i|x_i) p(r_i|x_i, m_i) \forall i \in [n]} \left[ \meanN \left\{ \frac{\piizetaphi}{\piizeroKth} \left( r_i - \fKth(x_i, m_i) \right) s_{\zeta}(x_i, \phi(m_i)) + \mE_{\pi_{\zeta}(m|x_i)}[\fKth(x_i, m) s_{\zeta}(x_i, m)] \right\} \right] \notag \\
        ={}&  n \frac{1}{n^2} \sum_{i = 1}^n \mV_{p(x_i) \pi_0(m_i|x_i) p(r_i|x_i, m_i) \forall i \in [n]} \left[  \frac{\piizetaphi}{\piizeroKth} \left( r_i - \fKth(x_i, m_i) \right) s_{\zeta}(x_i, \phi(m_i)) + \mE_{\pi_{\zeta}(m|x_i)}[\fKth(x_i, m) s_{\zeta}(x_i, m)]  \right] \notag \\
        ={}& \mV_{p(x) \pi_0(m|x) p(r|x, m)} \left[ \frac{\pizetaphi}{\pizeroKth} \left( r - \fKth(x, m) \right) s_{\zeta}(x, \phi(m)) + \mE_{\pi_{\zeta}(m'|x)}[\fKth(x, m') s_{\zeta}(x, m')] \right] \notag  \quad \because \text{i.i.d. assumption} \\
        ={}& \mE_{p(x) \pi_0(m|x)} \left[ \mV_{p(r|x, m)} \left[ \frac{\pizetaphi}{\pizeroKth} \left( r - \fKth(x, m) \right) s_{\zeta}(x, \phi(m)) + \mE_{\pi_{\zeta}(m'|x)}[\fKth(x, m') s_{\zeta}(x, m')] \right] \right] \notag \\
        & + \mV_{p(x) \pi_0(m|x)} \left[ \mE_{p(r|x, m)} \left[ \frac{\pizetaphi}{\pizeroKth} \left( r - \fKth(x, m) \right) s_{\zeta}(x, \phi(m)) + \mE_{\pi_{\zeta}(m'|x)}[\fKth(x, m') s_{\zeta}(x, m')] \right] \right] \notag  \quad \because \text{total variance} \\
        ={}& \mE_{p(x) \pi_0(m|x)} \left[ \left(\frac{\pizetaphi}{\pizeroKth}\right)^2 \chi_{\zeta}(x, \phi(m)) \sigma^2(x, m) \right] \notag \\
        & + \mV_{p(x) \pi_0(m|x)} \left[  \frac{\pizetaphi}{\pizeroKth} \left( q(x, m) - \fKth(x, m) \right) s_{\zeta}(x, \phi(m)) + \mE_{\pi_{\zeta}(m'|x)}[\fKth(x, m') s_{\zeta}(x, m')] \right] \notag  \quad \because \text{definition of $\sigma^2(x, m)$ and $\chi_{\zeta}(x, \phi(m))$} \\
        ={}& \mE_{p(x) \pi_0(m|x)} \left[ \left(\frac{\pizetaphi}{\pizeroKth}\right)^2 \chi_{\zeta}(x, \phi(m)) \sigma^2(x, m) \right] \notag \\
        & + \mV_{p(x) \pi_0(m|x)} \left[  \frac{\pizetaphi}{\pizeroKth} \DeltaqfKth s_{\zeta}(x, \phi(m)) + \mE_{\pi_{\zeta}(m'|x)}[\fKth(x, m') s_{\zeta}(x, m')] \right] \notag \quad \because \text{definition of $\Delta_{q, \fKth(x, m)}$} \\
        ={}& \mE_{p(x) \pi_0(m|x)} \left[ \left(\frac{\pizetaphi}{\pizeroKth}\right)^2 \chi_{\zeta}(x, \phi(m)) \sigma^2(x, m) \right] \notag \\
        & + \mE_{p(x)} \left[ \mV_{\pi_0(m|x)} \left[ \frac{\pizetaphi}{\pizeroKth} \DeltaqfKth s_{\zeta}(x, \phi(m)) + \mE_{\pi_{\zeta}(m'|x)}[\fKth(x, m') s_{\zeta}(x, m')] \right] \right] \notag \\
        & + \mV_{p(x)} \left[ \mE_{\pi_0(m|x)} \left[ \frac{\pizetaphi}{\pizeroKth} \DeltaqfKth s_{\zeta}(x, \phi(m)) + \mE_{\pi_{\zeta}(m'|x)}[\fKth(x, m') s_{\zeta}(x, m')] \right] \right] \notag  \quad \because \text{total variance formula} \\
        ={}& \mE_{p(x) \pi_0(m|x)} \left[ \left(\frac{\pizetaphi}{\pizeroKth}\right)^2 \chi_{\zeta}(x, \phi(m)) \sigma^2(x, m) \right] \notag \\
        & + \mE_{p(x)} \left[ \mV_{\pi_0(m|x)} \left[ \frac{\pizetaphi}{\pizeroKth} \DeltaqfKth s_{\zeta}(x, \phi(m)) \right] \right] \notag \\
        & + \mV_{p(x)} \left[ \mE_{\pi_0(m|x)} \left[ \frac{\pizetaphi}{\pizeroKth} \Bar{\Delta}(x, \phi(m)) s_{\zeta}(x, \phi(m)) + \mE_{\pi_{\zeta}(m'|x)}[\fKth(x, m') s_{\zeta}(x, m')] \right] \right] \notag \quad \because \text{conditional piecewise correctness} \\
        ={}& \mE_{p(x) \pi_0(m|x)} \left[ \left(\frac{\pizetaphi}{\pizeroKth}\right)^2 \chi_{\zeta}(x, \phi(m)) \sigma^2(x, m) \right] \notag \\
        & + \mE_{p(x)} \left[ \mV_{\pi_0(m|x)} \left[ \frac{\pizetaphi}{\pizeroKth} \DeltaqfKth s_{\zeta}(x, \phi(m)) \right] \right] \notag \\
        & + \mV_{p(x)} \left[ \mE_{\pi(m|x)} \left[ q(x, m) s_{\zeta}(x, m) \right] \right] \notag \quad \because \text{Proof of Theorem \ref{thm-unbiasedness-OPCB-PG}}
    \end{align}
\end{proof}

\section{Detailed Experiment Settings and Results}
This section describes the detailed experiment settings and reports additional results.

\subsection{Synthetic Experiments \label{sec:appendix synthetic}}
\textbf{Detailed Setup.} \quad We describe synthetic experiment settings in detail. First, we describe how we define the expected reward function. In the synthetic experiments, we define the expected reward function as follows.
\begin{align}
    q(x, m) = \lambda g(x, \phi_{true}(m)) + (1 - \lambda) h(x, m), \notag
\end{align}
where we define $g(\cdot,\cdot)$ and $h(\cdot,\cdot)$ in the following way. We use \textbf{obp.dataset.linear\_reward\_function} from OpenBanditPipeline \cite{saito2020open} as $g(\cdot,\cdot)$. We define $h(\cdot,\cdot)$ as
\begin{align}
    h(x,m) = \theta^T_x x \theta^T_m m  + \epsilon_{x,m}, \notag
\end{align}
where $\theta_x$ and $\theta_m$ are sampled from a uniform distribution with range $[-1.5,1.5]$. $\epsilon_{x,m}$ is sampled from a uniform distribution with range $[-2.5,2.5]$.
\\\\
Second, we describe the definitions of the logging policy $\pi_0$ and target policy $\pi$ when we vary $\lambda$ in Figure \ref{fig:lambda}. When we vary the ratio of the main elements, the expected reward $q(x,m)$ changes. This affects the difficulty of OPE because the logging policy defined by Eq. \eqref{eq:logging} changes. Therefore, we should use definitions of target and logging policies not depending on $\lambda$. We use \textbf{obp.dataset.linear\_behavior\_policy} from OpenBanditPipeline \cite{saito2020open} as the logging policy $\pi_0$. In contrast, we define the target policy $\pi$ by applying the softmax function to a normal distribution with mean $q(x,m)$ and standard deviation $\sigma = 3.0$ like Eq. \eqref{eq:logging} where we use $\beta = 3.0$. By using these definitions, it is possible to conduct OPE experiments whose difficulties are consistent when we vary the value of $\lambda$. 
\\\\
Third, we describe how to optimize the function $\phi$ of OPCB. For instance, it has 256 candidates when $|\calA|=8$ ($|\calS|=256$). It is difficult to explore all of them, so we use random search and try 30 candidate functions $\phi$. We perform Eq.~\eqref{eq:tuning} using the following bias estimation.
\begin{align}
    \label{eq:bias_estimation}
    \widehat{\bias} \left(\Kopca\right)^2 = \bias \left(\Kopca\right)^2 + \delta_{\phi},
\end{align}
where $\delta_{\phi}$ is sampled from a normal distribution with mean $0$ and standard deviation $\sigma$. We use $\sigma = 2.5$ as default. In one of the additional synthetic experiments, we vary $\sigma$ to investigate how $\sigma$ affects the performance of OPCB.
\\\\
\textbf{Additional Results.} \quad In Figure \ref{fig:reward_std} to \ref{fig:bias_noise}, we report additional results on synthetic experiments in order to demonstrate that OPCB works better than the baselines in various situations. We vary levels of noise on the rewards and target policies. Additionally, we vary levels of noise on the bias estimation in Eq. \eqref{eq:bias_estimation}. We set $n = 2000$ and $|\calA|=8$. Note that the shaded regions in the MSE plots represent the 95\% confidence intervals estimated with bootstrap.
\\\\
\textbf{How does OPCB perform with varying levels of noise on the rewards, target policies, and levels of noise on the bias estimation?} \quad
First, Figure \ref{fig:reward_std} shows comparisons of the estimators with varying reward noise levels $\sigma \in \{1.0,2.0,3.0,4.0,5.0\}$. Note that the reward is sampled from a normal distribution with mean $q(x,m)$ and standard deviation $\sigma$. We observe that the variance of IPS, DR, and OPCB (ours) gradually increases as the reward noises increase, and OPCB (ours) significantly outperforms IPS and DR with larger reward noises because of a greater reduction of variance. Moreover, OPCB (ours) has a lower bias than DM for various noise levels. OPCB (ours) performs better than the baselines with larger noise levels.
\\\\
Second, Figure \ref{fig:target_policy} shows how target policies affect the performance of estimators. We vary target policies ($\epsilon \in \{0.0,0.2,0.4,0.6,0.8,1.0\}$ in Eq. \eqref{eq:evaluation}). A large value of $\epsilon$ makes the target policy close to a random uniform distribution. We observe that all estimators become worse with a smaller value of $\epsilon$ since it makes OPE more challenging. IPS and DR suffer from high variance, and DR deteriorates due to high bias. However, OPCB (ours) achieves lower MSE with various target policies. Especially when $\epsilon$ is small, OPCB (ours) significantly improves the MSE compared to the baselines. The results show that OPCB can perform better particularly in more practical and challenging situations.
\\\\
Finally, Figure \ref{fig:bias_noise} shows how the estimation errors of the bias term affect OPCB's performance. We vary noise levels $\sigma \in \{ 0.0,1.0,2.0,3.0,4.0,5.0 \}$ in Eq. \eqref{eq:bias_estimation}. Note that the bias estimation noises do not affect the estimators other than OPCB (ours). We see that the performance of OPCB (ours) deteriorates as the value of $\sigma$ increases. This suggests that the estimation errors of the bias term can control the precision of estimating the MSE. We set $\sigma = 2.5$ to ensure fair comparisons in the main text.

\subsection{OPL Experiments} \label{app:gradients}
In this section, we describe the baselines and detailed settings in OPL experiments. To conduct synthetic experiments for OPL, we create the logged data $\calD = \{(x_i, m_i, r_i)\}_{i = 1}^n$ similarly to OPE experiments. We use a neural network with 3 hidden layers to parameterize the policy $\pi_\zeta$. Below, we define the baseline methods. 
\\\\
\textbf{Reg-based.} The regression-based approach learns the expected reward function $q(x,m)$. Then, it transforms the estimated reward function into a policy. We use a neural network with 3 hidden layers to obtain the estimated reward function $\hat{q}_{\zeta}(x,m)$. And then we define the policy $\pi_\zeta$ as follows.
\begin{align}
    \pi_\zeta(x,m) = \frac{\exp(\beta \cdot \hat{q}_{\zeta}(x,m))}{\sum_{m' \in \calM} \exp(\beta \cdot \hat{q}_{\zeta}(x,m'))},
\end{align}
where we use $\beta = 10$. The regression-based approach can avoid variance issues. However, it suffers from severe bias due to the difficulty of accurately learning the expected rewards.
\\ \\
\textbf{IPS-PG.} The IPS policy gradient estimator is defined as
\begin{align}
    \label{ipspg}
    \nabla_\zeta \hat{V}_{\mathrm{IPS}} (\pi_\zeta;\calD)
    := \meanN \frac{\pi_\zeta(m_i|x_i)}{\pi_0(m_i|x_i)} r_i \nabla_\zeta \log{\pi_\zeta(m_i|x_i)}
    = \meanN w(x_i,m_i) r_i s_\zeta(x_i,m_i),
\end{align}
where $w(x,m) = \pi_\zeta(m|x) / \pi_0(m|x)$ is the (vanilla) importance weight and $s_\zeta(x,m) = \nabla_\zeta \log{\pi_\zeta(m|x)}$ is the policy score function. IPS-PG is unbiased under the full support condition ($\pi_0(m|x) > 0$ for all $m \in \calM, x \in \calX$). However, IPS-PG suffers from severe variance in the presence of many actions. Moreover, in large action spaces, the full support condition is likely to be violated, leading to substantial bias~\citep{sachdeva2020off}.
\\ \\
\textbf{DR-PG.} The DR policy gradient estimator is defined as
\begin{align}
    \label{drpg}
    \nabla_\zeta \hat{V}_{\mathrm{DR}} (\pi_\zeta;\calD, \hat{q})
    := \meanN w(x_i,m_i) \left( r_i - \hat{q}(x_i,m_i) \right) s_\zeta(x_i,m_i) + \mE_{\pi_\zeta(m|x_i)} \left[ \hat{q}(x_i,m) s_\zeta(x_i,m) \right]
\end{align}
DR-PG is unbiased under the full support condition and DR-PG can reduce variance compared to IPS-PG. However, it still struggles with high variance in large action spaces because it uses the vanilla importance weight like IPS-PG.
\\ \\
\textbf{OPCB-PG.} The OPCB policy gradient estimator is defined as
\begin{align}
    \label{opcbpg}
    \nabla_\zeta \hat{V}_{\mathrm{OPCB}} (\pi_\zeta;\calD, \phi)
    := \meanN \frac{\piizetaphi}{\piizeroKth} \left( r_i - \hat{f}(x_i,m_i) \right) \nabla_\zeta \log{\pi_\zeta(\phi(m_i)|x_i)} + \mE_{\pi_\zeta(m|x_i)} \left[ \hat{f}(x_i,m) \nabla_\zeta \log{\pi_\zeta(m|x_i)} \right]
\end{align}
OPCB-PG is unbiased under the full support condition and the conditional pairwise correctness. OPCB-PG can reduce variance by using the marginalized importance weight regarding the main actions $\phi(m)$.\\

\subsection{Real-World Experiments}
In this section, we describe the detailed real-world experiment settings in the main text and conduct an additional real-world experiment with the PenDigits dataset \cite{alpaydin98pen}.
\\ \\
\textbf{Detailed Setup.} \quad We describe the real-world experiment settings on KuaiRec~\citep{gao2022kuairec} in detail. KuaiRec has user features and user-item interactions which are almost fully observed with nearly 100\% density for the subset of its users and items. We consider user features as contexts $x$, where we reduce feature  dimensions based on PCA implemented in scikit-learn \cite{pedregosa2011scikit}. Then, we consider the user-item interactions as the base reward function $\Tilde{q}(x,a)$ and define the expected reward function as follows.
\begin{align}
    q(x, m) = \left( \prod_{l=1}^L \tilde{q}(x, m_l) \right)^{(\sum_{l=1}^L \mathbb{I} \{ m_l = a_l \})^{-1}} \notag
\end{align}
where $\tilde{q}(x, m_l) = \tilde{q}(x, a_l)$ when $m_l = a_l$ and $\tilde{q}(x, m_l) = 1$ when $m_l = \emptyset$. We then sample the reward $r$ from a normal distribution whose mean is $q(x,m)$ and standard deviation $\sigma$ is 3.0. Note that we randomly select 10 items from the user-item interactions, which means that we set $|\calA|=10$ in this experiment.

We define the logging policy and the target policy with an estimated reward function $\hat{q}(x,m)$ as follows.
\begin{align}
    \pi_0(m\,|\,x) 
    &= \frac{\exp(\beta \cdot \hat{q}(x,m))}{\sum_{m' \in \calM} \exp(\beta \cdot \hat{q}(x,m'))}, \label{pi_0_real}\\
    \pi(m\,|\,x) 
    &= (1-\epsilon) \ind \{m = \underset{m' \in \calM} {\operatorname{argmax}} \ q(x,m')\} + \frac{\epsilon}{|\calM|},
\end{align}
where we use $\beta = -0.3$ and $\epsilon = 0.1$. We obtain $\hat{q}(x,m)$ by ridge regression. Note that we separate the dataset for obtaining the estimated reward function and conducting OPE experiments.

To summarize, we first define the expected reward $q(x,m)$. We then obtain a combinatorial action $m$ sampled by $\pi_0$ in Eq. \eqref{pi_0_real}. After that, we sample a reward $r$ from a normal distribution with mean $q(x, m)$ and standard deviation $\sigma = 3.0$. Repeating above procedure $n$ times generates the logged data $\calD = \{x_i, m_i, r_i\}_{i = 1}^n$.
\\\\
\textbf{Additional Experiment Setup for PenDigits.} \quad To conduct an additional experiment on PenDigits, we transform it to contextual combinatorial bandit feedback data. PenDigits contains the bitmap features of the image as the context information and each image is attached with a true label of the digits. To simulate the CCB problem on this dataset, we define the expected reward function as follows.
\begin{align}
    q(x,m) = \Bigg\{
\begin{array}{l} 
    \frac{1}{\sum_{l=1}^L \mathbb{I} \{ m_l = a_l \}} \left( 1 - \eta_{x,m} \right)  \quad \text{if $a \in m$ and $a$ is the \textit{positive} label}\\
    \frac{1}{\sum_{l=1}^L \mathbb{I} \{ m_l = a_l \}} \eta_{x,m} \qquad\qquad \text{otherwise}
    \end{array}, \notag
\end{align}
where $\eta_{x,m}$ denotes a noise parameter sampled from a uniform distribution with range $[0,0.5]$.
The expected reward function $q(x,m)$ becomes larger when a positive label action is selected without including other actions. We then sample the reward $r$ from a normal distribution whose mean is $q(x,m)$ and standard deviation $\sigma$ is 3.0. We define the logging policy by applying the softmax function to an estimated reward function $\hat{q}(x,m)$ like Eq. \eqref{eq:logging} where $\beta = -0.3$ and we obtain $\hat{q}(x,m)$ by ridge regression. We also define a target policy $\pi(m|x)$ like Eq. \eqref{eq:logging} with $\epsilon = 0.1$.

Similarly to the real-world experiment in the main text, we vary the estimation errors of the bias term ($\sigma=1.0, 3.0, 5.0$) to sample $\delta_{\phi}$ in Eq.~\eqref{eq:bias_error}. A larger value of $\sigma$ indicates a lower accuracy in estimating MSE. 
\\\\
\textbf{Results.} \quad 
In Figure \ref{fig:pen}, we show comparisons of the estimators' performance on PenDigits with varying logged data sizes ($n$). The number of actions $|\calA|$ is 10. Note that the shaded regions in the MSE plots represent the 95\% confidence intervals estimated with bootstrap.
\\\\
\textbf{How does OPCB perform with varying logged data sizes on PenDigits?} \quad 
Figure \ref{fig:pen} compares the MSE, squared bias, and variance of the estimators with varying logged data sizes $n \in \{ 500, 1000, 2000, 4000, 8000 \}$ on PenDigits. We observe that OPCB outperforms the baseline methods across various logged data sizes by effectively reducing the bias and variance. Remarkably, OPCB reduces MSE even with the largest noise on its bias estimation similarly to Figure~\ref{fig:kuairec}. These results suggest that OPCB is indeed valuable in practice.

\subsection{Slate OPE Experiments} \label{app:slate_experiment}
As we explain in Appendix \ref{app:slate_ope}, OPCB is applicable to the slate settings. We conduct some experiments in order to verify the effectiveness of OPCB in the slate settings .\\\\
\textbf{Formulation for Slate OPE.} \quad We formulate OPE under a slate contextual bandit setting. We use $\bx\in \calX\subseteq{\mathbb{R}^{d_x}}$ to denote a $d_x$-dimensional context vector drawn i.i.d. from an unknown distribution $p(\bx)$, and $\bs \in \calS \coloneqq \prod_{l=1}^{L} \calA_l$ to denote a slate action. A slate action consists of several sub-actions, i.e., $\bs = (a_1, \cdots, a_L)$. Each sub-action $a_l$ is chosen from $\calA_l$. Note that we define $\calA_l$ as $\calA_l \coloneqq (\emptyset, a_{l,1}, \cdots , a_{l,|\calA_l|})$. In this section, we consider a \textit{factorizable} policy $\pi : \calX \to \Delta(\calS)$ for simplicity. Given a context vector, it chooses a sub-action at each position independently, where $\pi(\bs|\bx) = \prod_{l=1}^L \pi(a_l|\bx)$ is the probability of choosing a specific slate action $\bs$.
\\\\
\textbf{Setup for Slate Experiments.} \quad We conduct slate OPE experiments similarly to synthetic experiments in the main text. We define the expected reward as follows.
\begin{align}
    q(\bx,\bs) = \frac{1}{|\bs|} \sum_{l=1}^{L} q_l(\bx,a_l),
\end{align}
where we use $|\bs|$ to denote the number of actions in a slate. Since there may be slots where nothing is chosen, the number of actions in a slate may not necessarily correspond to the slate size $L$. We then sample the reward $r$ from a normal distribution whose mean is $q(\bx,\bs)$ and standard deviation $\sigma$ is 3.0.
We define the logging policy and the target policy as follows.
\begin{align}
    \pi_0(\bs|\bx) 
    &= \prod_{l=1}^{L} \frac{\exp(\beta \cdot q_l(\bx,a_l))}{\sum_{a \in \calA_l} \exp(\beta \cdot q_l(\bx,a))},  \\
    \pi(\bs|\bx)  
    &= \prod_{l=1}^{L} \left( (1-\epsilon) \ind \{a_l = \underset{a \in \calA_l} {\operatorname{argmax}} \ q_l(\bx,a)\} + \frac{\epsilon}{|\calA_l|} \right),
\end{align}
where we use $\epsilon = 0.2$. We set $\beta=-0.05$ in Figure \ref{fig:sample_size_slate} and $\beta=-0.1$ in Figure \ref{fig:slate_size}.
\\\\
\textbf{Results.} \quad In Figure \ref{fig:sample_size_slate} and \ref{fig:slate_size}, we show comparisons of the estimators' performance on synthetic slate experiments in order to demonstrate that OPCB works better than the baselines in slate settings. We compare OPCB with DM, IPS, DR, PI, and LIPS, where PI and LIPS are defined rigorously in Appendix \ref{app:slate_ope}.  
We randomly select a slate abstraction function for LIPS. We vary logged data sizes ($n$) and slate sizes ($L$). We set $n = 1000$ and $|\calA_l|=3$ as default. Note that the shaded regions in the MSE plots represent the 95\% confidence intervals estimated with bootstrap.
\\\\
\textbf{How does OPCB perform with varying logged data sizes and slate sizes?} \quad 
In Figure \ref{fig:sample_size_slate}, we vary logged data sizes ($n$) from 500 to 8000. We observe that OPCB improves the MSE compared to the existing methods. Specifically, OPCB reduces variance against IPS and DR, and it has a lower bias than DM. On the other hand, PI achieves a significant variance reduction, yet it suffers from substantial bias because of violating the linearity assumption, which requires that the reward function is linearly decomposable. Similarly to PI, LIPS struggles with high bias since its slate abstraction does not retain sufficient information to characterize the expected reward function. Against PI and LIPS, OPCB provides a substantial reduction in bias and variance as it does not introduce some strict assumptions. These results support the advantage of OPCB in the slate settings. 
\\\\
Figure \ref{fig:slate_size} shows how slate sizes affect the performance of the estimators with varying slate sizes ($L$) from 4 to 7. The results show that all estimators become worse in a larger slate size as expected, and OPCB provides substantial improvements across various slate sizes. Particularly, IPS and DR degrade due to severe variance, and DM, PI, and LIPS suffer from high bias. OPCB achieves effective estimations by reducing bias and variance, resulting from using the marginalized importance weight regarding the main actions and the two-stage regression. The results suggest that OPCB can be more useful in practical and challenging situations.
\\\\

\begin{figure}[ht]
    \includegraphics[scale=0.25]{fig/legend.png} \vspace{1mm}
    \includegraphics[width=0.9\linewidth]{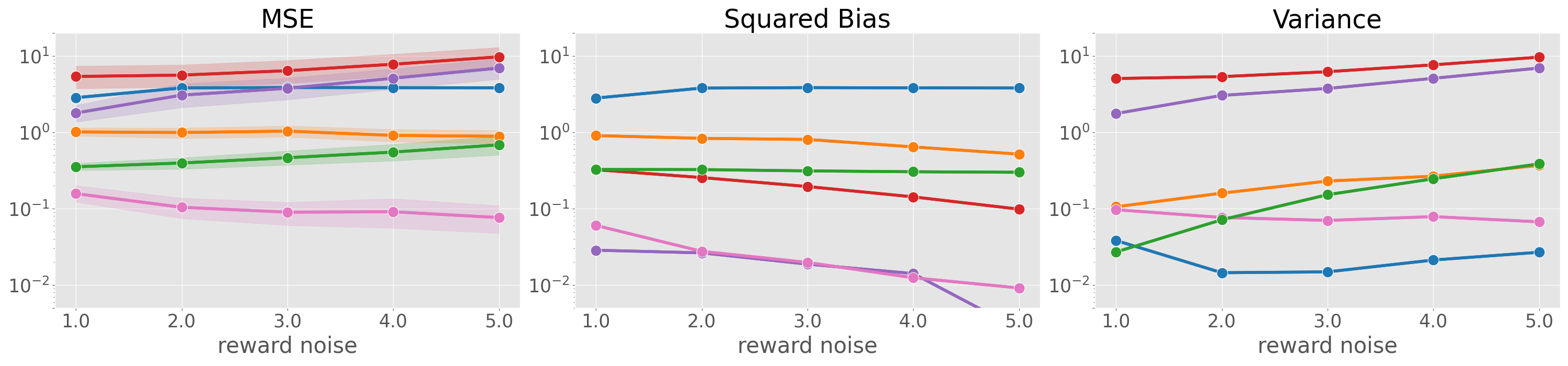}
    \centering \vspace{-3mm}
    \Description{}
    \caption{Comparisons of the estimator's MSE, squared bias, and variance with varying reward noise levels ($\sigma$).}
    \centering
    \label{fig:reward_std}
    \vspace{10mm}
\end{figure}

\begin{figure}[ht]
    \includegraphics[scale=0.25]{fig/legend.png} \vspace{1mm}
    \includegraphics[width=0.9 \linewidth]{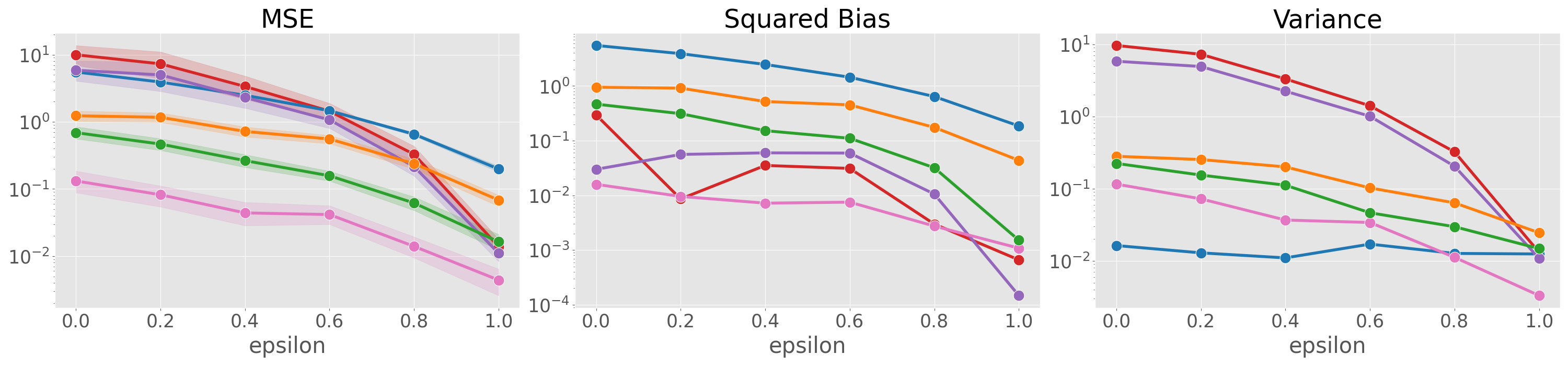}
    \centering
    \Description{}
    \caption{Comparisons of the estimator's MSE, squared bias, and variance with varying target policies ($\epsilon$).}
    \centering
    \label{fig:target_policy}
    \vspace{10mm}
\end{figure}

\begin{figure}[ht]
    \includegraphics[scale=0.25]{fig/legend.png} \vspace{1mm}
    \includegraphics[width=0.9 \linewidth]{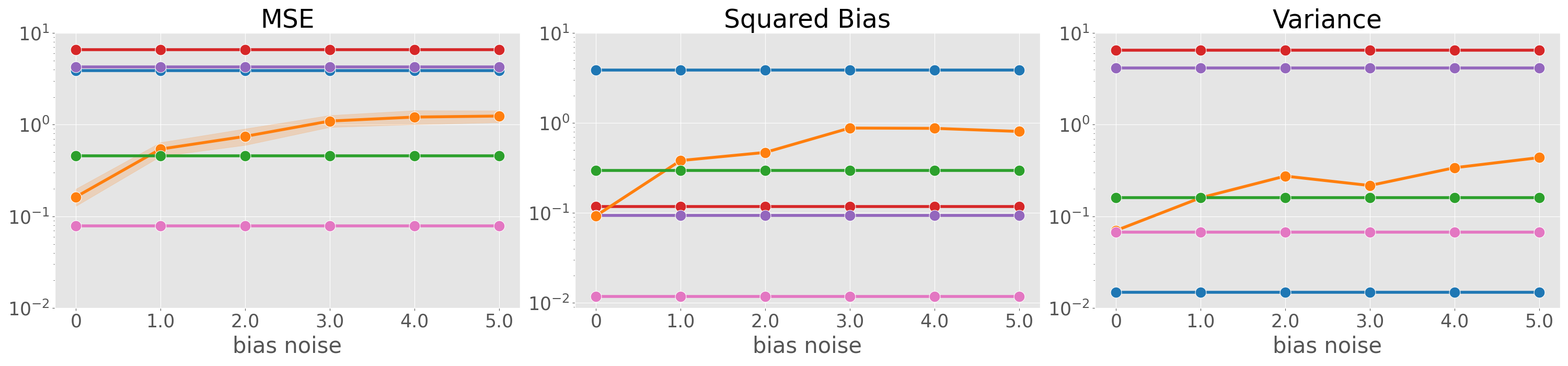}
    \centering
    \Description{}
    \caption{Comparisons of the estimator's MSE, squared bias, and variance with varying noise levels of bias estimation ($\sigma$).}
    \centering
    \label{fig:bias_noise}
    \vspace{10mm}
\end{figure}

\begin{figure}[ht]
    \includegraphics[scale=0.25]{fig/legend_real.png} \vspace{1mm}
    \includegraphics[width=0.9 \linewidth]{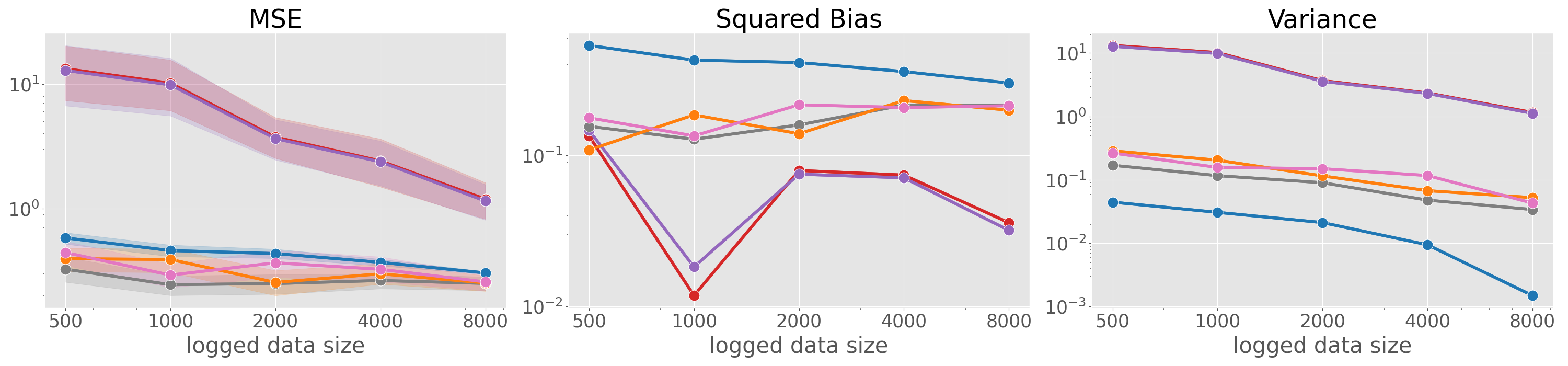}
    \centering
    \Description{}
    \caption{Comparisons of the estimator's MSE, squared bias, and variance with varying logged data sizes ($n$) on PenDigits.}
    \centering
    \label{fig:pen}
    \vspace{10mm}
\end{figure}

\begin{figure}[ht]
    \includegraphics[scale=0.25]{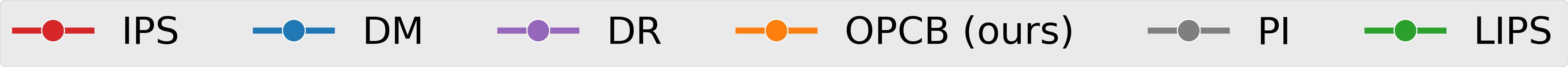} \vspace{1mm}
    \includegraphics[width=0.9 \linewidth]{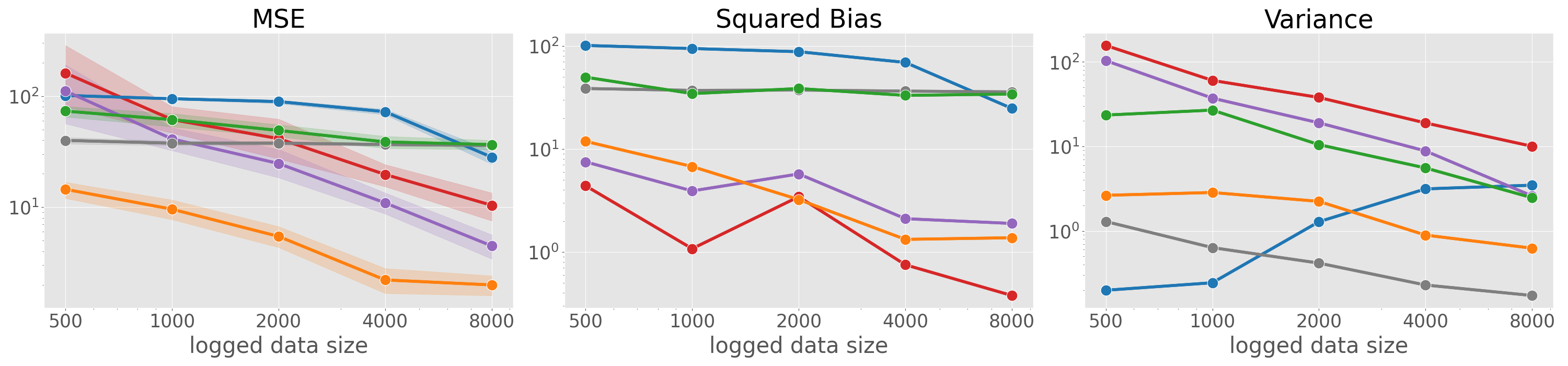}
    \centering
    \Description{}
    \caption{Comparisons of the estimator's MSE, squared bias, and variance with varying logged data sizes ($n$).}
    \centering
    \label{fig:sample_size_slate}
    \vspace{10mm}
\end{figure}

\begin{figure}[ht]
    \includegraphics[scale=0.25]{fig/legend_slate.png} \vspace{1mm}
    \includegraphics[width=0.9 \linewidth]{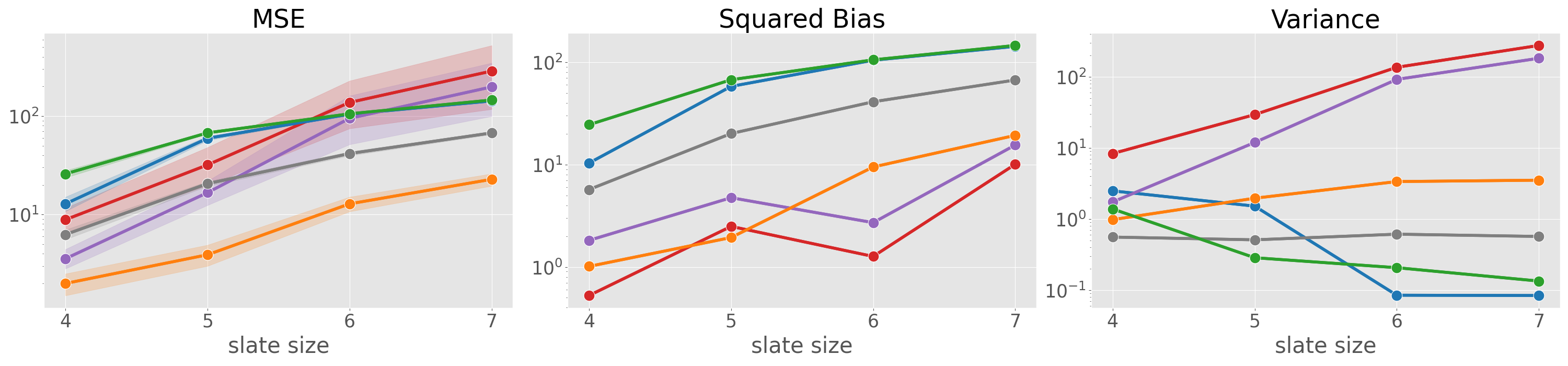}
    \centering
    \Description{}
    \caption{Comparisons of the estimator's MSE, squared bias, and variance with varying slate sizes ($L$).}
    \centering
    \label{fig:slate_size}
    \vspace{10mm}
\end{figure}

\end{document}